\newcommand{\TV}{{\textrm{TV}}}
\newcommand{\HS}{{\textrm{HS}}}
\newcommand{\op}{{\textrm{op}}}
\newcommand{\beq}{\begin{equation}}
\newcommand{\eeq}{\end{equation}}
\newcommand{\beqa}{\begin{eqnarray}}
\newcommand{\eeqa}{\end{eqnarray}}
\newcommand{\beqan}{\begin{eqnarray*}}
\newcommand{\eeqan}{\end{eqnarray*}}
\newcommand{\vnorm}[1]{\left\|#1\right\|}
\newcommand{\tmin}{\text{min}}
\newcommand{\dd}{\mathrm{d}}
\newcommand{\E}{\mathds{E} }
\newcommand{\prob}{\mathbb{P}}
\newcommand{\Aset}{\mathbb{A}}
\newcommand{\Iset}{\mathbb{I}}
\newcommand{\Jset}{\mathbb{J}}
\newcommand{\Nset}{\mathbb{N}}
\newcommand{\Rset}{\mathbb{R}}
\newcommand{\Tset}{\mathbb{T}}
\newcommand{\Uset}{\mathbb{U}}
\newcommand{\Xset}{\mathbb{X}}
\newcommand{\Yset}{\mathbb{Y}}
\newcommand{\Zset}{\mathbb{Z}}
\newcommand{\Bcal}{{\cal B}}
\newcommand{\Dcal}{{\cal D}}
\newcommand{\Fcal}{{\cal F}}
\newcommand{\Hcal}{{\cal H}}
\newcommand{\Ical}{{\cal I}}
\newcommand{\Lcal}{{\cal L}}
\newcommand{\argmin}{\mathop{\rm argmin}}
\newcommand{\bone}{\mathds{1}}
\newcommand{\ve}{\varepsilon}
\newcounter{l1}
\newcounter{l2}
\newcounter{l3}
\newcommand{\bdotlist}{\begin{list}{$\bullet$}{}}
\newcommand{\bboxlist}{\begin{list}{$\Box$}{}}
\newcommand{\bbboxlist}{\begin{list}{\raisebox{.005in}{{\tiny
$\blacksquare$ \ \ }}}{}}
\newcommand{\bdashlist}{\begin{list}{$-$}{} }
\newcommand{\blist}{\begin{list}{}{} }
\newcommand{\barablist}{\begin{list}{\arabic{l1}}{\usecounter{l1}}}
\newcommand{\balphlist}{\begin{list}{(\alph{l2})}{\usecounter{l2}}}
\newcommand{\bAlphlist}{\begin{list}{\Alph{l2}.}{\usecounter{l2}}}
\newcommand{\bdiamlist}{\begin{list}{$\diamond$}{}}
\newcommand{\bromalist}{\begin{list}{(\roman{l3})}{\usecounter{l3}}}
\newtheorem{assumption}{Assumption}
\renewcommand{\hat}{\widehat}
\renewcommand{\tilde}{\widetilde}
\newcommand{\obar}{\overline}
\newcommand{\id}{{\textrm{Id}}}
\newcommand{\emd}{{I_\kappa}}
\newcommand{\kov}{{C_{XX}}}
\newcommand{\CMEO}{U_\star}
\newcommand{\Tr}{{\textrm{Tr}}}
\newcommand{\mix}{\text{mix}}
\newcommand{\bcmp}{b_\text{cmp}}
\newcommand{\Bsrc}{B_{\text{src}}}
\newcommand{\Lsq}{{L_2\left(\rho_X\right)}}
\newcommand{\rev}[1]{{\textcolor{black}{{#1}}}}
\crefname{hypothesis}{Hypothesis}{Hypotheses}
\crefname{fact}{Fact}{Facts}
\title{Nonparametric Sparse Online Learning of the Koopman Operator}
\author{Boya Hou\thanks{Coordinated Science Laboratory, University of Illinois Urbana-Champaign  (\email{boyahou2@illinois.edu}).}
\and Sina Sanjari\thanks{Department of Mathematics and Computer Science, Royal Military College of Canada 
  (\email{sanjari@rmc.ca}).}
\and Nathan Dahlin\thanks{Department of Electrical and Computer Engineering, University at Albany, SUNY
(\email{ndahlin@albany.edu}).}
\and Alec Koppel\thanks{Applied Physics Labs, Johns Hopkins University
(\email{alec.koppel@jhuapl.edu}).}
\and Subhonmesh Bose\thanks{Department of Electrical and Computer Engineering, Coordinated Science Laboratory, University of Illinois Urbana-Champaign  (\email{boses@illinois.edu}).}  }
\begin{document}

\maketitle

\begin{abstract}
The Koopman operator provides a powerful framework for representing the dynamics of general nonlinear dynamical systems. However, existing data-driven approaches to learning the Koopman operator rely on batch data.
In this work, we present a \emph{sparse online} learning algorithm that learns the Koopman operator \emph{iteratively} via stochastic approximation, with explicit control over model complexity and provable convergence guarantees. Specifically, we study the Koopman operator via its action on the reproducing kernel Hilbert space (RKHS), and address the mis-specified scenario where the dynamics may escape the chosen RKHS. In the mis-specified settings, we relate the Koopman operator to the conditional mean embeddings (CME) operator. 
We further establish both asymptotic and finite-time convergence guarantees for our learning algorithm in mis-specified settings, with trajectory-based sampling where the data arrive sequentially over time.
Numerical experiments demonstrate the algorithm's capability to learn unknown nonlinear dynamics.
\end{abstract}

\begin{keywords}
Nonlinear dynamical system,
Koopman operator,
Reproducing kernel Hilbert space,
Conditional mean embedding,
Stochastic approximation%
\end{keywords}


\section{Introduction}
\label{sec.intro}
Poincar{\'e}'s geometric state-space approach in \cite{poincare1899methodes} analyzes dynamical systems by studying the evolution of system states over time. 
Koopman operator theory, with its origins in \cite{koopman1932dynamical}, provides an alternate way to characterize nonlinear systems through the lens of linear operators by studying how the system evolves functions of states through time. 
For a discrete-time deterministic dynamical system on finite-dimensional state space $\Xset \subseteq \Rset^n$ described by $x_{t+1} = T(x_t)$ for $t \in \Nset$ where $T:\Xset \to \Xset$, the Koopman operator is defined via composition on function $g: \Xset \to \Rset$ as
\begin{align}
    K g(x_t) = \left(g \circ T \right) (x_t) = g \left( T \left(x_t\right)\right) = g \left(x_{t+1}\right),\quad t  \in \Nset.
\end{align}
For a discrete-time time-homogeneous Markov process described by the transition kernel $P$, the (stochastic) Koopman operator \cite{lasota2013chaos,mezic2005spectral} generalizes the above to
\begin{align}
 \left(K g\right)(x_t)=\int P(\dd x_{t+1} | x_t) g(x_{t+1}) \dd x_{t+1},\quad t  \in \Nset. 
\label{eq.def.K.Markov}
\end{align}

The Koopman operator lifts the nonlinear dynamical system description over a finite-dimensional state space to a linear but infinite-dimensional operator description over a space of functions. As a linear operator, its spectra contain valuable information for understanding system dynamics, such as the state space geometry  \cite{mezic2005spectral,mezic2020spectrum,mezic2021koopman}.
Numerical methods such as the dynamic mode decomposition (DMD) \cite{schmid2010dynamic,rowley2009spectral}, and its variants, e.g., \cite{jovanovic2014sparsity,williams2015data,klus2020eigendecompositions}, can approximate the Koopman operator and its spectra from empirical data. As a result, this operator has come to define the gateway for data-driven analysis of nonlinear dynamical systems with unknown models, e.g., see  \cite{budivsic2012applied,brunton2016discovering,otto2021koopman,hou2024propagating,hou2024TSK,bold2024kernel,zagabe2025uniform,rosenfeld2024occupation}. 
In this paper, we aim to learn the Koopman operator \emph{iteratively} with streaming data collected from trajectories.

The Koopman operator is studied through its interaction with a function space, and the choice of that space dictates how well the system dynamics encoded in the operator can be analyzed.
Of the existing \emph{parametric} techniques that learn the Koopman operator, extended dynamic mode decomposition (EDMD) \cite{williams2015data} is perhaps the most widely used, where the function space is the finite-dimensional span of a pre-selected basis of functions.
If this subspace is not rich enough to capture the system dynamics, the learned operator fails to capture crucial properties of the dynamical system. Given the difficulty of selecting a set of basis functions, we study a nonparametric approximation method that aims to learn the Koopman operator through its interaction with a reproducing kernel Hilbert space (RKHS), along the lines of \cite{williams2014kernel,kawahara2016dynamic,klus2020eigendecompositions,hou2023sparse,kostic2022learning,bevanda2023koopman,kohne2025error}. Such a nonparametric computational framework automatically produces a set of basis functions from data, thus avoiding subscriptions to specific parametric choices a priori. While it is natural to consider the setting in which the function space is closed under the action of the system dynamics, it is well known that such a closedness assumption is restrictive and challenging to verify  \cite{mezic2020spectrum,colbrook2024limits,kohne2025error}. In our analysis, we allow for this ``mis-specification'' in the operator learning setting, where the Koopman operator $K$ maps a function in an RKHS to some intermediate space between the RKHS and square-integrable functions (see Section \ref{sec.RKHS}), thus relaxing the closedness assumption. Specifically, we characterize how fast the Koopman operator can be approximated in an online fashion in this mis-specified setting with trajectory-based sampling, where the data becomes available sequentially.

First presented in \cite{song2009hilbert},  the conditional mean embedding (CME) operator in its action over an RKHS encodes how the distribution over one random variable relates to another (see Section \ref{sec.embedding}). If the random variables correspond to successive states of a discrete-time Markov process, CMEs naturally encapsulate the transition dynamics without resorting to explicit modeling of system dynamics such as those via difference equations. 
If the RKHS is closed under the action of the Koopman operator, then the CME and the Koopman operators define adjoints of each other, and have been studied in \cite{klus2020eigendecompositions,mollenhauer2020nonparametric}. Without closedness, a rigorous characterization of the relationship between these operators is missing in the literature--a gap we bridge in Section \ref{sec.Koopman}, utilizing the framework of \cite{park2020measure} that views CME as a vector-valued Bochner-integrable random variable and subsequently leveraged in \cite{li2022optimal,kostic2023sharp} to derive learning rates.

Learning rates for Koopman/CME operators that interact with RKHS have been extensively studied, e.g., see \cite{klus2015towards, klus2020eigendecompositions, klus2020kernel,hou2023sparse,li2022optimal,kostic2023sharp,kohne2025error}. Such sample complexity analysis relies on writing the Koopman/CME operators in terms of the so-called covariance operators, which are expectations of certain rank-one tensors (see Section \ref{sec.embedding}) and then replacing said expectations with empirical means. As a result, these prior works approach the operator learning problem via \emph{sample average approximation} that requires one to collect and process independent and identically distributed samples from interactions with a dynamical system or a Markov process in a batched fashion. In many settings, one can only access a \emph{single trajectory} of a sequence of data points from a Markov process. Neither are these samples independent, nor identically distributed. In this paper, we develop a \emph{stochastic approximation}-based algorithm that processes \emph{streaming data} from a Markov process to iteratively update an estimate of the Koopman/CME operators. Our algorithm in Section \ref{sec.algorithm} essentially performs a stochastic operator gradient descent (SOGD) with consecutive samples from a single trajectory observed from the process. We employ SOGD with Markovian data to solve a regression problem in linear operators to maintain said estimate, leveraging the framework in \cite{grunewalder2012conditional,li2022optimal}.

The description of the learned nonparametric Koopman/CME estimate grows with the length of the trajectory. To combat the growth in representation, we develop an \emph{online sparsification} to SOGD along the lines of kernel matching pursuit in \cite{vincent2002kernel,koppel2017parsimonious}, but for operator learning. 
Our theoretical analysis in Section  \ref{sec.5.analysis} generalizes the rich literature on stochastic approximation in Euclidean spaces such as in \cite{borkar2008stochastic,srikant2019finite,chen2022finite} to sparse SOGD in the space of Hilbert-Schmidt operators over an RKHS to provide both asymptotic and finite-time convergence guarantees for online estimation of Koopman/CME operators. Such an analysis stands in stark contrast to \cite{zhang2019online,giannakis2023learning}, which have proposed online versions of DMD and kernel-DMD without developing sample complexity guarantees. Our analysis handles the dependencies across samples generated from a trajectory, as well as the compounding bias due to sparsification to provide last-iterate approximation guarantees on the learned operator. In Section \ref{sec.example}, we visualize the dominant eigenfunction of the learned Koopman operator for a nonlinear dynamical system to illustrate the efficacy of our algorithm to settings where our assumptions for the theoretical analysis may not apply. Due to space limitations, some proofs and additional experimental results are omitted. A complete version, including these details, can be found in \cite{hou2024nonparametric}.

\subsection{Our Contributions}
\begin{itemize}[leftmargin=*]
    \item For a discrete-time Markov process, we relate the Koopman operator acting on an RKHS with a CME operator for the transition kernel in the mis-specified setting, where the RKHS may not be closed under push-forward of the Markov kernel. 
    \item We propose an \emph{online} learning algorithm based on sparse SOGD to estimate the Koopman/CME operator with data collected from a single trajectory from the Markov process. 
    \item We present both almost-sure asymptotic and finite-time convergence guarantees in the mean-square sense for the Koopman/CME estimate from our sparse SOGD algorithm.
\end{itemize}

\section{RKHS Preliminaries}
\label{sec.RKHS}

We start by describing a set of square-integrable functions. Let $(\Xset, \Bcal_X, \rho_X)$ be a probability space, where $\Xset$ is a subset of an Euclidean space,  $\Bcal_X$ is the Borel $\sigma$-algebra on $\Xset$, and $\rho_X$ a probability measure on $\Xset$. 
Denote by $\Lcal_2(\rho_X)$, the vector space of real-valued square-integrable functions with respect to $\rho_X$ with the norm $\vnorm{\cdot}_{\rho}$ that satisfies $\vnorm{f}_{\rho}^2:= \int_{\Xset}\left|f\left(x\right)\right|^2 \dd \rho_X $ for any $f\in \Lcal_2(\rho_X)$. For any $f \in \Lcal_2(\rho_X)$, its $\rho_X$-equivalent class $[f]$ comprises all functions $g \in \Lcal_2(\rho_X)$ that $\rho_X\left( \left\{f \neq g \right\}  \right) = 0$.
Let $\Lsq:= \Lcal_2(\rho_X)_{/ \sim} $ be the corresponding quotient space equipped with the norm $\vnorm{\left[f\right]}_{\Lsq}  = \vnorm{f}_{\rho}$ for any $f\in \Lcal_2(\rho_X)$.

Next, we describe a real-valued RKHS; see \cite{berlinet2011reproducing} for a more comprehensive review. 
A Hilbert space of real-valued functions on $\Xset$ with its inner product $\left({\Hcal}, \left \langle \cdot, \cdot \right \rangle_{{\Hcal}} \right)$ is an RKHS if the evaluation functional defined by $\delta_x f = f(x)$ is bounded (continuous) for all $x \in \Xset, f\in {\Hcal}$.
The Riesz representation theorem implies that for all $f \in {\Hcal}$, there exists an element $\phi(x) \in {\Hcal}$ such that
$ \delta_x f = \left \langle f, \phi(x)  \right \rangle_{{\Hcal}}$.
Define $\kappa_{X}: \Xset \times \Xset \rightarrow \mathbb{R}$ by $ \kappa_{X}(x,x') := \left \langle \phi \left(x\right), \phi \left(x'\right)\right \rangle$. Then, $\kappa_{X}$ is a positive definite kernel that satisfies 
$\kappa_X(\cdot,x) \in {\Hcal}$, and
$ \left \langle f, \kappa_X(\cdot,x) \right \rangle_{{\Hcal}} = f(x)$, $\forall x \in \Xset$,  $\forall \rev{f} \in {\Hcal}$, where
 $\kappa_X$ is called a reproducing kernel and $\phi(x):= \kappa_X(\cdot,x)$ is the canonical feature map.
We make the following assumption about $\kappa_X$ and relate it to $L_2(\rho_X)$.
\begin{assumption} $\kappa_X$ is $\rho_X$-measurable and  $\sup\limits_{x\in \Xset} 
{\kappa_X(x,x)} \leq 
{B_\infty}$ for some $0<{B_\infty} < \infty$.
\label{assumption.polk}
\end{assumption}

From \citep[Lemmas 2.2 and 2.3]{steinwart2012mercer}), the uniform boundedness of the  kernel guarantees that ${\Hcal}$ admits a compact embedding within $\Lsq$, We denote its image by $\left[{\Hcal}\right]:=\left\{ \left[f\right]: f \in {\Hcal}\right\}$. Define the integral operator ${T}: \Lsq \to \Lsq$ as
\begin{align}
    {T} \left[f\right]  := \left[\int_{\Xset} \kappa_X(\cdot,x) g(x) \dd \rho_X (x)\right], \quad \forall g\in \left[f\right],
\label{eq.L_kappa}
\end{align}
for any $\left[f\right]\in \Lsq$. Under Assumption \ref{assumption.polk}, ${T}$ is continuous, self-adjoint, positive trace-class, and compact. The spectral theorem for self-adjoint compact operators \cite[Theorem V.2.10]{kato2013perturbation} indicates that there exists a countable index set $\Iset$, a non-increasing, summable sequence $(\sigma_i)_{i \in \Iset} \in (0,\infty)$ converging to $0$ and a family $(e_i)_{i \in \Iset} \subset {\Hcal}$ such that $\left(\left[e_i\right] \right)_{i \in \Iset}\subset \Lsq$ is an orthonormal system (ONS) of $\Lsq$, and ${T}$ admits the decomposition,
\begin{align}
    {T} [f] = \sum_{i \in \Iset} \sigma_i \left \langle [f] , \left[e_i\right] \right \rangle_{\Lsq} \left[e_i\right], \quad
    [f] \in \Lsq.
    \label{eq.L_kappa_spectral}
\end{align}
Thus, $\left(\sigma_i \right)_{i \in \Iset}$ defines the family of non-zero eigenvalues of ${T}$ and $\left(\left[e_i\right] \right)_{i \in \Iset}$ are the corresponding eigenvectors of ${T}$. It is not difficult to show that the embedding of ${\Hcal}$ within $\Lsq$ is then given by,
\begin{align}
 [{\Hcal}] = \left\{ \sum_{i\in\Iset} b_i [e_i] : \sum_{i \in \Iset} \frac{b_i^2}{\sigma_i} < \infty \right\}.   
\end{align}

Using these eigenpairs, following \cite{steinwart2012mercer}, define the inner product space for $\beta \in (0,1)$,
\begin{align}
\left[\Hcal\right]^\beta
:=  
\left\{ \sum_{i \in \Iset} b_i  \left[e_i\right]: \sum_{i \in \Iset} \frac{b_i^2}{\sigma_i^\beta} < \infty \right\}, 
\; 
\left \langle \sum_{i \in \Iset} b_i \left[e_i\right],
\sum_{i \in \Iset} b'_i \left[e_i\right]
\right \rangle_{\left[\Hcal\right]^\beta} 
:= \sum_{i \in \Iset} \sigma_i^{-\beta} {b_i b'_i}
\label{def.power}
\end{align}

\begin{align}
    \left[\Hcal\right]^1 = [{\Hcal}] \subseteq \left[\Hcal\right]^\beta \subseteq \left[\Hcal\right]^{\beta'} \subseteq \left[\Hcal\right]^0 \subseteq \Lsq, \quad 0<\beta' <\beta <1.
\end{align}

Generally, $\Lsq$ is larger than $[\Hcal]^0$. In fact, $\Lsq = \ker T \oplus \left[\Hcal\right]^0$, where $\oplus$ stands for direct sum.
In this paper, ${\Hcal}$, $\Lsq$, and the intermediate spaces $\left[\Hcal\right]^\beta$'s, play important roles in defining the Koopman operator.

Consider another separable real-valued Hilbert space $\Hcal_Y$ on a subset $\Yset$ of Euclidean spaces. A bounded linear operator $A$ from ${\Hcal}$ to $\Hcal_Y$ is Hilbert-Schmidt (HS), if 
the Hilbert-Schmidt norm of $A$, denoted by $\vnorm{A}_{\HS}$ is finite, where $\vnorm{A}_{\HS}^2 = \sum_{i \in \Nset} \vnorm{A e_i}_{\Hcal_Y}^2$ and $\{e_i\}_{i \in \Nset}$ defines an orthonormal basis (ONB) of ${\Hcal}$. The summation can be shown to be independent of the choice of the ONB. 
We denote $\HS({\Hcal},\Hcal_Y)$ as the Hilbert space of HS operators from ${\Hcal}$ to $\Hcal_Y$, endowed with the norm $\vnorm{\cdot}_\HS$. See the extended version of this paper \cite[Appendix A ]{hou2024nonparametric} for an introduction to  HS operators. 
For $f_1 \in {\Hcal}$ and $f_2 \in \Hcal_Y$, the tensor product $f_1 \otimes f_2$ is defined as a rank-one operator from $\Hcal_Y$ to ${\Hcal}$ via 
$\left(f_1 \otimes f_2 \right)g \mapsto \left \langle g,f_2\right\rangle_{\Hcal_Y}f_1$ for all $g \in \Hcal_Y$.
This rank-one operator is HS. Given a second operator $f'_1 \otimes f'_2$ for $f'_1 \in {\Hcal}$, $f'_2 \in \Hcal_Y$, their inner product is 
$\left \langle f_1 \otimes f_2,f'_1 \otimes f'_2\right\rangle_{\HS}   
= \left \langle f_1,f'_1\right\rangle_{{\Hcal}}
\left \langle f_2,f'_2\right\rangle_{\Hcal_Y}$.
Denote by ${\Hcal} \otimes \Hcal_Y$, the tensor product of two Hilbert spaces ${\Hcal}$ and $\Hcal_Y$ which is the completion of the algebraic tensor product with respect to the norm induced by the aforementioned inner product. 
Moreover, $\HS({\Hcal},\Hcal_Y)$ is isometrically isomorphic to $\Hcal_Y \otimes {\Hcal}$, denoted $\HS({\Hcal},\Hcal_Y) \cong \Hcal_Y \otimes {\Hcal}$, per \cite[Lemma C.1]{park2020measure}.

Denote $L_2(\rho_X,\Hcal_Y)$ as the \rev{equivalent classes} of $\Hcal_Y$-valued Bochner square-integrable functions $y: x \mapsto y(x)$ with values in $\Hcal_Y$ such that
$\vnorm{y}^2_{L_2(\rho_X,\Hcal_Y)}= \left(\int_\Xset \vnorm{y\left(x\right)}_{\Hcal_Y}^2 \dd \rho_X \right) < \infty$.
An $\Hcal_Y$-valued Hilbert space $\left(\Hcal_V, \left \langle \cdot, \cdot \right \rangle_{\Hcal_V} \right)$ of functions $v: \Xset \to \Hcal_Y $ is an $\Hcal_Y$-valued RKHS if for each $x \in \Xset$, $y \in \Hcal_Y$, the linear functional $v \mapsto \left \langle y, v \left(x\right) \right \rangle_{\Hcal_Y}$ is bounded. 
$\Hcal_V$ admits an operator-valued reproducing kernel of positive type $\Gamma:\Xset\times \Xset \to \Lcal(\Hcal_Y)$ which satisfies $ \left\langle v\left(x\right) , y \right\rangle_{\Hcal_Y} = \left\langle v , \Gamma(\cdot,x)y \right\rangle_{\Hcal_V}$ and $ \left\langle y,\Gamma(x,x') y'\right\rangle_{\Hcal_Y} = \left\langle \Gamma(\cdot,x)y,\Gamma(\cdot,x')y'  \right\rangle_{\Hcal_V}$ for all $x,x'\in \Xset$, $y,y' \in \Hcal_Y$ and $v \in \Hcal_V$. Here,  $\Lcal(\Hcal_Y)$ denotes the Banach space of bounded linear operators from $\Hcal_Y$ to itself. Restrict attention to the vector-valued RKHS associated with the operator-valued kernel $\kappa_X \left(x,x'\right) \id_{Y}$ where $\id_{Y}$ is the identity map on $\Hcal_Y$ and denote it by $\Hcal_V$. 

\begin{lemma} 
Let $\rho$ be a joint probability measure over $\Xset \times \Yset$ with marginals $\rho_X$ and $\rho_Y$, respectively. Suppose $\kappa_X$ and $\kappa_Y$ are positive definite, $\rho_X$ and $\rho_Y$ measurable kernels over $\Xset$ and $\Yset$, respectively, that admit a uniform upper bound. 
Then,
$ \Hcal_V \cong \Hcal_Y \otimes {\Hcal}$ and $ L_2(\rho_X,\Hcal_Y) \cong \Hcal_Y \otimes \Lsq$, where $\Hcal_V$ is the vector-valued RKHS induced by $\kappa_X \left(x,x'\right) \id_{Y}$. In addition, $\Hcal_V$ can be compactly embedded in $L_2(\rho_X,\Hcal_Y)$.
\label{lemma.isomorphism}
\end{lemma}

We do not formally prove this result, but make two remarks. The first isomorphism \rev{between $\Hcal_Y \otimes {\Hcal} \to \Hcal_V$, denoted as}
$\iota_\kappa : \Hcal_Y \otimes {\Hcal} \to \Hcal_V$,
relies on \cite[Lemma 15]{ciliberto2016consistent} and \cite[Theorem 1]{li2022optimal}. The second claim is a direct consequence of \cite[Theorem 12.6.1]{aubin2011applied}, where the isometric isomorphism $\iota:\Hcal_Y \otimes \Lsq \to L_2(\rho_X,\Hcal_Y)$ is   realized by 
\begin{align}
    \iota(f \otimes g) = \left( x \mapsto f g\left(x\right)\right), \quad f \in \Hcal_Y, \quad g \in \Lsq.
\label{def.iota}
\end{align}
The embedding of $\Hcal_V$ within $L_2(\rho_X, \Hcal_Y)$ is compact if and only if $\Hcal$ can be compactly embedded within $L_2(\rho)$, which in turn is true, given the uniform boundedness of $\kappa_X$.
The authors of \cite{li2022optimal} establish that for each $v \in \Hcal_V$, there exists a unique $V\in \Hcal_Y \otimes {\Hcal}$ given by $V= \iota_\kappa^{-1} (v)$ such that $\vnorm{v}_{\Hcal_V}=\vnorm{V}_\HS$, and 
$v(x) = V \phi_X(x) \in \Hcal_Y$, $\forall x \in \Xset$.

Given the well-established isomorphism between the space of HS operators and tensor product spaces from \cite[Lemma C.1]{park2020measure}, the last result indicates that $\Hcal_V \cong \HS(\Hcal, \Hcal_Y)$ can be compactly embedded within $L_2(\rho_X, \Hcal_Y) \cong \HS(L_2(\rho_X), \Hcal_Y)$. One can define intermediate spaces of vector-valued functions (or analogously linear operators) for $\beta \in (0,1)$ as
\begin{align}
\begin{aligned}
\left[H_V\right]^\beta:= 
\left\{\iota\left(U \right) : U \in \HS\left(\left[\Hcal\right]^\beta, \Hcal_Y  \right)\right\},
\end{aligned}
\label{def.vRKHS.beta}
\end{align}
equipped with the norm
$\vnorm{v}_\beta:=\vnorm{U}_{\HS\left(\left[\Hcal\right]^\beta, \Hcal_Y  \right)}$, per \cite[Definition 3]{li2022optimal}. Here, $\iota$ is the isomorphism between $\HS\left(\left[\Hcal\right]^\beta, \Hcal_Y  \right)$ and $\left[H_V\right]^\beta$ in Lemma \ref{lemma.isomorphism}.

\section{Embedding of Probability Distributions}
\label{sec.embedding}
Let $X$ be an $\Xset$-valued random variable with measure $\rho_X$. Consider an RKHS $\Hcal$ over $\Xset$ with kernel $\kappa_X$ that satisfies Assumption \ref{assumption.polk}. Then, the \emph{kernel mean embedding} (KME) of $\rho_X$ in ${\Hcal}$ is the Bochner integral
$\E\left[\kappa_X(X, \cdot)\right]$,
where the expectation is computed with respect to $\rho_X$.

Now, let $(X,Y)$ be an $\Xset \times \Yset$-valued random variable with joint measure $\rho$. Let $\Hcal_Y$ be an RKHS over $\Yset$ with kernel $\kappa_Y$. Together, let $\Hcal, \Hcal_Y$ satisfy the conditions of Lemma \ref{lemma.isomorphism}.
Then, $(X,Y)$ can be embedded into ${\Hcal} \otimes \Hcal_Y$, per \cite{berlinet2011reproducing}, as
\begin{align}
\begin{aligned}
C_{XY}:=  \E[\phi_X(X) \otimes \phi_Y(Y)],
\end{aligned}
\label{def:cross-covariance}
\end{align} 
where the expectation is computed with respect to $\rho$. We call $C_{XY}$ (uncentered) cross-covariance operator. 
Likewise, the (uncentered) covariance operator is defined as
$C_{XX}:=\E_{X}[\phi_X(X) \otimes \phi_X(X)]$,
which can be viewed as the embedding of the marginal distribution $\rho_X$ into ${\Hcal} \otimes {\Hcal}$. 

Next, we define the conditional mean embedding (CME) which captures the \emph{dependency} between random variables $X$ and $Y$. The conditional mean embedding (CME) of $Y$ given $X$ is defined as
\begin{align}
\mu_{Y|X} :=\E[\kappa_Y(\cdot,Y) | X], 
\label{eq:def.muYx}
\end{align} 
Thus, $\mu_{Y|X}$ is an $X$-measurable $\Hcal_Y$-valued random variable.
For all $f_Y \in \Hcal_Y$, 
\begin{align}
\E[f_Y(Y) | X] = \left \langle f_Y, \mu_{Y|X} \right\rangle_{\Hcal_Y}.
\label{eq.CME.innerproduct}
\end{align}
According to \cite{park2020measure}, we can write 
$\mu_{Y|X} = \mu(X)$,    
where  $\mu: \Xset \to \Hcal_Y$ is a $X$-measurable $\Hcal_Y$-valued deterministic function in $L_2(\rho_X,\Hcal_Y)$ that defines the unique minimizer of a least squares regression problem in $L_2(\rho_X,\Hcal_Y)$ as
\begin{align}
\mu:= \argmin_{u \in L_2(\rho_X,\Hcal_Y)}\int_{\Xset \times \Yset}\vnorm{u\left(x\right) - \phi_Y \left(y\right)}_{\Hcal_Y}^2 \dd \rho.
\label{eq.v-regression-relax}
\end{align}
In light of Lemma \ref{lemma.isomorphism}, there exists a unique $U = \iota ^{-1} \left(\mu\right) \in \HS(\Lsq, \Hcal_Y)$ that we henceforth call the CME operator.
As will become clear in the sequel, the fact that the CME operator is a solution to a regression problem will ultimately allow us to develop a stochastic approximation algorithm to solve for it and provide approximation guarantees. That algorithm will then also yield an approximation to the Koopman operator for a Markov process via its relation to the CME operator.

\section{Studying Koopman Operator via CME}
\label{sec.Koopman}
Consider a time-homogeneous Markov process on $(\Xset, \Bcal_X)$ that is defined by the kernel $P$. Precisely, $P(\cdot|x)$ is a probability measure on $(\Xset, \Bcal_X)$ for each $x \in \Xset$, and $P(\Aset|\cdot) \in \Bcal_X$ for every $\Aset \in \Bcal_X$. For a $\Bcal_X$-measurable function $g:\Xset \to \Rset$, the action of the Koopman operator $K$ associated with the Markov process can be defined as,
\begin{align}
    (K g) (x) = \int g(x^+) P(d x^+ | x).
\end{align}
\label{eq:K.def}

While the stochastic Koopman operator has its origins in \cite{lasota2013chaos,mezic2005spectral}, it has appeared in the literature as a Markov operator acting on observables, such as in \cite{meyn2012markov,nummelin2004general,kontoyiannis2012geometric}. The definition can also be adapted to continuous-time Markov processes through a Markov semigroup as in \cite{lasota2013chaos,hou2023sparse}.

In this paper, we are interested in the interaction of $K$ with an RKHS $\Hcal$ characterized by a kernel $\kappa_X$ that satisfies Assumption \ref{assumption.polk}, and in doing so, establish a connection between $K$ and the CME operator $U$. To that end, let $\rho_X$ again denote a measure on $(\Xset, \Bcal_X)$ and consider the joint measure $\rho$ over $\Xset \times \Xset$  as the push-forward of the marginal $\rho_X$ through a kernel $P'$, where
\begin{align}
{P'}(\Aset|x)
:= \int_{\Xset} \bone_{(x, x^+) \in \Aset} P(dx^+ | x) ,
\label{eq:P'.def}
\end{align}
for all sets $\Aset$ in the $\sigma$-algebra generated by sets in $\Bcal_X \times \Bcal_X$. The joint measure $\rho$ can be expressed using $\rho_X$ and $P'$ as
\begin{align}
\rho(\Aset) = \int_{\Xset} {P'}(\Aset|x)\,\rho_X(dx).
\end{align}

With this notation, let $(X, X^+)$ be an $\Xset \times \Xset$ random variable with joint measure $\rho$. Said plainly, if $X$ is sampled according to $\rho_X$, then $P$ pushes it forward to $X^+$. The CME of $X^+$ given $X$ can be written as $\mu_{X^+|X} = \E[\kappa_X(., X^+) | X]$ from \eqref{eq:def.muYx}, where we use $\Yset = \Xset$, $\kappa_Y = \kappa_X$ and $\Hcal_Y = \Hcal$ in the notation of Section \ref{sec.embedding}.
The relation in \eqref{eq:K.def} can then be written as
\begin{align}
\begin{aligned}
(K g) (X)
= \E\left[g(X^+) | X\right]
= \left\langle g, \mu_{X^+|X}\right\rangle,
\quad g \in \Hcal,
\end{aligned}
\label{eq.Koopman.CME}
\end{align}
via \eqref{eq.CME.innerproduct}.
Hence, $\mu_{X^+|X}$ defines the Riesz representation of the function evaluation of the Koopman operator restricted to $\Hcal$. This provides a connection between the CME and the Koopman operator. We formalize this relationship in terms of an adjoint and impose regularity conditions on the operators that will facilitate algorithm design to approximate the operator.

The relation in \eqref{eq:K.def} provides a point-wise definition of the action of $K$ on a function $g$. For all $g \in \Hcal$, if $K g \in \Hcal$, then $\Hcal$ is said to be \emph{invariant} under the action of $K$. In such cases, the link between $K$ and the CME operator has been studied by \cite{klus2020eigendecompositions,hou2023sparse}. In general, one cannot guarantee such invariance. \rev{To provide a simple example, consider the scalar discrete-time deterministic dynamical system over $\Rset$ described by $x^+ = x^2$ and the RKHS $\Hcal$ defined by the linear kernel, $\kappa_X(x,x') = x x'$. In this case, $\Hcal$ contains all linear functions. For $g(x) = x$, notice that $K g = x^2$, a quadratic that does not belong to the class of all linear functions of $x$.}

In general, closure under dynamics is a restrictive assumption and is difficult to certify, except in a few cases, e.g., see \cite{kohne2025error}. To tackle this challenge, we consider the mis-specified setting and assume that $K$ is an HS operator that maps a function in $\Hcal$ to a function whose equivalent class (in the $L_2(\rho_X)$ sense) is an intermediate space from \eqref{def.power} between $\left[\Hcal\right]$ and $\Lsq$. 
Let $[K]:\Hcal \to \Lcal_2(\rho_X)$ be the composition of $K$ with the canonical quotient map $f \mapsto [f]$ for $f \in \Lcal_2(\rho_X)$. For notational simplicity, when the intended codomain is clear, we use $K$ and $[K]$ interchangeably.
The following theorem formally establishes the connection of the Koopman operator and the CME in this setting; see Appendix \ref{pf.thm2} for its proof.

\begin{theorem}
\rev{Let $K \in \HS(\Hcal, \left[\Hcal\right]^\beta)$ for some $\beta\in(0,1)$. Then, $U = K^{\star} \in \HS(\left[\Hcal\right]^\beta,\Hcal)$ where $U = \iota^{-1}(\mu)$ is the CME operator.}
\label{thm.Koopma-CME}
\end{theorem}

We remark that this adjoint relationship is specific to the study of the action of $K$ on an RKHS $\Hcal$ that allows the definition of the CME operator, and that too in the mis-specified setting. Within other function spaces, the adjoint relationship of the Koopman operator and the so-called Perron-Frobenius (or forward Kolmogorov) operator has been known, e.g., see \cite{lasota2013chaos}.

\section{Sparse Online Learning Algorithm}
\label{sec.algorithm}
Having established that $K$ is the adjoint of $U$ \rev{in Theorem \ref{thm.Koopma-CME}}, we next present an online algorithm to construct $K$ \emph{iteratively}. Our algorithm builds on stochastic operator gradient descent (SOGD) for $U$ to solve the regression problem in \eqref{eq.v-regression-relax}. Our framework defines a sharp deviation from prior art that uses sample average approximation, e.g., see \cite{li2022optimal,kostic2023sharp}.

\subsection{The Regularized CME Learning Problem}
\label{sec.3.1}
Consider again a time-homogeneous Markov process defined by the kernel $P$, such that $\rho_X$ is a measure on $\Xset$ and $\rho$ is a measure on $\Xset \times \Xset$ obtained by the push-forward of $\rho_X$ through $P'$.
Define the regularized variant of \eqref{eq.v-regression-relax} as
\begin{align}
\mu_{\lambda}:= \argmin_{u \in \Hcal_V}\frac{1}{2}\int_{\Xset \times \Xset}\vnorm{u \left(x\right) - \phi \left(x^+\right)}_{\Hcal}^2 \dd \rho(x,x^+)
+ \frac{\lambda}{2} \vnorm{u}_{\Hcal_V}^2,
\quad \lambda>0.
\label{eq.v-regression-lambda}
\end{align}
Again, with $\mu_\lambda \in \Hcal_V$, we associate a unique HS-operator $U_{\lambda}\in \HS(\Hcal,\Hcal)$ such that 
$
\mu_\lambda(x) = \iota_\kappa \left(U_\lambda \right)(x) = U_\lambda \phi_X(x)
$,
where $\iota_\kappa$ is the isometric isomorphism between $\HS(\Hcal,\Hcal)$ and $\Hcal_V$ defined in Lemma \ref{lemma.isomorphism}. We call $U_{\lambda}$ as the {regularized} CME operator that solves  $U_\lambda = \argmin_{U \in \HS(\Hcal,\Hcal)}R_\lambda(U)$, where the regularized risk $R_\lambda: \HS(\Hcal,\Hcal) \to \Rset$ is defined by
\begin{align}
R_\lambda(U):=
\frac{1}{2} \E \left[\vnorm{\phi(x^+) - U \phi(x)}_{\Hcal}^2 \right] + \frac{\lambda}{2} \vnorm{U}^2_{\HS(\Hcal,\Hcal)}.
\label{eq.regression}
\end{align}

Our next result establishes the existence and uniqueness of the minimizer, in effect making $U_\lambda$ well-defined. The proof is presented in Section \ref{lemma.CME.gradient.pf}. In the sequel, we use the notation $\vnorm{\cdot}_\HS$ to denote $\vnorm{\cdot}_{\HS(\Hcal, \Hcal)}$.

\begin{lemma} For $\lambda \geq 0$,
$R_\lambda$ is strong lower semi-continuous (l.s.c) and its gradient is given by $\nabla R_\lambda(U)= U C_{XX} -C_{X^+X} + \lambda U$ for any  $U \in \HS(\Hcal,\Hcal)$. 
When $\lambda>0$, $R_\lambda$ is strongly convex and $U_\lambda = C_{X^+X} (C_{XX} + \lambda \id )^{-1}$ is the unique minimizer of $R_\lambda$ over $\HS(\Hcal,\Hcal)$. 
\label{lemma.CME.gradient}
\end{lemma}

$U_\lambda$ is the regularized CME operator first proposed in \cite{song2009hilbert}. When $\lambda = 0$, $R_\lambda$ remains convex and the expression $\nabla R_\lambda(U)= U C_{XX} -C_{X^+X}$ continues to hold, but the existence of a solution requires extra conditions. Here, $C_{XX}$ and $C_{X^+ X}$ are covariance operators obtained using the definitions in Section \ref{sec.embedding}.

\subsection{Stochastic Operator Gradient Descent}
We now present an algorithm that suitably collects data from a time-homogeneous Markov process to iteratively maintain and update an estimate of $U_\lambda$. To understand the crux of the algorithm, notice that $R_\lambda$ is written in terms of $C_{X^+ X}$ and $C_{XX}$, both of which are expectations of rank-1 tensors as in Section \ref{sec.embedding}, where $(X, X^+)$ is sampled according to $\rho = \rho_X P'$.

Now, consider a collection of samples in $\left\{\left(x_i,x^{+}_{i}\right)\right\}_{i=0}^t$, where $x_i \in \Xset$ and $x_i^+ \sim P(\cdot|x_i) \in \Xset$. Notice that this collection leaves open the description of how $x_t$'s are chosen, but $x_t^+$ is always chosen by sampling the Markov kernel $P$ from $x_t$.
Given a sample $(x_t,x^{+}_t) \in \Xset \times \Xset$ for $t\in \Nset$, a sample-centered stochastic estimators of $C_{XX}$ and $C_{X^+ X}$ at $(x_t,x^{+}_t)$ are given by
\begin{align}
    \tilde{C}_{XX}(t)= \phi\left(x_t\right) \otimes \phi\left(x_t\right),
    \;
    \tilde{C}_{X^+X}(t)= \phi\left(x^{+}_t\right) \otimes \phi\left(x_t\right),
\end{align}
with which a sample-centered operator gradient is
\begin{align}
\tilde{\nabla} R_\lambda (x_t,x^{+}_t;U) =  U \tilde{C}_{XX}(t) - \tilde{C}_{X^+X}(t)+  \lambda U \in \HS(\Hcal,\Hcal),
\label{eq.SFG}
\end{align}
for all $U\in \HS(\Hcal,\Hcal)$ and  $t \in \Nset$.

Assume $U_0 = 0$.
For a step-size sequence $\left\{\eta_t\right\}_{t\in\Nset}$, consider $\left\{U_t\right\}_{t\in\Nset}$ taking values in $\HS(\Hcal,\Hcal)$ given by
\begin{align}
\begin{aligned}
U_{t+1} = U_t - \eta_t \tilde{\nabla} R_\lambda (x_t,x^{+}_t;U_t)
\end{aligned}
\label{eq.CME.SFGD}
\end{align}
for all $t \in \Nset$. This algorithm is \emph{online} in that it processes the samples $(x_t, x_t^+)$ one at a time and updates the estimate of $U_\lambda$ from $U_t$ to $U_{t+1}$.
We characterize the iterates of \eqref{eq.CME.SFGD} in terms of elements in $\Hcal \otimes \Hcal$. See Appendix \ref{lemma.CME.U.pf} for a proof. 
\begin{lemma}
    Let $\{U_t\}_{t\in \Nset}$ be the sequence generated by \eqref{eq.CME.SFGD}. Define matrices
    \begin{align}
     \Phi_{X,t}:=\left[\phi(x_1), \ldots, \phi(x_{t})\right],
     \quad
     \Psi_{X^+,t}:=\left[\phi\left(x^{+}_{1}\right), \ldots, \phi(x^{+}_{t})\right].
    \end{align}
    Then $\{U_t\}_{t\in \Nset}$ admits the representation,
    \begin{align}
    U_{t+1} 
        = \sum_{i=1}^t \sum_{j=1}^t W^{ij}_{t} \left(\phi(x^{+}_{i}) \otimes \phi(x_j)\right)
        = \Psi_{X^+,t} W_t \Phi_{X,t}^\top,\quad  \forall t\in \Nset,
    \label{eq.U.HS}
    \end{align}
    with the coefficient matrix $W_t$ given by
    \begin{gather}
    \begin{gathered}
        W^{ij}_{t} = (1 -\lambda \eta_t) 
        W^{ij}_{t-1}, \  1 \leq i,j \leq t-1;
        \\
        W^{it}_{t} =  -\eta_t \sum_{j=1}^{t-1} 
        W^{ij}_{t-1} \kappa_X(x_j,x_t), \ 1\leq i \leq t-1;
        \\
        W_t^{tj} = 0, \ 1 \leq j \leq t-1;
        \quad
        W^{tt}_{t} = \eta_t, \ t \in \Nset 
        \setminus \{0\};
        \quad W_0 = 0.
        \end{gathered}
    \label{eq.U.w}
    \end{gather}
\label{lemma.CME.U}
\vspace{-0.15in}
\end{lemma}

The above result states that the estimates of $U_\lambda$ generated by SOGD can be described by a linear combination of rank-1 tensors with kernel functions centered at the samples observed up to that time. 


\subsection{Online Sparsification}

Our last result indicates that the estimate $U_{t}$ of $U_\lambda$ from SOGD has a description that grows with $t$.
Next, we aim to control the growth of that description by judiciously admitting a new sample only when the new sample brings sufficiently ``new'' information, leading to a \emph{sparse} SOGD algorithm.

\begin{figure}[ht]
    \centering
    \includegraphics[width=0.6\linewidth]{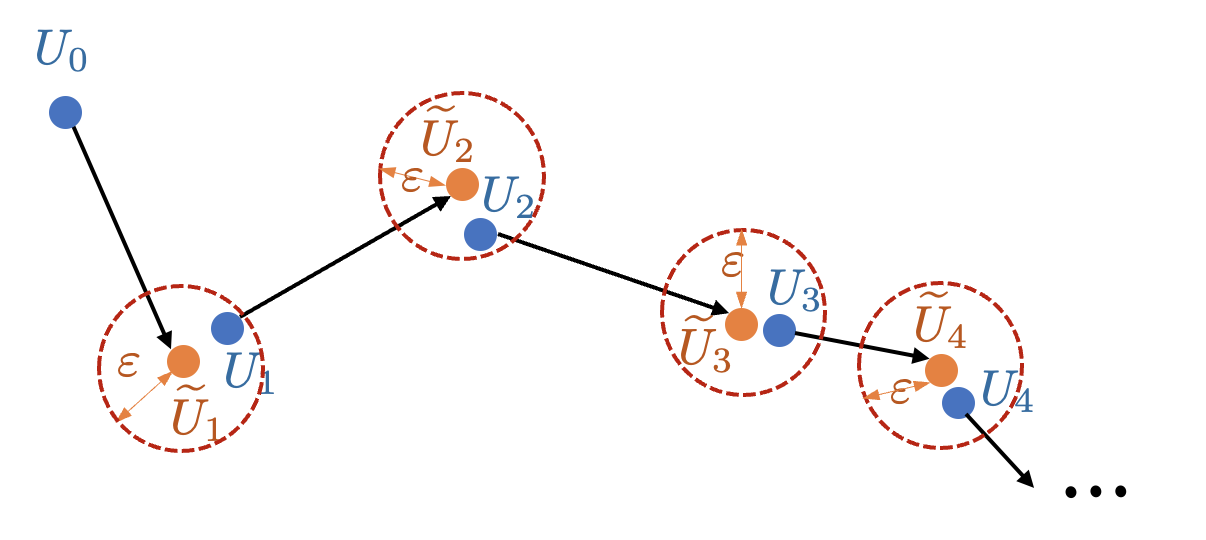}
    \caption{An illustration of the sparse SOGD algorithm: 
    $\left\{U_t\right\}_{t\in\Nset}$ (blue) are the iterates generated by sparse SOGD, and $\{\tilde{U}_t\}_{t\in\Nset}$ (orange) is the auxiliary sequence computed based on $\tilde{\Dcal}_t$ via basic SOGD \eqref{eq.CME.SFGD}. 
    Condition \eqref{eq.CME.stop} ensures that at each step $t\in\Nset$, the sparse estimate $U_t$ lies within the $\ve$-ball around $\tilde{U}_t$.
    }
    \label{fig1}
\end{figure}

Let $U_t$ be the estimate from sparse SOGD at time $t$, whose description uses the samples in $\Dcal_{t-1} \subseteq \{(x_i, x_i^+)\}_{i=1}^{t-1}$. Let $\Ical_{t-1} \subseteq \{1, \ldots, t-1\}$ be  the indices among $1,\cdots,t$ for which $(x_i,x^{+}_i)$ are in $\Dcal_{t-1}$. Let $W_{t-1}$ encode the weights in the description of $U_t$.
After receiving a new sample pair $(x_{t},x^{+}_{t})$, we decide whether to add it to the current dictionary $\Dcal_{t-1}$ or discard it based on its contribution to the description of the estimator. More precisely,  use SOGD to update, 
\begin{align}
    \tilde{U}_{t+1} = U_t -  \eta_t \tilde{\nabla} R_\lambda \left(x_t,x^{+}_t;U_t\right), \quad t \in \Nset,
    \label{eq.vanilla-OSGD}
\end{align}
with the operator description using $\Dcal_{t-1} \cup (x_{t},x^{+}_{t})$, where the weights $\tilde{W}_{t}$ are obtained from $W_{t-1}$ using the right-hand-sides of \eqref{eq.U.w}.
We now test whether $\tilde{U}_{t+1}$ can be well approximated within a desired accuracy level by a combination of kernel functions centered at elements in the old dictionary, $\Dcal_{t-1}$. That is, consider the orthogonal projection of $\tilde{U}_{t+1}$ onto the closed subspace, $\text{span}\left\{\phi(x^{+}_{i}) \otimes \phi(x_j):i,j \in \Ical_{t-1} \right\}$, i.e., 
$\hat{U}_{t+1}:= \Pi_{\Dcal_{t-1}}\left[ \tilde{U}_{t+1}\right] $, described by the coefficient matrix $\hat{W}_t$, given by
\begin{align}
{\small
\begin{aligned}
\hat{W}_{t} =& \argmin_{Z \in \Rset^{|\Ical_t|\times |\Ical_t|}} \vnorm{\sum_{i\in \Ical_t} \sum_{j \in \Ical_t} Z^{ij} \phi(x^{+}_{i}) \otimes \phi(x_j) -\sum_{i \in {\Ical}_t \cup \{t\}} \sum_{j \in {\Ical}_t \cup \{t\}} \tilde{W}^{ij}_{t} \phi(x^{+}_{i}) \otimes \phi(x_j)}_{\HS}^2.
\end{aligned}}
\label{eq.CME.CMP.W}
\end{align}
If the error due to sparsification is within a pre-selected  sparsification budget $\ve_t$, i.e., 
\begin{align}
    \vnorm{\hat{U}_{t+1} - \tilde{U}_{t+1}}_{\HS} \leq \ve_{t}.
    \label{eq.CME.stop}
\end{align}
Then, we discard the new sample $(x_{t},x^{+}_{t})$ and maintain the same dictionary as before, i.e.,
$\Dcal_t \leftarrow \Dcal_{t-1}$, $\Ical_t \leftarrow \Ical_{t-1}, W_t \leftarrow \hat{W}_t$.
If the sparsification error exceeds $\varepsilon_t$, we keep the new sample $(x_{t},x^{+}_{t})$ and update $\Dcal_t \leftarrow \Dcal_{t-1} \cup (x_{t},x^{+}_{t}), \Ical_t = \Ical_{t-1} \cup t$. The weighting matrix becomes $W_{t} \leftarrow \tilde{W}_{t}$ from \eqref{eq.U.w}.
In both cases, the estimate at time $t+1$ can be computed based on $\Dcal_t$ and $W_{t}$ as
\begin{align}
    \displaystyle U_{t+1} = \sum_{i \in \Ical_t} \sum_{j \in \Ical_t} W^{ij}_{t} \phi(x^{+}_{i}) \otimes \phi(x_j). 
\label{eq.CME.U.update}
\end{align}

In summary, our approach attains a sparse representation of $U_{t+1}$, and the complexity of the representation only depends on the cardinality of $\Dcal_t$ at each $t \in \Nset$. 
Recall from Theorem \ref{thm.Koopma-CME} that the Koopman operator can be defined as the adjoint of $U$. As such, we construct approximates of the Koopman operator $\{K_t\}_{t \in \Nset}$ as $K_t:=U_t^*$ for each $t \in \Nset$. 
The procedure is summarized in Figure \ref{fig1} and Algorithm \ref{algorithm.main}. 

\begin{algorithm2e}[H]
\SetAlgoLined
\SetKwInOut{Input}{input}\SetKwInOut{Output}{output}
\caption{Sparse Online Learning of the Koopman operator}
\Input{$\{(x_t,x^{+}_t)\}_{t \in \Tset}$, $\kappa$,$\left\{\eta_t\right\}_{t \in \Tset}$, $\{\ve_t\}_{t \in \Tset}$}
\BlankLine
Initialize $U_0=0$
\BlankLine
\For{$t \in \Tset$}
    {Receive a sample pair $(x_t,x^{+}_t)$
    \\
    $\tilde{\Dcal}_t \leftarrow \Dcal_{t-1} \cup (x_t,x^{+}_t)$
    \\
    Compute $\tilde{W}_{t}$ based on $\tilde{\Dcal}_t$ via \eqref{eq.U.w}
    \\
    Compute $\displaystyle \Delta_{t} \leftarrow \min_{Z} \vnorm{\sum_{\substack{i,j\in \Ical_{t-1}}} Z^{ij} \phi(x^{+}_{i}) \otimes \phi(x_j) -\sum_{i,j \in \tilde{\Ical}_{t}} \tilde{W}^{ij}_{t} \phi(x^{+}_{i}) \otimes \phi(x_j) }_\HS^2$
    \\
    \eIf{$\Delta_{t} < \ve_t$}{
    $\Dcal_t \leftarrow \Dcal_{t-1}$,$W_{t} \leftarrow Z_\star$}
    {$\Dcal_t \leftarrow \tilde{\Dcal}_{t}$, 
    $W_{t} \leftarrow \tilde{W}_{t}$
    }
    Compute $U_{t}$ according to \eqref{eq.CME.U.update}.\\
\Output{The Koopman estimate $K_t \gets U_t^* $}
}
\label{algorithm.main}
\end{algorithm2e}

A keen reader might recognize our algorithm as \emph{kernel matching pursuit} in \cite{vincent2002kernel,koppel2017parsimonious} function learning. We generalize that methodology to vector-valued RKHS, making it applicable to the operator learning problem \eqref{eq.regression}. In the next section, we provide asymptotic and last-iterate convergence guarantees with samples from trajectories of a Markov process, whose analysis is substantially different than scalar-valued function learning as studied by \cite{bach2013non,tarres2014online,smale2009online}.

\subsection{An Illustrative Example}
\label{sec.4.2}
Before diving into the convergence analysis of our algorithm, we provide an illustrative example of its use. Consider the Langevin dynamics described by
$\mathrm{d} X_t = -\nabla V (X_t) \mathrm{d} t + \sqrt{2 \beta^{-1} } \mathrm{d} B_t$, 
with $x=[x_1,x_2]$, $V (x)=(x_1^2-1)^2+(x_2^2-1)^2$ and $\beta = 4$. As plotted in Figure \ref{fig.4well.phase}, a trajectory stays within one of the four potential wells, while rare transitions happen as ``jumps'' between four metastable sets.
Since the spectrum of $K$ encodes state space connectivity information, in this experiment, we apply the algorithm to identify said metastable sets. Figure \ref{fig.4well.2000}, \ref{fig.4well.20000}, \ref{fig.4well} plot leading eigenfunctions of $K_t$ at various iterations, and Figure \ref{fig.4well} reveals the distinct metastable sets. By leveraging the sparsification mechanism, we are able to control the growth of model complexity such that $|\Dcal_t| \ll t$ to alleviate computational and storage issues. The details are included in Appendix \ref{appdx.CMP}

\begin{figure}[ht]
    \centering
     \subfigure[]{\includegraphics[width=0.2\linewidth]{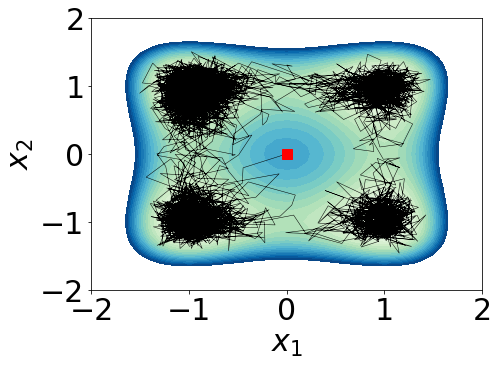}
     \label{fig.4well.phase}}
     \hspace{0.1in}
     \subfigure[{\small Iterate $2000$}]{\includegraphics[width=0.2\linewidth]{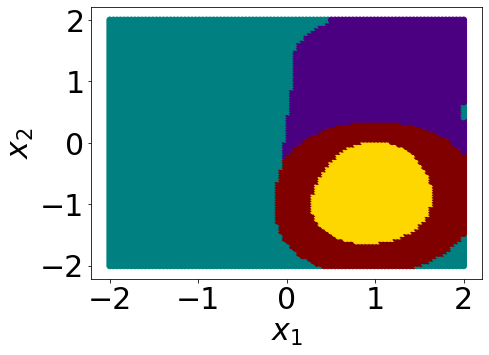}\label{fig.4well.2000}}
     \hspace{0.1in}
     \subfigure[{\small Iterate $20000$}]{\includegraphics[width=0.2\linewidth]{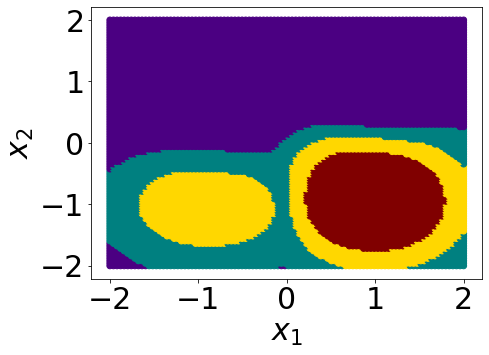}\label{fig.4well.20000}}
     \hspace{0.1in}
     \subfigure[{\small Iterate $40000$}]{\includegraphics[width=0.2\linewidth]{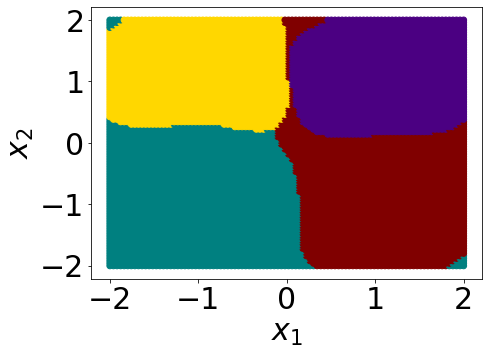}\label{fig.4well}}
    \vspace{-0.1in}
    \caption{(a) Potential landscape and one trajectory of the Langevin dynamics; (b),(c),(d) four metastable sets obtained from leading eigenfunctions of $K_t$ at various iterates, where 
    (b) $t=2000$, $|\Dcal_t| = 101$,
    (c) $t = 20000$,$|\Dcal_t| = 134$, and
    (d) $t = 40000$,$|\Dcal_t| = 145$.}
\end{figure}
\section{Convergence Analysis with Trajectory-Based Sampling}
\label{sec.5.analysis}

We now present our theoretical results on the convergence behavior of the sparse SOGD algorithm proposed in Section \ref{sec.algorithm}. 
Following Section \ref{sec.Koopman}, we make the following assumption on the regularity of $K$, which encodes the regularity of the Markov kernel. 
\begin{assumption}
$K \in \HS(\Hcal,\left[\Hcal\right]^\beta)$ for some $\beta \in (0, 1)$ and $\vnorm{K}_{\HS\left(\Hcal,\left[ H\right]^\beta\right)}
\leq \Bsrc <\infty$.
\label{assumption.src}
\end{assumption}

Our ultimate goal is to understand how closely $K_t$ approximates $K$ with respect to the norm
$\vnorm{\cdot}_{\HS(\Hcal , \left[\Hcal\right]^\beta)}$. 
To obtain error estimates, using the triangle inequality, we have
\begin{align}
\vnorm{\left[K_t\right] - K}_{\HS(\Hcal , \left[\Hcal\right]^\beta)}
\leq \vnorm{\left[K_t - K_\lambda\right]}_{\HS(\Hcal , \left[\Hcal\right]^\beta)}
+\vnorm{\left[K_\lambda\right] - K}_{\HS(\Hcal , \left[\Hcal\right]^\beta)}.
\label{eq.K.bias-variance}
\end{align}
The first term on the right-hand side depends on the stochastic sample path. It captures sampling error with respect to the norm of the intermediate space defined in Section \ref{sec.RKHS}.
The second term equals the bias in approximating an operator in the mis-specified case. The next lemma provides an upper bound for \eqref{eq.K.bias-variance} by bounding the sampling error and bias separately; for proof, see Appendix \ref{pf.lemma.Kt}.

\begin{lemma}
Under Assumptions \ref{assumption.polk} and \ref{assumption.src}, 
\begin{align}
\begin{aligned}
\vnorm{\left[K_t\right] - K}_{\HS(\Hcal , \left[\Hcal\right]^\beta)}^2
\leq & 2 B_\infty^{1-\beta} \vnorm{U_t - U_\lambda}_{\HS}^2
+  2 \Bsrc^2.
\end{aligned}
\label{eq.mu_t.gamma}
\end{align}
\label{lemma.Kt}
\end{lemma}
The above result suggests that we must focus on the study of the convergence of the sequence of HS operators $\left\{U_t\right\}$ to $U_\lambda$ in the HS-norm. 
This simplification bears a resemblance to the existing work by \cite{li2022optimal}. Yet our analysis is substantially distinct from theirs in the sense that we consider online learning with trajectory-based sampling rather than batch learning with IID samples. \rev{That is, our analysis is stochastic approximation-based, with online streaming samples, rather than batch sample–average–based methods that process an entire dataset all at once.}
Furthermore, we construct a sparse representation for each iterate to control model complexity.
Since each iteration induces an extra error, we carefully handle a compounding bias that arises from sparsification by controlling the step-sizes.
To assist the analysis, define an $\left\{\Fcal_t\right\}_{t\in\Nset}$-adapted sequence $\left\{E_t\right\}_{t\in\Nset}$ where $E_{t}:= U_{t+1} - \tilde{U}_{t+1}$ encodes the error due to sparsification to write the output of our algorithm as 
\begin{align}
\begin{aligned}
U_{t+1} = U_t + \eta_t \left(- \tilde{\nabla} R_\lambda(x_t,x^+_t;U_t) + \frac{E_{t}}{\eta_t} \right), \quad U_0=0.
\end{aligned}   
\label{eq.CME.fixedpt.SA}
\end{align}
Here, $\vnorm{E_t}_\HS\leq \ve_t$ from \eqref{eq.CME.stop}.
We make the following assumption.
\begin{assumption}
    (a) The step-size sequence $\left\{\eta_t\right\}_{t \in \Nset}$ satisfies: $0<\eta_{t+1} \leq \eta_t< 1/\lambda$, and (b) $\ve_t \leq \bcmp \eta_t^2$ for some $\bcmp>0$ for all $t \in \Nset$. 
    \label{assumption.3}
\end{assumption}

We next delineate precise requirements on the Markov process described by the kernel $P(\cdot|x)$. Recall that we aim to sample from a single trajectory of a Markov process starting from an arbitrary initial state. While we have not formally presented the data collection process, it is not difficult to conceive that the measure of consecutive states along such a trajectory will evolve through time. For the definition of CME, we require $\rho_X$ over $\Xset$ to be a static measure from which we hope to draw samples. A natural candidate for such a measure is a stationary invariant measure of the Markov process, which is guaranteed to exist under the next assumption. We use the notation TV to denote total variation distance and $P^t$ to be the $t$-times product of $P$.
\begin{assumption}
$\left\{X_t\right\}_{t \in \Nset}$ described by the kernel $P$ is uniformly geometrically ergodic with a unique invariant measure $\pi$ over $(\Xset, \Bcal_X)$, i.e.,
\begin{align}
\sup_{x\in\Xset} \big\| P^t(\cdot|x) - \pi \big\|_{\mathrm{TV}}
\leq R\gamma^{\,t}, \label{eq:geom_mixing}
\end{align}
for all $t \in \Nset$ for some $R>0$ and $\gamma\in(0,1)$. Let $\rho_X = \pi$ and $\rho = \pi P'$.
\label{assumption.mixing}
\end{assumption}

Next, we describe our data collection process. Consider a time-homogeneous Markov process $\{X_t\}_{t\in\Nset}$ evolving over $(\Xset, \Bcal_X)$, starting from an arbitrary $X_1$ sampled according to the measure $\prob_1$ over $\Xset$, and then pushed forward via the kernel $P$ at each time, i.e.,
\begin{align}
    \prob\{X_1 \in \Aset_1, \ldots, X_t \in \Aset_t\} = \int_{\Aset_1} \ldots \int_{\Aset_t} \prob_1(\dd x_1) \prod_{k=1}^{t-1} P(d x_{k+1} | x_k).
\end{align}
Then, we take a single trajectory by sampling $X_1 \sim \prob_1$ and recursively propagating $X_t$ through $P$ to produce a trajectory. The samples are then given by $(X_1, X_2), (X_2, X_3), \ldots, (X_t, X_{t+1})$ with which we run the sparse SOGD algorithm.

\subsection{Asymptotic Convergence}
  
\begin{theorem}
And let Assumptions \ref{assumption.polk}, \ref{assumption.src}, \ref{assumption.3} and \ref{assumption.mixing}  hold. Assume that the stepsize sequence $\{\eta_t\}_{t \in \Nset}$ satisfies $\sum_{ t \in \Nset} \eta_t = \infty$, and $\sum_{t \in \Nset} \eta_t^2 < \infty$, and that there exist two deterministic real-valued sequences $\{a_t\}_{t\in \Nset}$ and $\{b_t\}_{t \in \Nset}$ such that,
    \begin{gather}  
    \begin{gathered}
    \vnorm{\E \left[\tilde{\nabla} R_\lambda \left(x_t,x_t^+,U\right)|\Fcal_t\right] - \nabla R_\lambda\left(U\right)}_{\HS} \leq a_t,
    \
    \E \left[ \vnorm{\tilde{\nabla} R_\lambda \left(x_t,x_t^+,U\right)}_{\HS}^2 |\Fcal_t\right]  \leq b_t^2, 
    \end{gathered}
    \label{assumption.asy.a.b.c}
    \end{gather}
for all $t \in \Nset$, and they also satisfy $ \sum_{t \in \Nset} \eta_t a_t <\infty$, and $ \sum_{t \in \Nset} \eta_t^2 b_t^2 <\infty$.
Then, 
\begin{align}
\lim_{t \to \infty} \vnorm{\left[K_t\right] - K}^2_{\HS(\Hcal , \left[\Hcal\right]^\beta)}
\leq 2 \Bsrc^2 , \quad \rho-\text{a.s.}.
\end{align}
\label{corollary.Koopman.asy}
\end{theorem}

This result reveals that the iterates converge \emph{almost surely} to a neighborhood of $K$, the size of which depends on the regularization parameter $\lambda$ and the regularity of the true Koopman operator, measured by $\beta$. Moreover, a diminishing stepsize sequence forces the same on the sparsification budget, i.e., $\ve_t$ approaches $0$ as $t \to \infty$. Asymptotically, under Assumption \ref{assumption.3}, sparsification does \emph{not} degrade the quality of the learned operator. While this may seem counterintuitive, the explanation lies in the interplay between step-size and the sparsification budget: as the sparsification budget keeps shrinking concomitantly with the step-size, it becomes harder to ignore any data point from the dictionary. Although some of the points may be discarded early in training, the \rev{uniformly geometrically ergodic} process generates data that corrects for any early errors, making the effect of sparsification vanish eventually.
Our proof utilizes the almost supermartingale convergence theorem \cite{robbins1971convergence} and is presented in Appendix \ref{appendix.CME.asy}.
The proof shows that the iterates remain bounded, making them Lyapunov stable.

\subsection{Finite-Time Convergence Analysis}
\label{sec.FT}

Next, we study the finite-time behavior of our sparse SOGD algorithm. 
Under Assumption \ref{assumption.mixing}, the process has sufficiently mixed after $\tau(\delta)$ steps for $\tau(\delta)$ defined as follows. 
For $\delta>0$, define $\tau(\delta) := \tmin \left\{ s \in \Nset:  R \gamma^s \leq \delta \right\}$, implying that after $\tau(\delta)$ time, \rev{$\sup_{x\in\Xset} \big\| P^t(\cdot|x) - \pi \big\|_{\mathrm{TV}} \leq \delta$.}
Then $\tau(\delta)$ satisfies $R \gamma^{\tau(\delta)} \leq \delta$ and $R \gamma^{\tau(\delta)-1} \geq \delta$, and the latter implies
\begin{align}
&\tau(\delta) 
\leq \frac{\log (R/\gamma) + \log (1/\delta)}{\log(1/\gamma)} 
\leq B_\mix \left(\log \frac{1}{\delta} +1 \right),
\label{eq.mixing0}
\end{align}
where $B_\mix = \max{\left\{\frac{1}{\log(1/\gamma)}, \frac{\log (R/\gamma)}{\log(1/\gamma)}\right\}}$.
In contrast to IID sampling, trajectory-based sampling yields biased gradient estimates. We control this bias to generate the following result. \rev{The proof is presented in the longer version of the paper \cite[Appendix G.3]{hou2024nonparametric}. }

\begin{lemma}
Under Assumptions \ref{assumption.polk}, \ref{assumption.3} and \ref{assumption.mixing}, for any $\delta>0$, $s \in \Nset$, and $t \geq \tau(\delta)$,
\begin{align}
\begin{aligned}
\vnorm{\E \left[\tilde{\nabla} R_\lambda \left(x_{t+s},x^{+}_{t+s};U\right)|\Fcal_s\right] - \nabla R_\lambda\left(U\right)}_{\HS} 
 \leq 2 \left(B_\infty + \lambda\right) \delta \left(\vnorm{U}_{\HS}+1\right).
\end{aligned}
\end{align}
\label{lemma.CME.mixing}
\end{lemma}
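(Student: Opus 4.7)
The plan is to peel off the regularization term (which cancels), reduce the bound to bounding deviations of conditional covariance operators from their stationary counterparts in Hilbert--Schmidt norm, and then use the common dominating measure $\nu$ from Assumption~\ref{assumption.mixing} to convert that to a total-variation estimate controlled by the mixing bound.

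More concretely, the first step is to substitute the definitions of $\tilde{\nabla} R_\lambda$ from \eqref{eq.SFG} and $\nabla R_\lambda$ from Lemma~\ref{lemma.CME.gradient}. The additive $\lambda U$ term is deterministic, so it cancels in the difference, leaving
\[
\E\!\left[\tilde{\nabla} R_\lambda(x_t,y_t;U)\,\big|\,\Fcal_0\right] - \nabla R_\lambda(U)
= U\bigl(\E[\tilde{C}_{XX}(t)|\Fcal_0] - C_{XX}\bigr) - \bigl(\E[\tilde{C}_{YX}(t)|\Fcal_0] - C_{YX}\bigr).
\]
By the triangle inequality and the submultiplicative bound $\|UA\|_\HS \le \|U\|_\HS\|A\|_\op \le \|U\|_\HS\|A\|_\HS$, it suffices to control $\|\E[\tilde{C}_{XX}(t)|\Fcal_0]-C_{XX}\|_\HS$ and $\|\E[\tilde{C}_{YX}(t)|\Fcal_0]-C_{YX}\|_\HS$ separately; summing the two contributions with the factor $\|U\|_\HS$ in front of the first will produce exactly the $(\|U\|_\HS+1)$ structure on the right-hand side.

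The second step uses Assumption~\ref{assumption.mixing}: let $q_t$ and $q$ denote the $\nu$-densities of $P_t(\cdot\mid\Fcal_0)$ and $\rho$ on $\Xset\times\Yset$. Then
\[
\E[\tilde{C}_{XX}(t)\mid\Fcal_0] - C_{XX}
= \int_{\Xset\times\Yset} \bigl(\phi_X(x)\otimes\phi_X(x)\bigr)(q_t-q)\,\dd\nu,
\]
and an analogous identity holds for the cross-covariance. Pulling the Hilbert--Schmidt norm inside the Bochner integral and using Assumption~\ref{assumption.polk}(a), which gives $\|\phi_X(x)\otimes\phi_X(x)\|_\HS=\kappa_X(x,x)\le K$ and $\|\phi_Y(y)\otimes\phi_X(x)\|_\HS\le K$, each difference is bounded by $K\int|q_t-q|\,\dd\nu = 2K\|P_t(\cdot|\Fcal_0)-\rho\|_{\TV}$. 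Because $t\ge\tau_\delta$, Definition~\ref{definition:mixing} yields $\|P_t(\cdot|\Fcal_0)-\rho\|_{\TV}\le\delta$ pathwise (the supremum over initial states in the mixing definition makes the bound deterministic once $\Fcal_0$ is revealed). Combining all pieces delivers $2K\delta(\|U\|_\HS+1)$, which matches the claim with $B_\kappa=K$.

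I expect the only delicate point to be the Bochner--integral swap $\|\int\Phi\,\dd\mu\|_\HS \le \int\|\Phi\|_\HS\,\dd|\mu|$ for signed measures obtained as $q_t-q$, and the justification that $\phi_X(x)\otimes\phi_X(x)$ and $\phi_Y(y)\otimes\phi_X(x)$ are strongly measurable maps into $\HS(\Hcal_X,\Hcal_X)$ and $\HS(\Hcal_X,\Hcal_Y)$ respectively; separability of the underlying RKHSs plus measurability of the kernels (assumed in Section~\ref{sec:RKHS}) takes care of this, and the uniform bound $K$ guarantees integrability. Everything else is routine algebra once the cancellation of $\lambda U$ and the TV-distance reduction are in place.
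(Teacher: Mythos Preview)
Your argument is correct and follows the same core mechanism as the paper's proof: express the two expectations as Bochner integrals against the densities $p_t$ and $q$ with respect to the common dominating measure $\nu$ from Assumption~\ref{assumption.mixing}, pull the $\HS$-norm inside, and invoke the mixing-time bound $\int|p_t-q|\,\dd\nu = 2\|P_t(\cdot|\Fcal_0)-\rho\|_{\TV}\le 2\delta$ for $t\ge\tau_\delta$.

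The difference is in how the uniform bound on the integrand is obtained. The paper does \emph{not} cancel the $\lambda U$ term or split into covariance pieces; it directly applies the affine-scaling bound $\|\tilde{\nabla} R_\lambda(x,y,U)\|_{\HS}\le B_\kappa(\|U\|_{\HS}+1)$ from Lemma~\ref{lemma.CME.gradients}\ref{lemma.CME.gradients.affine}, where $B_\kappa=K+\lambda$, and then multiplies by the TV integral. Your decomposition instead cancels $\lambda U$ first and bounds the two covariance deviations separately by $2K\delta$, yielding the sharper constant $2K\delta(\|U\|_{\HS}+1)$. Since $K\le K+\lambda=B_\kappa$ in this paper (not $B_\kappa=K$ as your final parenthetical suggests), the stated inequality follows a fortiori. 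So your route is marginally tighter; the paper's is marginally shorter because it reuses an already-established gradient bound. Otherwise the arguments coincide.
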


We adopt a Lyapunov-drift argument \cite{srikant2019finite,chen2022finite}, which is originally designed for stochastic approximation in Euclidean spaces, to study the stochastic \emph{operator} gradient descent with {sparsification}. The argument closely resembles the (informal) analysis of the continuous-time dynamics $\dot{U}(t)  = -\nabla R_\lambda\left(U\left(t\right)\right)$ for $U \in {\HS(\Hcal,\Hcal)}$ for which one can show that $d\vnorm{U(t) - U_\lambda}_{{\HS}}^2/dt \leq -2\lambda \vnorm{U(t) - U_\lambda}_{{\HS}}^2$, and then view \eqref{eq.CME.fixedpt.SA} as its discrete, biased, and stochastic counterpart.

Under Assumption \ref{assumption.3}(b), there exists some $B_\ve>0$ such that $\eta_t \leq B_\ve / \bcmp$ for all $t \in \Nset$.
Denote $B = B_\infty + \lambda + B_\ve$, $\check{B} := 98 B^2  + 32 B$, and $\eta_{t - \tau_t,t-1} := \sum_{k=t - \tau_t}^{t-1} \eta_k$, 
and $\tau_t:=\tau(\eta_t)$.
The following result provides the one-step drift in expectation; see Appendix \ref{thm.CME.FT.one-step.pf} for a proof. 
\begin{lemma}
(One-Step Stochastic Descent Lemma)
Let Assumptions \ref{assumption.polk}, \ref{assumption.3}, and define \ref{assumption.mixing} hold.
Then, for $t\geq \tau_t$ and step-sizes such that $\eta_{t-\tau_t,t-1} \leq 1/4B$, 
\begin{align}
\begin{aligned}
\E\left[ \vnorm{U_{t+1} - U_\lambda}_{\HS}^2\right]
\leq&
\left( 1 - 2\eta_t \lambda  + \check{B} \eta_t \eta_{t-\tau_t,t-1} \right)
\E \left[ \vnorm{U_t - U_\lambda}_{\HS}^2 \right]\\
&+ \check{B} \eta_t \eta_{t-\tau_t,t-1} \left(\vnorm{U_\lambda}_{\HS}+1\right)^2 
+ 4 \ve_t B_\infty/\lambda .
\end{aligned}
\label{eq.CME.FT.one-step.1}
\end{align}
In addition, if for all $t \geq \tau_t$, the stepsizes satisfy $\eta_{t-\tau_t,t-1} \leq \lambda/ \check{B}$, then, 
\begin{align}
\begin{aligned}
\E\left[ \vnorm{U_{t+1} - U_\lambda}_{\HS}^2\right]
\leq & \left(1 -\lambda \eta_t \right)\E\left[\vnorm{U_t - U_\lambda}_{\HS}^2\right]\\
&+ \check{B} \eta_t \eta_{t-\tau_t,t-1} \left(\vnorm{U_\lambda}_{\HS}+1\right)^2 + 4 \ve_t B_\infty/\lambda.
\end{aligned}
\label{eq.CME.FT.one-step.2}
\end{align}
\label{thm.CME.FT.one-step}
\vspace{-0.2in}
\end{lemma}

We remark that our choice of the sparsification budget $\{\ve_t\}_{t\in \Nset}$ stated in Assumption \ref{assumption.3}(b) guarantees that the first summand on the right-hand side of the inequality is the dominant term. Hence, \eqref{eq.CME.FT.one-step.2} becomes a one-step contraction. 
Using Lemma \ref{lemma.Kt} and Lemma \ref{thm.CME.FT.one-step}, we present our main result below. Its proof is deferred to Appendix \ref{thm.CME.general.pf}. 
\begin{theorem}
Let Assumptions \ref{assumption.polk}, \ref{assumption.src} , \ref{assumption.3}, and \ref{assumption.mixing} hold. Also, assume $\eta_{t-\tau_t,t-1}\leq \min\{1/(4B), \lambda/\check{B}\}$ for all $t \geq \tau_t$.
For $ r>s>\tau_t$, define $\Psi(r,s) := \Pi_{j=s}^{r} \left(1 -\lambda \eta_j \right)$. Let $\tau_*=\min\{t:t \geq \tau_t\}$.
Then for all $t \geq \tau_*$, 
\begin{align}
\begin{aligned}
\E \left[\vnorm{\left[K_{t}\right] - K }_{\HS(\Hcal,\left[\Hcal\right]^\beta}^2) \right] 
\leq
&2 B_\infty^{1-\beta}
\left(  4\frac{B_\infty^2}{\lambda^2} 
\Psi(t-1,\tau_*) \right.\\
&\left.+ \sum_{i=\tau_*}^{t-1} \Psi(t-1,i+1) \Theta_1\left(i,\bcmp,\lambda\right)\right) + 2 \Bsrc^2.
\end{aligned}
\end{align}
where $\Theta_1\left(t,\bcmp,\lambda\right)
:= \check{B} \eta_t \eta_{t-\tau_t,t-1}   
+ 4 \bcmp \eta_t^2 B_\infty/\lambda$.
\label{thm.Koopman.FT.general}
\end{theorem}

The preceding result only requires $K$ to be Hilbert-Schmidt from $\Hcal$ to $\left[\Hcal\right]^\beta$ where the constant $\beta$ reflects the degree of mis-specification in operator learning. 
The number of required samples is independent of the dimension of the state space of the underlying data. This observation is useful for solving problems where the state space is high-dimensional. 
Finally, we remark that by \eqref{eq.mixing0}, the condition $t \geq \tau_*$ can be satisfied as long as $\eta_t$ does not decay faster than $e^{-(t/B_\mix-1)}$.

To better illustrate Theorem \ref{thm.Koopman.FT.general}, we now specialize them under two types of stepsize choices. 
The proof of these results is included in Appendix \ref{thm.FT.const.pf} and Appendix \ref{thm.FT.diminish.pf}. 
\begin{assumption}
The stepsize is constant, i.e., $\eta_t = \eta$, $\forall t \in \Nset$.
\label{assumption.SA-constant}
\end{assumption}
\begin{corollary}
Let Assumptions \ref{assumption.polk}, \ref{assumption.src}, \ref{assumption.3}, \ref{assumption.mixing} and \ref{assumption.SA-constant} hold. If $\eta \tau_\eta \leq \lambda/\check{B}$, then for all $t \geq \tau_\eta$, 
\begin{align}
\E \left[\vnorm{\left[K_t\right] - K }_{\HS(\Hcal,\left[\Hcal\right]^\beta)}^2 \right]
\leq  \Theta_2  \left(1 -\lambda \eta \right)^{t-\tau_\eta} + 
 \Theta_3 \eta
+2 \Bsrc^2,
\label{eq.thm.FT.const}
\end{align}
where 
$\Theta_2:=\frac{8 B_\infty^{3-\beta}}{\lambda^2}$,
$\Theta_3:= 2 B_\infty^{1-\beta} \left(\check{B}  \tau_\eta \left(\vnorm{U_\lambda}_{\HS}+1\right)^2  
+ 4 \bcmp B_\infty/\lambda \right)$.
\label{corollary.FT.const}
\end{corollary}

Since $\delta \tau(\delta) \leq B(\delta \log(1/\delta) + \delta) \to 0$, the condition on stepsize can be satisfied. 
In the above result, $\Theta_3$ captures the effect of sparsification through $\bcmp$ defined in Assumption \ref{assumption.3}. After an initial transient period, the error decays exponentially fast in the mean square sense and the iterates converge to a ball centered at $K$, with a radius depending on the stepsize $\eta$, sparsification budget $\ve$ (through $\bcmp$), the regularization parameter $\lambda$, and the degree of mis-specification encoded in $\Bsrc$. 
The dependency of the quality of the learned parameter on the sparsification budget in finite time lies in sharp contrast to the asymptotic independence of the same.

\begin{assumption} 
Assume $\eta_t = \frac{\eta}{(t+r)^a}$ for all $t \in \Nset$ for some $a \in (0,1)$, $r >0$. 
\label{assumption.SA-diminishing}
\end{assumption}

\begin{corollary}
Let Assumptions \ref{assumption.polk}, \ref{assumption.src}, \ref{assumption.3}, \ref{assumption.mixing} and \ref{assumption.SA-diminishing} hold. In addition, assume $r$ in Assumption \ref{assumption.SA-diminishing} is chosen such that $\eta_{t-\tau_t,t-1}\leq \lambda/\check{B}$ for all $t \geq \tau_t$, and $\tau_t \geq (\frac{2a}{\lambda \eta})^{\frac{1}{1-a}}$.
Define $\Theta_4\left(t+r\right)= 2  \left( B_\mix \check{B} \left(\log\left(t+r\right) - \log\left(\eta\right) +1 \right)
\left(\vnorm{U_\lambda}_{\HS}+1\right)^2 + 4 \bcmp B_\infty/\lambda \right)$.
Then for all $t \geq \tau_*$,
\begin{align}
\begin{aligned}
\E \left[ \vnorm{\left[K_{t}\right] - K }_{\HS(\Hcal,\left[\Hcal\right]^\beta)}^2 \right] 
\leq & \Theta_2 \exp\left(-\frac{\lambda \eta}{1-a} \left(\left(t+r\right)^{1-a} -\left(\tau_t+r\right)^{1-a}\right) \right) \\
&+ \frac{4 \eta B_{\infty}^{1-\beta} }{(t+r)^a}
 \Theta_4\left(t+r\right)
+2 \Bsrc^2.
\end{aligned}
\end{align}
\label{thm.FT.diminish}
\end{corollary}

Due to Assumption \ref{assumption.3}(b), the sparsification budget decays faster than the stepsize, and the asymptotic error only depends on the regularization parameter and $\Bsrc$, where the latter encodes the degrees of mis-specification. In other words, we attain accuracy at the price of model complexity in this result. \rev{We remark that with the choice of $\eta_t = \eta/(t+r)^{a}$ with $\tau_t \sim \log(t+r)$, one can always satisfy the requirements in the statement of the corollary by tuning $\eta$. Also, the upper bound for large $t$ scales as $t^{-a}\log(t)$ added to a bias that arises due to the regularity of the Markov kernel.}

\section{Applications}
\label{sec.example}

\subsection{Analyzing Unknown Nonlinear Dynamics}
Consider the unforced Duffing oscillator, described by 
$\ddot{z}=-\delta \dot{z}-z\left(\beta+\alpha z^{2}\right)$,
with $\delta=0.5$, $\beta=-1$, and $\alpha=1$,
where $z\in\Rset$ and $\dot{z}\in\Rset$ are the scalar position and velocity. 
Let $x=(z,\dot{z})$, as shown in Figure \ref{fig.duffing.phase}, the Duffing dynamics exhibits two ROAs, corresponding to stable equilibrium points at $x=(-1,0)$ and $x=(1, 0)$. In this experiment, we leverage the eigenfunction of the learned Koopman operator to characterize the regions of attraction. 
Our data consists of $3550$ streaming sample pairs collected 
over region $[-2,2]\times[-2,2]$ with sampling interval $\tau = 0.25$s.
Figure \ref{fig.duffing}-\ref{fig.duffing.bad} portrays heat maps of the leading eigenfunctions of $K$ after $3550$ iterations with various values of budget $\ve$.
Upon increasing $\ve$, the dictionary becomes sparser with fewer elements. 
As shown in Figure \ref{fig.duffing.cmp}, the resulting eigenfunctions accurately reveal the distinct ROAs, even with merely \emph{$8\%$} of total data points. And the characterization becomes less sound with higher $\ve$ as the algorithm discards too many points. More details, such as the choice of kernel function and step sizes, are provided in the extended version of this paper \cite[Section 6.1]{hou2024nonparametric}. Notice that while our theoretical analysis is restricted to a single trajectory on a uniformly exponentially ergodic system, the Duffing oscillator with two ROAs violates it. Yet, the method, when applied and averaged across multiple long trajectories, yields the ROAs, demonstrating its efficacy.

\begin{figure}[ht]
    \centering
     \subfigure[]{\includegraphics[width=0.19\linewidth]{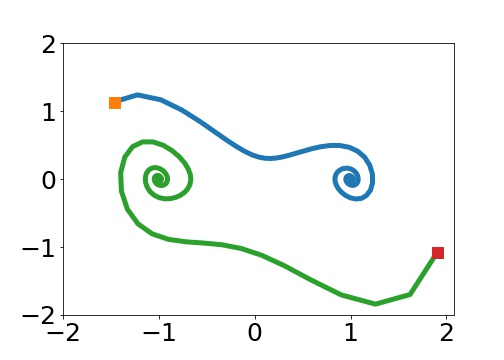}
     \label{fig.duffing.phase}}
     \hspace{0.1in}
     \subfigure[{\small $\ve=0$}]{\includegraphics[width=0.2\linewidth]{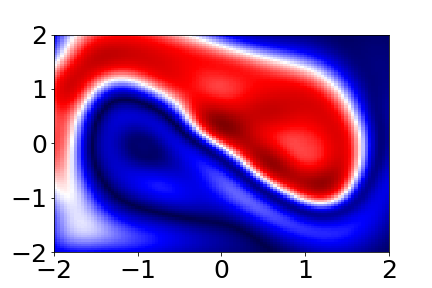}\label{fig.duffing}}
     \hspace{0.1in}
     \subfigure[{\small $\ve = 2 \eta^3$}]{\includegraphics[width=0.2\linewidth]{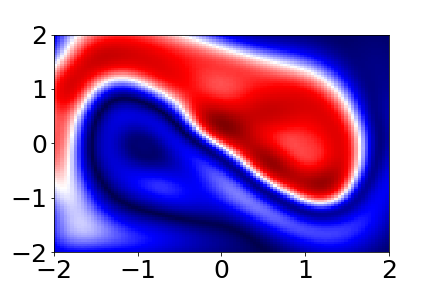}\label{fig.duffing.cmp}}
     \hspace{0.1in}
     \subfigure[{\small $\ve = 1.5 \eta^2$}]{\includegraphics[width=0.2\linewidth]{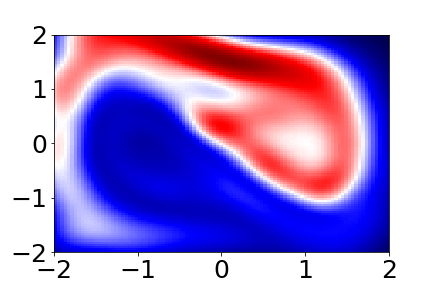}\label{fig.duffing.bad}}
    \vspace{-0.1in}
    \caption{(a) Two trajectories of the Duffing oscillator that converge to two different equilibrium points. (b)-(d) Leading eigenfunction of $K$ with eigenvalue $1$ at $t=3550$ under various compression budget with
    (b)$\ve = 0$,$|\Dcal_t| = 3550$,
    (c) $\ve = 2 \eta^3$,$|\Dcal_t| = 300$, and
    (d) $\ve = 1.5 \eta^2$,$|\Dcal_t| = 190$.}
    \vspace{-0.35in}
\end{figure}

\subsection{Model-Based Reinforcement Learning}
\label{sec.RL}
While previous sections focused on uncontrolled dynamical systems, the proposed sparse online learning framework can be extended to Markov decision processes (MDPs) by using CME--the adjoint of the Koopman operator in RKHS. Specifically, consider an MDP with compact state and action spaces $\Xset$ and $\Uset$ which are subsets of finite-dimensional Euclidean subspaces. The state dynamics are described by a transition kernel function $x_{t+1}\sim p(\cdot|x_t,u_t)$, where $x_{t}\in \Xset$, $u_{t}\in \Uset$, and $x_{t+1}\in \Xset$. The value function at $x\in \Xset$, i.e., the expected cost starting from state $x$, satisfies 
\begin{equation}\label{eq.bellman}
    (\Bcal V)(x): = \min_{u\in \Uset}\left\{c(x,u) + \gamma \E[V(X^+)|(x,u)]\right\},
    \end{equation}
where $c\,:\,\Xset\times \Uset\to \Rset$ is the instantaneous cost function, and $\gamma\in (0,1)$ is a discount factor. Starting from an arbitrary $V_0$, the sequence $\{V_k\}$ defined via value iteration steps $V_{k+1} = \Bcal V_k$ converges in sup-norm to an optimal value function \cite{szepesvari2022algorithms}.
Let $\Zset = \Xset\times \Uset$ and $Z$ be a $\Zset$-valued random variable. For $f\in \Hcal_X$, the mapping $f\mapsto\E[f(X^+)|Z]$ can be implemented using the CME defined in \eqref{eq:def.muYx} as 
$\E_{X^+|z}[f(X^+)|Z] = \langle f,\mu_{X^+|Z}\rangle$, per \cite{grunewalder2012modelling},
where $\mu_{X^+|Z}$ is the CME of $X^+$ given current state-action pair $z=(x,u)$. With an estimate of $\hat{\mu}$ given by Algorithm \ref{algorithm.main} as $\mu_t = U_t \phi(\cdot)$, we can approximate this mapping along with the value function estimate $\hat{V}$. A corresponding greedy policy $\pi_{\hat{\mu}}$ can be executed at any state $x\in \Xset$ via 
\begin{equation}\label{eq.greedy_policy}
\pi_{\hat{\mu}}(x) = \underset{u\in \Uset}{\arg\min}\left\{r(x,u) + \gamma\left\langle \hat{\mu}_{X^+|(x,u)},\hat{V}\right\rangle\right\}.
\end{equation}

We now consider an online, sparse variant of the value iteration process. 
Given dataset $\{(x_i,u_i,x^+_i)\}_{i=1}^m$ and an associated weighting matrix $W$ calculated via Algorithm \ref{algorithm.main}, an estimate of $\mu_{X^+|Z}$ for a given $z=(x,u)$ is computed as
\begin{equation}\label{eq:CME_update}
    \hat{\mu}_{X^+|(x,u)} = \sum_{i=1}^m\alpha_i(x,u)\kappa_X(x^+_i,\cdot), \quad
    \alpha_i(x,u) = \sum_{j=1}^mW^{ij}\kappa_Z((x_j,u_j),(x,u))
\end{equation}
per \cite{grunewalder2012modelling}. 
Assuming that the desired value function $V\in \Hcal_X$, we have 
\begin{equation}
    \E_{X^+|(x,u)}[V(X^+)] \approx \langle \hat{\mu}_{X^+|(x,u)},V \rangle =\sum_{i=1}^m\alpha_i(x,u)V(x^+_i). 
\end{equation}
Thus, for policy iteration, it suffices to estimate the value function at each $x^+_i$ in the given dataset. This further implies that we need only compute weights $\alpha_i(x,u)$ for each $i$ at $m$ points and $u$ drawn from a finite subset of $\Uset$, e.g., a uniformly spaced grid. 

We applied the sparse online value iteration mechanism to the pendulum dynamics implemented in the OpenAI Gym package \cite{brockman2016openai}. The approximated continuous system is governed by $\ddot{\theta}(t) =
(3g/2l) \sin \theta(t) + (3/ml^2)u(t)$, where $\theta$ is the pendulum angle, $g$ is the gravitational constant, $l =$ 1m is the pendulum length and $m=1$kg is the pendulum mass. The state space $\Xset$ is a subset of $\Rset^3$, with entries of the form $(\sin\theta,\cos\theta,\dot{\theta})$, where the angular velocity $\dot{\theta}$ is restricted to $[-8,8]$ and the action space (applied torque) $\Uset$ is the interval $[-2,2]$. Starting from an arbitrary initial state, the goal is to swing up and balance the pendulum in the inverted position. For discrete time-step $k$, the instantaneous cost function is $r(\theta[k],\dot{\theta}[k],u[k]) = -\left(\theta[k]^2 + 0.1\dot{\theta}[k]^2 + 0.001u[k]^2\right)$, where $\theta[k]$ is wrapped between $[-\pi,\pi]$. Episodes terminate after 200 steps. While the highest possible cumulative episode reward is 0, there is no particular performance-based threshold for us to declare that the pendulum balancing task is solved. A score of approximately $-400$ or higher usually indicates that the pendulum was brought upright near the goal position for a significant portion of the episode. As a baseline, high-resolution dynamic programming solutions using full knowledge of the system dynamics achieve average episode scores of roughly $-130$, per \cite{hou2023decentralized}.

In our experiments, we segmented our value iteration approach into stages as follows. Let $\Dcal_{\ell-1}$ denote the dictionary after completion of stage $\ell-1$ with set of indices $\Ical_{\ell-1}$. During stage $\ell$, $n_{\text{new}}$ data points $\Dcal_{\text{new}} = \{(x_i,u_i,x^+_{i+1})\}_{i=1}^{n_{\text{new}}}$ are generated by rolling out trajectories according to behavioral policy $\pi_\ell$. Algorithm \ref{algorithm.main} is executed on this new batch of data points, starting with initial dictionary $\Dcal_{\ell-1}$, yielding the updated dictionary $\Dcal_{\ell}\subset \Dcal_{\ell-1}\cup\Dcal_{\text{new}}$ with index set $\Ical_\ell$, and weight matrix $W_\ell$. A greedy policy with respect to dataset $\Dcal_{\ell}$ may then be derived using \eqref{eq.greedy_policy} and \eqref{eq:CME_update}. 

We implemented this approach, choosing $n_{\text{new}}=400$, so that $\Dcal_{\text{new}}$ consists of two new episode length trajectories, giving 400 new points prior to compression via Algorithm \ref{algorithm.main} with constant step size $\eta = 10^{-4}$ and 
$\varepsilon = 8.91\times 10^{-5}$  per iteration stage. We use the Gaussian kernel with a bandwidth parameter of $0.167$.   The behavioral policy $\pi_k$ in each iteration $k$ selected actions uniformly from $\Uset$ at each step. Other choices for $\pi_{\ell}$ include a greedy or $\epsilon$-greedy policy derived from the last value function estimate $V_{\ell}$. 
The upper plot in Figure \ref{fig.RL.bar} compares the performance of our CME value iteration (CME VI)-based controllers to the reference dynamic programming solution as the number of trajectories incorporated increases. As plotted, the median CME VI policy performance score approaches the reference, while the empirical score distribution concentrates toward the maximum cumulative reward.
At the same time, the lower plot in Figure \ref{fig.RL.bar} shows that our algorithm can achieve the task with control over model complexity via sparsification. In other words, our method presents a means by which dataset size and associated computational complexity can be balanced with performance. 
For example, the CME VI-based controller at stage 19 uses 6000 points, a 25\% reduction compared to the full dataset size of 8000. Finally, Figure \ref{fig.RL.pendulum} illustrates the value function convergence accompanying the performance increase seen in Figure \ref{fig.RL.bar}. As the dataset size increases, the estimated value functions capture important features of the reference such as the high-value diagonal passing through the stationary, upright pendulum position. 


\begin{figure}[ht]
    \centering
     \includegraphics[width=0.8\linewidth]{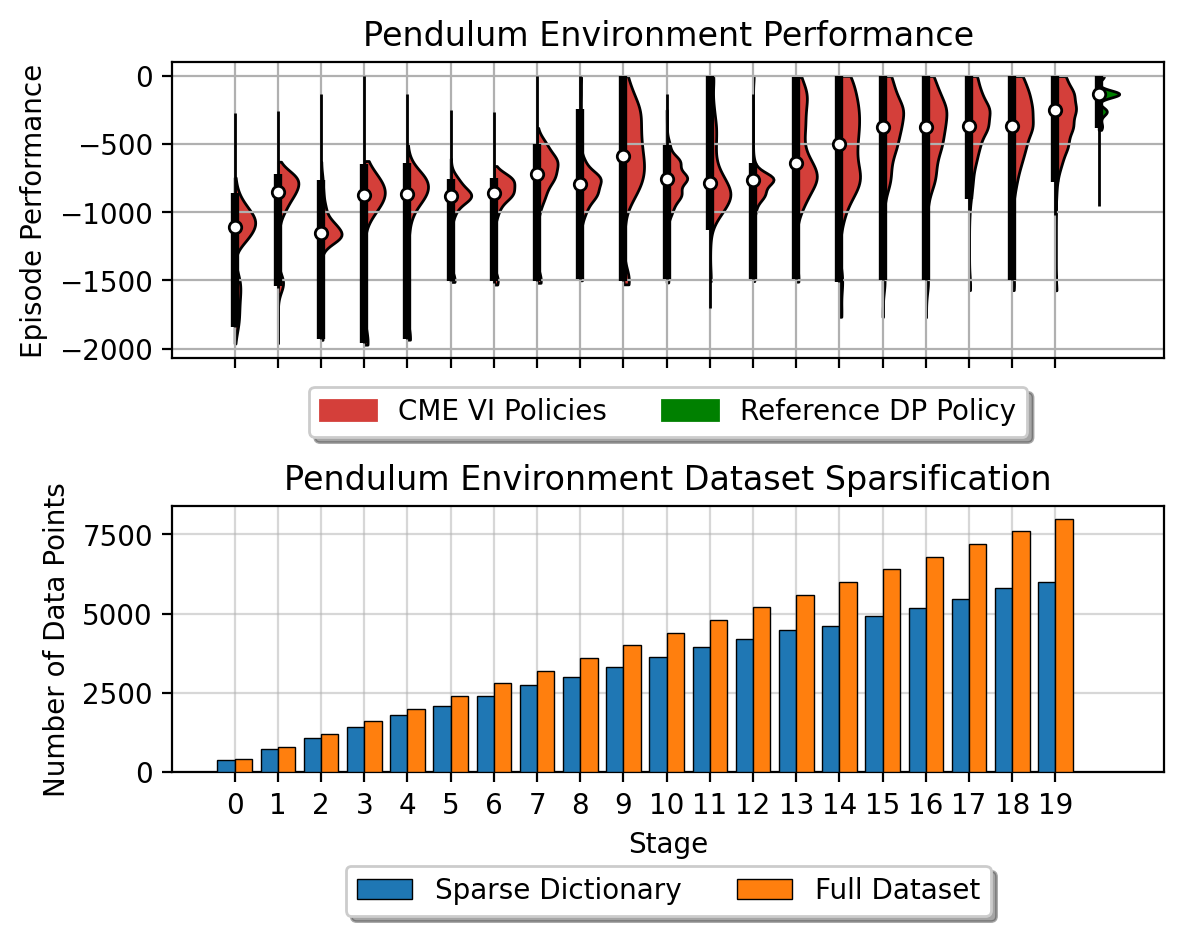}
     \label{fig.pendulum_data}
    \caption{(Top) White dots, bold bars, and whiskers give median, 95\% confidence intervals, and extreme values, respectively, over 1000 episodes. (Bottom) Growth of sparsified and full dataset with iteration stage.}
    \vspace{-0.1in}
    \label{fig.RL.bar}
\end{figure}

\begin{figure}[ht]
    \centering
     \subfigure[Stage 0]{\includegraphics[width=0.19\linewidth]{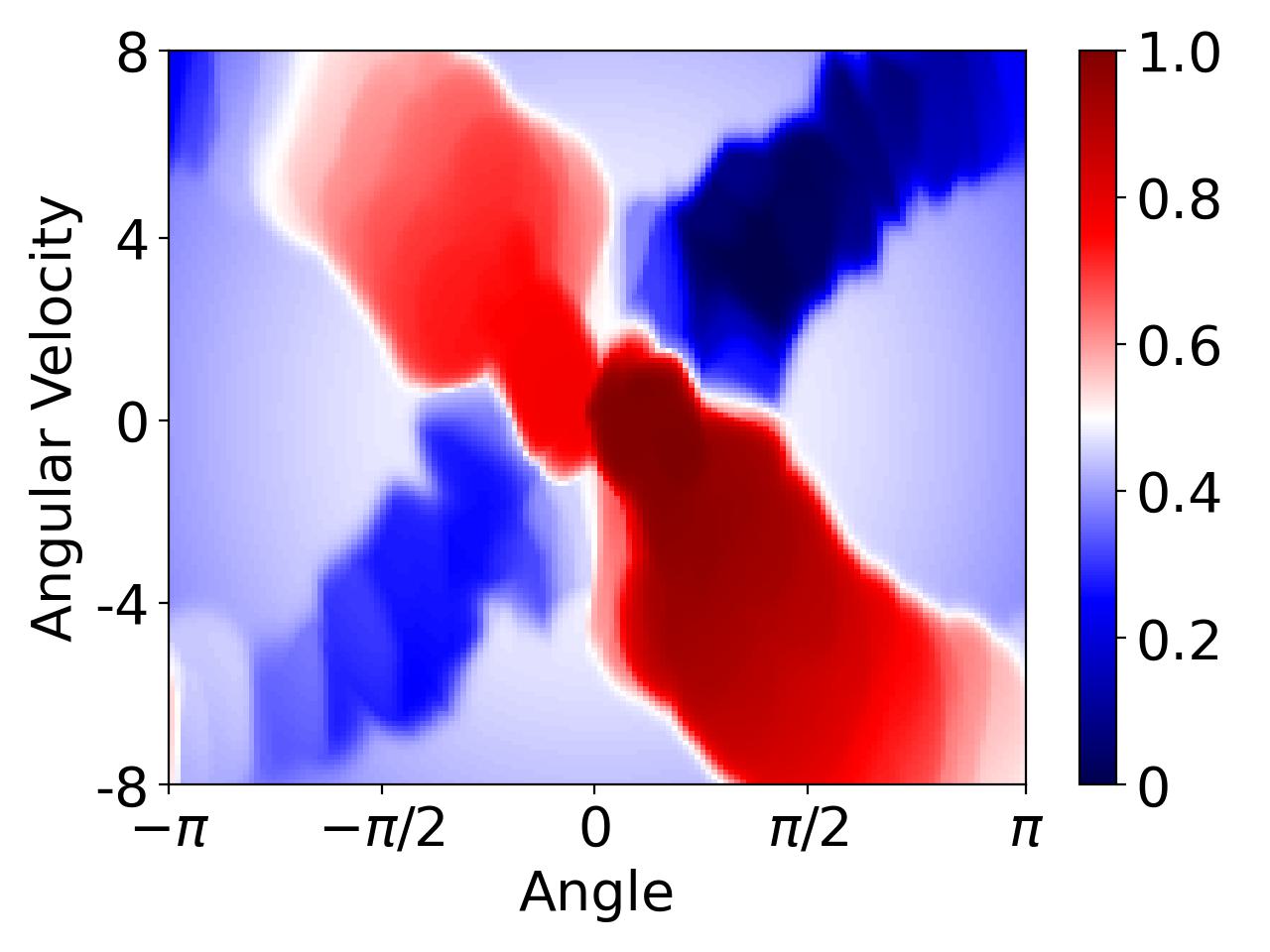}}
     \label{fig.val_fcns}
\subfigure[Stage 7]{\includegraphics[width=0.19\linewidth]{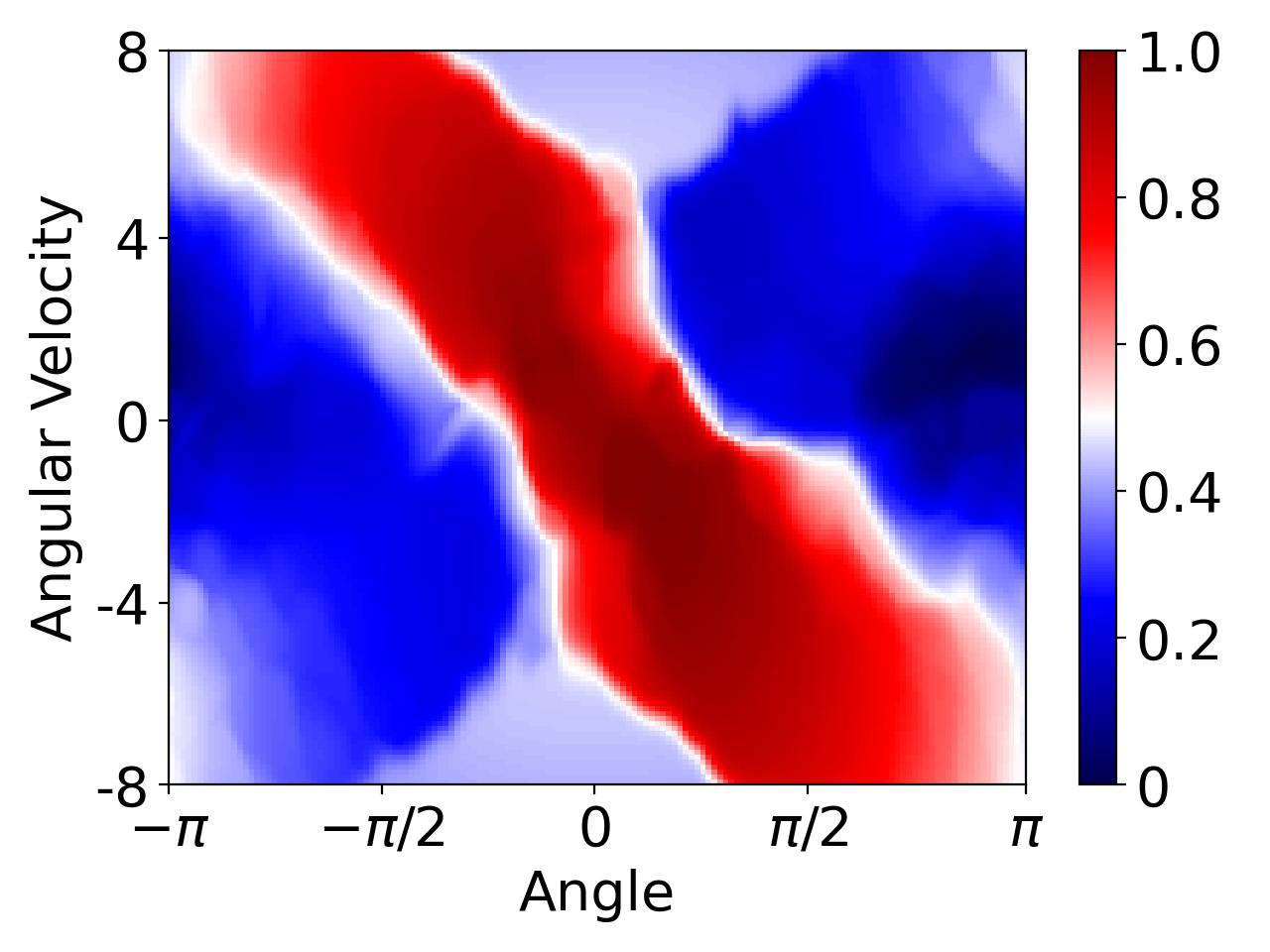}}
\subfigure[Stage 14]{\includegraphics[width=0.19\linewidth]{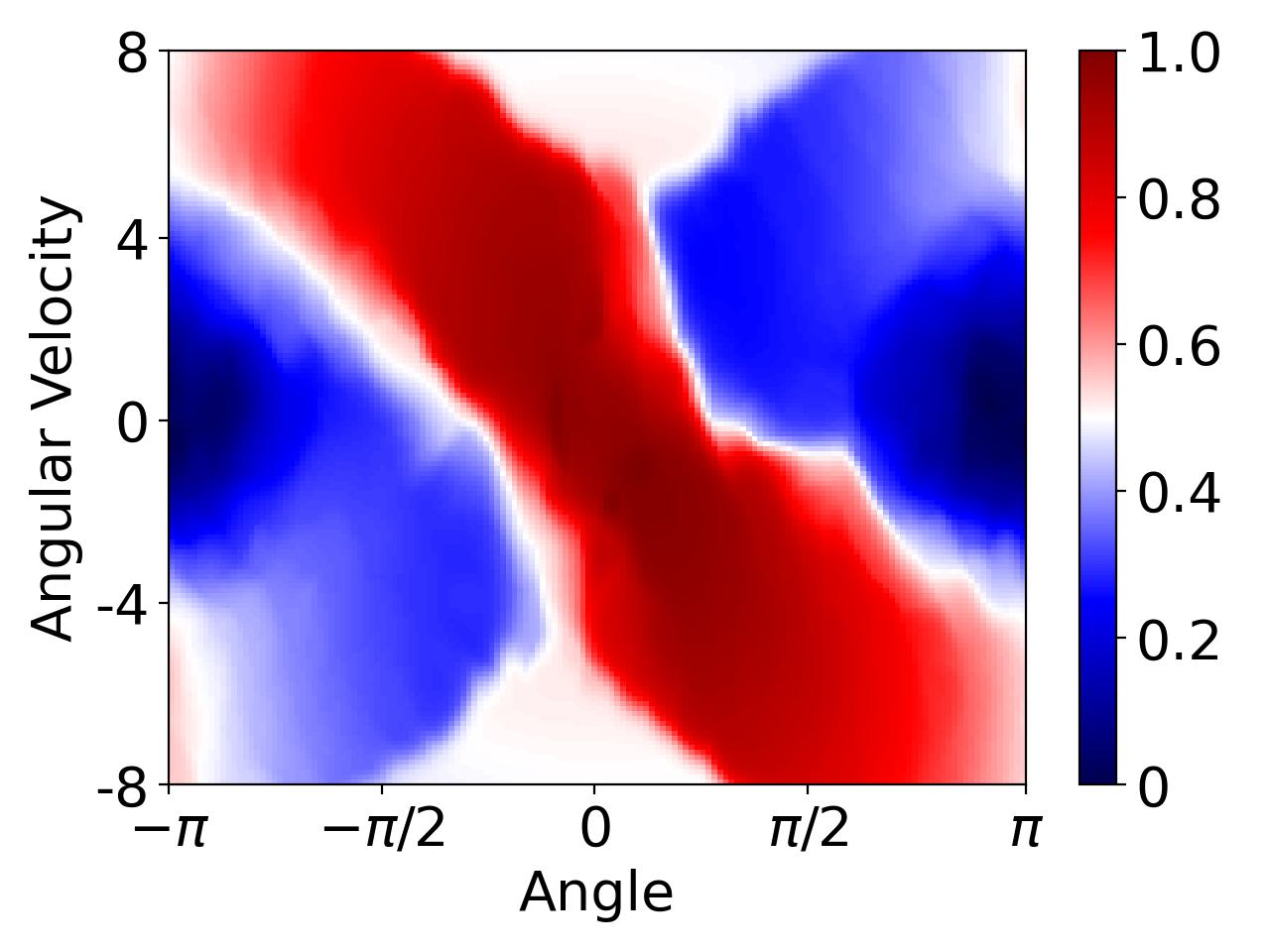}}
\subfigure[Stage 19]{\includegraphics[width=0.19\linewidth]{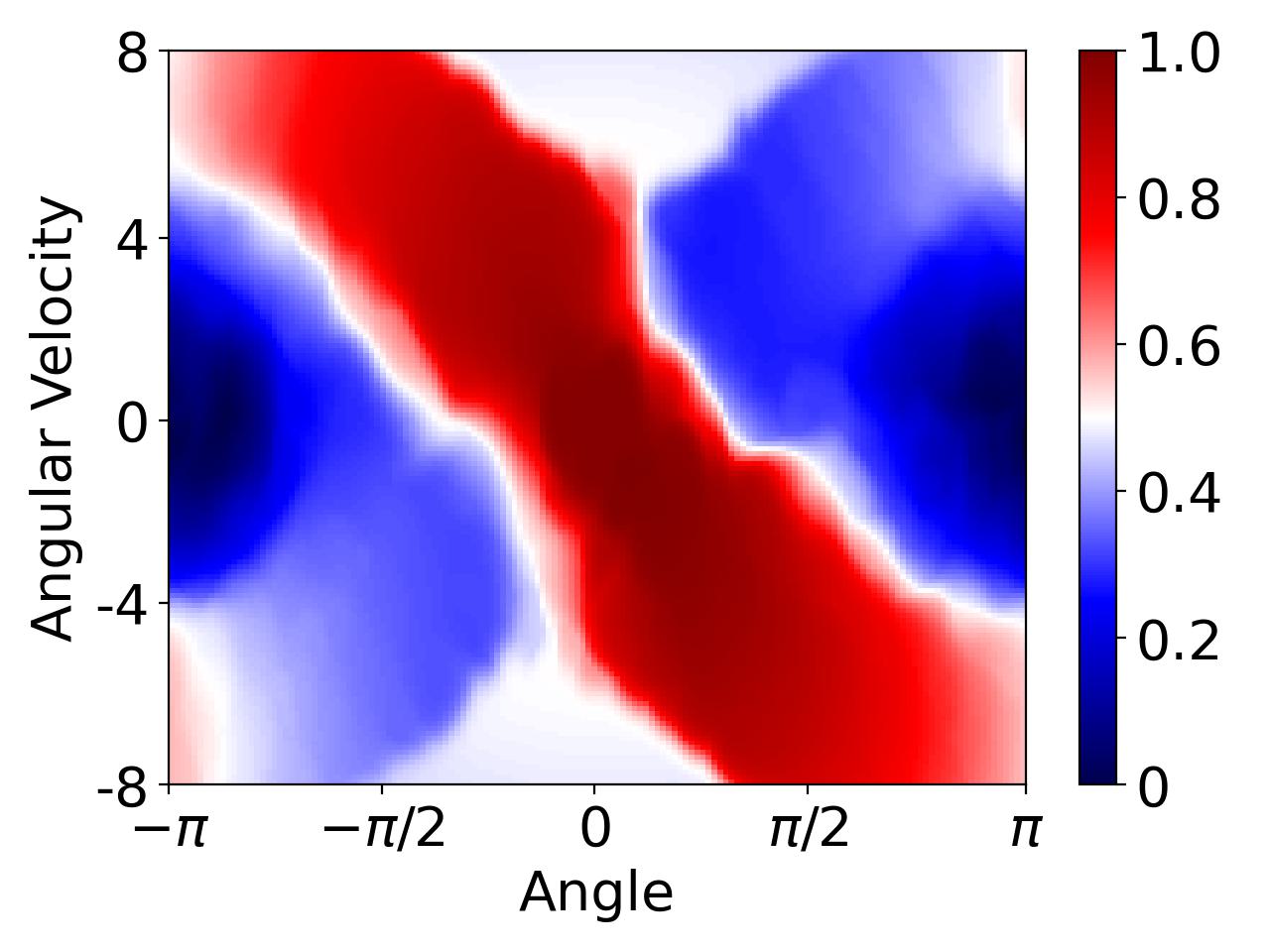}}
\subfigure[DP Reference]{\includegraphics[width=0.19\linewidth]{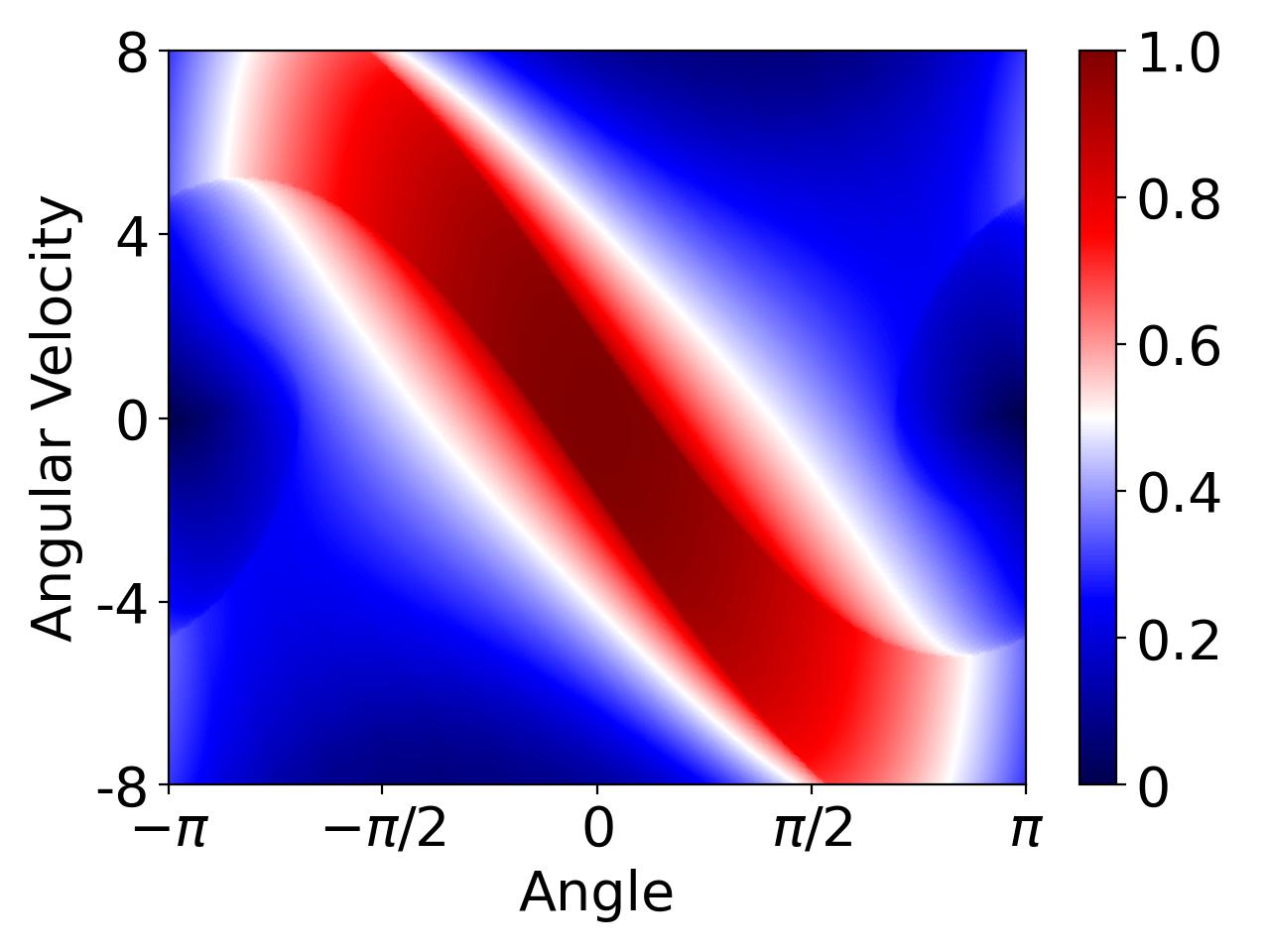}}
\caption{Normalized value functions with an increasing number of iterations. }
    \vspace{-0.1in}
    \label{fig.RL.pendulum}
\end{figure}

\section{Discussions and Conclusions}
\label{sec.conclusion}

In this paper, we present an online learning algorithm that learns a sparse Koopman operator in RKHS with sampling from trajectories. Our method does not require the RKHS to be closed under the Markov kernel. We established the asymptotic and finite-time convergence guarantee of the online sparse SOGD algorithm. The computational framework was illustrated on a simple dynamical system example and model-based RL.

We conclude the paper with a few important remarks. First, discretizations of continuous-time nonlinear dynamical systems naturally lend themselves to Markov models. Nonlinear systems theory provides a gamut of tools to analyze the qualitative behavior of such systems. While the operator-theoretic viewpoint allows analysis of such behavior through the spectrum of the operator, the ability to do so with data in a computational framework vitally depends on the choice of the function space. Our work is focused on an RKHS; the interdependency of the choice of the function space and the qualitative behaviors that can be studied remains an interesting direction for research. Second, notice that our mis-specified setting is based on the assumption that $K$ maps functions from $\Hcal$ to another that is equivalent to a function in an intermediate space $[\Hcal]^\beta$ in the $L_2(\rho_X)$ sense. For trajectory-based sampling, the unique invariant measure provided an easy candidate for $\rho_X$. Our numerical example suggests that there may be other viable candidates for analysis even when such unique invariant measures do not exist. Moreover, our analysis builds upon intermediate spaces defined via $\rho_X$. One can view $\rho_X$ as a base measure whose interaction with the kernel determines what properties of the operator we can learn. The mis-specified setting has the downside that it limits the ability to meaningfully perform multi-step predictions. While one can always build a predictor using $U_\lambda$, the bias from lack of regularity will preclude meaningful bounds on the errors. Understanding such effects more fully defines an interesting research question. Finally, understanding the tightness of our derived bounds and a fair comparison with the sample-average approximation under similar settings will provide insights into the pros and cons of data collection methods in data-driven model learning.

\appendix
\crefalias{section}{appendix}
\section{Tensor Product Hilbert Spaces and Hilbert-Schmidt Operators}
\label{sec.appendix.HS}

This appendix serves as a primer on tensor product Hilbert spaces and Hilbert-Schmidt operators; see \cite[Chapter 12]{aubin2011applied} for a detailed exposition.
Consider two separable real-valued Hilbert spaces $\Hcal_1$ and $\Hcal_2$ defined on separable measurable spaces $\Xset$ and $\Yset$, respectively. Let $\{e_i\}_{i \in \Nset}$ be an orthonormal basis (ONB) of $\Hcal_1$. A bounded linear operator $A: \Hcal_1 \to \Hcal_2$ is a Hilbert-Schmidt (HS) operator if $\sum_{i \in \Nset} \vnorm{Ae_i}_{\Hcal_2}^2 < \infty$. The quantity $\vnorm{A}_{\HS} = \left(\sum_{i \in \Nset} \vnorm{A e_i}_{\Hcal_2}^2\right)^{1/2}$ is the Hilbert-Schmidt norm of $A$ and is independent of the choice of the ONB. 
For two HS operators $A$ and $B$ from $\Hcal_1$ to $\Hcal_2$, their Hilbert–Schmidt inner product is
\begin{align}
 \left\langle A ,B \right \rangle_{\HS(\Hcal_1,\Hcal_2)}  
  = \Tr{(A^*B)}
  = \sum_{i \in \Nset} \left\langle Ae_i ,B e_i \right \rangle_{\Hcal_2}.
  \label{eq.appendix.HS.innerproduct}
\end{align}

For a Hilbert-Schmidt operator $A$ and a bounded linear operator $B$, we have
\begin{align}
&\vnorm{A}_\HS = \Tr{(A^*A)}^{1/2}, \quad
\vnorm{A}_\HS = \vnorm{A^*}_\HS, \quad 
\vnorm{A}_\op \leq \vnorm{A}_\HS,
\label{eq.60}\\
&\vnorm{BA}_\HS \leq \vnorm{B}_\op \vnorm{A}_\HS,
\quad \vnorm{AB}_\HS \leq \vnorm{A}_\HS \vnorm{B}_\op,
\label{eq.61}
\end{align}
where $A^*$ is the adjoint of $A$ and $\vnorm{A}_\op$ is the operator norm of $A$.
Let $f \in \Hcal_1, g \in \Hcal_2$, the tensor product $f \otimes g: \Hcal_2 \to \Hcal_1$ can be viewed as the linear rank-one operator defined by $(f \otimes g)h = \left\langle h,g\right \rangle_{\Hcal_2} f$ for all $h \in \Hcal_2$. 
Thus, for any bounded linear operator $A$ from $\Hcal_1$ to itself, we have,
\begin{align}
A \left(\left(f \otimes g\right) h \right)
= A \left(\left\langle h,g\right \rangle_{\Hcal_2} f \right)
= \left\langle h,g\right \rangle_{\Hcal_2} \left(A f\right)
= \left(\left(Af\right) \otimes g\right) h, \quad f\in \Hcal_1, h \in \Hcal_2.
\label{eq.tensorproduct}
\end{align}
That is,
$A \left(f \otimes g\right) = \left(Af\right) \otimes g$. 
Furthermore, if $\{e_i\}_{i \in \Nset}$ is an orthonormal systems (ONS) of $\Hcal_1$ and $\{e'_j\}_{j \in \Nset}$ is an ONS of $\Hcal_2$, then $ \{e_i\otimes e'_j\}_{i,j \in \Nset}$ is an ONS of $\Hcal_1 \otimes \Hcal_2$.

Now consider $f \in \Hcal_1$, $g \in \Hcal_2$ and $A$ is an HS operator mapping from $\Hcal_2$ to $\Hcal_1$. Let $(e_i)_{i\in \Nset}$ be an orthonormal basis of $\Hcal_2$. Then we have the Fourier series expansion of $g \in \Hcal_2$ as $g = \sum_{i\in\Nset} \left\langle g ,e_i \right \rangle_{\Hcal_2}e_i$.
Therefore, using \eqref{eq.appendix.HS.innerproduct}, we have
\begin{align}
\begin{aligned}
\left\langle f \otimes g ,A \right \rangle_{\HS}    
= \sum_{i \in \Nset} \left\langle \left(f \otimes g\right) e_i , Ae_i \right \rangle_{\Hcal_1} 
=& \sum_{i \in \Nset} \left\langle\left\langle g ,e_i \right \rangle_{\Hcal_2}f , Ae_i \right \rangle_{\Hcal_1} \\
=& \sum_{i \in \Nset} \left\langle g ,e_i \right \rangle_{\Hcal_2} \left\langle f , Ae_i \right \rangle_{\Hcal_1} \\
=& \sum_{i \in \Nset} \left\langle g ,e_i \right \rangle_{\Hcal_2} \left\langle A^*f , e_i \right \rangle_{\Hcal_2} \\
=& \left\langle \left\{\left\langle g ,e_i \right \rangle_{\Hcal_2}\right\}_{i \in \Nset}, \left\{\left\langle A^*f , e_i \right \rangle_{\Hcal_2}\right\}_{i \in \Nset} \right \rangle_{l_2(\Nset)}.
\end{aligned}
\end{align}
Since a separable Hilbert space is isomorphic to $l_2(\Nset)$ \cite[Theorem 1.7.2]{aubin2011applied}, let
$T(g) = \left\{\left\langle g ,e_i \right \rangle_{\Hcal_2}\right\}_{i \in \Nset}$ denote such an isomorphism $T : \Hcal_2 \mapsto l_2(\Nset)$ then we have
\begin{align}
\begin{aligned}
\left\langle f \otimes g ,A \right \rangle_{\HS}    
= \left\langle T(g), T(A^*f) \right \rangle_{l_2(\Nset)}
= \left\langle g, A^*f \right \rangle_{\Hcal_2}
= \left\langle f, Ag \right \rangle_{\Hcal_1}.
\end{aligned}
\label{eq.66}
\end{align}



\section{Learning in Intermediate Spaces}
\label{sec.appendix.interpolation}


By the spectral theorem for self-adjoint compact operators \citep[Theorem V.2.10]{kato2013perturbation}, the integral operator $L_\kappa$ defined in \eqref{eq.L_kappa} enjoys the spectral representation \eqref{eq.L_kappa_spectral} which is convergent in $\Lsq$, and $\Lsq =\ker  L_\kappa \oplus \overline{\text{span} \left(\left[e_i\right],i \in \Iset \right)}$.
We show that $\left(\sigma_i^{1/2} e_i \right)_{i\in \Iset}$ is an ONB of $\left(\ker \emd\right)^\perp$.
Define the adjoint of $\emd$ by $\emd^*: \Lsq  \to \Hcal$.
Since $[e_i]$ is an ONS of $\Lsq$, let $e_i:= \sigma_i^{-1} \emd^* [e_i] \in \Hcal$. We then have for all $i \in \Iset$,
\begin{align}
    \sigma_i e_i = \emd^* [e_i]
    \implies
    \sigma_i \sigma_j \langle e_i, e_j \rangle_\Hcal
   = \langle \emd^* [e_i] , \emd^* [e_j] \rangle_\Hcal
   = \langle [e_i] , \emd \emd^* [e_j] \rangle_{\Lsq}.
\end{align}
Recall that $L_\kappa = \emd \emd^*$ and $ L_\kappa [e_j] = \sigma_j [e_j]$, which then implies
\begin{align}
   \sigma_i \sigma_j \langle e_i, e_j \rangle_\Hcal
   = \langle [e_i] , L_\kappa [e_j] \rangle_{\Lsq}
   = \sigma_j \langle [e_i] ,[e_j] \rangle_{\Lsq}.
\end{align}
The right-hand side of the above relation equals $\sigma_i$, when $j=i$, and is zero otherwise. 
Therefore, $\left(\sigma_i^{1/2} e_i \right)_{i\in \Iset}$ is an ONS in $\Hcal$.
Since $\emd^*[f] = 0$, if $[f] \in \ker L_\kappa$, we have
\begin{align}
\overline{\text{range} \left(\emd^*\right)} 
=\overline{\text{span}}\left\{ \sigma_i^{1/2} e_i, i \in \Iset \right\}.
\end{align}
In addition, from \citep[Theorem 12.10]{rudin1991functional}, since $\emd$ is a bounded operator from $\Hcal$ to $\Lsq$, we also have
$\overline{\text{range} \left(\emd^*\right)} = \left(\ker \emd\right)^\perp$.
Thus, we conclude that $\left(\sigma_i^{1/2} e_i \right)_{i\in \Iset}$ is an ONB of $\left(\ker \emd\right)^\perp$. 

In addition, for any $f\in \Lsq$ and $h\in \Hcal$, we have 
\begin{align}
\left \langle \emd^* [f] , h \right \rangle_\Hcal
=\left \langle [f] , \emd h \right \rangle_{L_2(\rho_X)}
= \int_{\Xset} g(x) \ h(x) \dd \rho_X (x), \quad \forall g\in [f].
\end{align}
Taking $h = \phi(x)$ for $x \in X$ yields 
    $\emd^* [f] = \int_{\Xset} \phi(x) g(x) \dd \rho_X$ , $\forall g \in [f]$.
In addition, since $\left(\phi(x) \otimes \phi(x)\right) \nu 
 = \left \langle \nu, \phi(x) \right \rangle_\Hcal \phi(x)$, for all $\nu \in \Hcal$, we have 
\begin{align}
\begin{aligned}
(\emd^* \emd ) \nu
= \emd^* (\emd  \nu )
= \int_{\Xset} \phi(x) \nu(x) \dd \rho_X (x) 
=& \int_{\Xset} \phi(x) \left \langle \nu , \phi(x) \right \rangle_\Hcal \dd \rho_X (x)\\
=& \int_{\Xset} \left(\phi(x) \otimes \phi(x)  \right ) \nu \dd \rho_X (x) .
\end{aligned}
\end{align}
Hence, the covariance operator $C_{XX}$ defined in Section \ref{sec.embedding} can also be written as $C_{XX} = \emd^* \emd$. 
Since we have shown that $\left(\sigma_i^{1/2}e_i\right)_{i \in \Iset}$ is an ONB of $\left(\ker \emd\right)^\perp$, $\left([{e}_i] \right)_{i \in \Iset}$ an ONB of $\overline{\text{range} \emd}$, and we have the spectral representation of $\kov$ with respect to the ONS $\left(\sigma_i^{1/2} e_i \right)_{i \in \Iset}$ in $\Hcal$. 
\begin{align}
    \kov = \sum_{i \in \Iset} \sigma_i \left \langle \cdot , \sigma_i^{1/2}e_i \right \rangle_\Hcal \sigma_i^{1/2} e_i, \quad
    \Hcal = \ker  \kov \oplus \overline{\text{span} \left(e_i,i \in \Iset \right)}.
\label{eq.kov.decomposition}
\end{align}

Finally, as $L_\kappa$ is a strictly positive operator, following \citep[Theorem 4.6]{steinwart2012mercer}, one can define the fractional power $L_\kappa^r: \Lsq \to \Lsq$ for any $r \in [0,\infty)$ as
    $L_\kappa^r [f] := \sum_{i \in \Iset} \sigma_i^r 
    \left\langle [f],[e_i] \right\rangle_{\rho}[e_i]$ for $ [f] \in \Lsq$.
Likewise, let $\left(\tilde{e}_i\right)_{i\in \Jset}$ be an ONB of $\ker \emd$ 
such that $\left(\sigma_i^{1/2} e_i \right)_{i\in \Iset} \cup \left(\tilde{e}_i\right)_{i\in \Jset}$ is an ONB of $\Hcal$. 
Using this notation, we have the following two spectral representations per \cite{fischer2020sobolev},
\begin{align}
\kov^{\frac{1-\gamma}{2}} =& \sum_{i \in \Iset} \sigma_i^{\frac{1-\gamma}{2}} \left \langle \cdot, \sigma_i^{1/2}e_i \right \rangle_\Hcal \sigma_i^{1/2} e_i,
\quad 0\leq \gamma \leq 1,
\label{eq.covariance.spectral.0} \\
\left(\kov+\lambda \id\right)^{-a}
=& \sum_{i \in \Iset} \left(\sigma_i +\lambda\right)^{-a} 
\left \langle \sigma_i^{1/2}e_i ,  \cdot \right\rangle_\Hcal \sigma_i^{1/2}e_i
+ \lambda^{-a} \sum_{j \in \Jset}  \left \langle \tilde{e}_j ,  \cdot \right\rangle_\Hcal \tilde{e}_j,
\quad a>0.
\label{eq.covariance.spectral}
\end{align}

\section{Proof of Theorem \ref{thm.Koopma-CME}}
\label{pf.thm2}

Let $K \in \HS(\Hcal,\left[H\right]^\beta)$. 
Recall that $\left(\left[e_i\right]\right)_{i \in \Iset}$ is an ONB of $\overline{\text{ran} \emd}$ in $\Lsq$. Since $\Hcal$ is separable, $K$ admits the decomposition
\begin{align}
K = \sum_{i\in\Iset}\sum_{j\in\Jset} a_{ij} 
\left( [e_i] \otimes d_j \right),
\end{align}
where $(d_j)_{j \in \Jset}$ is any basis of $\Hcal$. Since $([e_i] \otimes d_j )^* = d_j \otimes [e_i]$, its adjoint $K^*$ has the decomposition
\begin{align}
K^* = \sum_{i\in\Iset}\sum_{j\in\Jset} a_{ij}d_j \otimes [e_i],
\end{align}

By Lemma \ref{lemma.isomorphism}, there exists $\mu_K = \iota(K^*) \in [H_V]^\beta$. 
For any $g \in \Hcal$ and $\Aset \in \sigma(X)$,
\begin{align}
\begin{aligned}
\int_\Aset \left \langle g,  \mu_K  \right \rangle_\Hcal \dd \rho_X
=& \int_\Aset \left \langle g,  \iota \left(K^*\right) \right \rangle_{\Hcal} \dd \rho_X\\
\stackrel{(a)}{=}&  \left \langle g,  \int_\Aset \iota \left(\sum_{i\in\Iset}\sum_{j\in\Jset} a_{ij}d_j \otimes [e_i]\right) \right \rangle_{\Hcal} \dd \rho_X\\
\stackrel{(b)}{=}& \left \langle g,  \int_\Aset 
 \sum_{i\in\Iset}\sum_{j\in\Jset} a_{ij} \obar{e_i}\left(X\right) d_j  \right \rangle_{\Hcal} \dd \rho_X\\
=& \int_\Aset \sum_{i\in\Iset}\sum_{j\in\Jset} a_{ij} \int_\Aset  \left \langle g,d_j \right \rangle_{\Hcal}\obar{e_i} \left(X\right) \dd \rho_X \\
=& \int_\Aset \underbrace{\sum_{i\in\Iset}\sum_{j\in\Jset} a_{ij} \left( [e_i] \otimes d_j \right)}_{=K} g \left(X\right) \dd \rho_X.
\end{aligned}
\end{align}
In (a), we can exchange the order of the Bochner integral with a continuous linear operator per \citep[Theorem 36]{dinculeanu2000vector}. In (b), $\obar{e_i} \in [e_i]$ is arbitrary.

On the other hand, recall that the Koopman operator $K$ satisfies
\begin{align}
\begin{aligned}
\int_\Aset K g(X) \dd \rho_X
=\int_\Aset \E[g(X^+) | X] \dd \rho_X
\stackrel{(a)}{=}&\int_\Aset \E[\langle g, \phi(X^+) \rangle_{\Hcal} |X] \dd \rho_X \\
\stackrel{(b)}{=}&\int_\Aset \langle g, \E[\phi(X^+) |X] \rangle_{\Hcal} \dd \rho_X \\
=&\int_\Aset \langle g, \mu(X) \rangle_{\Hcal} \dd \rho_X.
\end{aligned}
\end{align}
Here, (a) follows from the reproducing property, and (b) holds due to the linearity of conditional expectation.

Hence, for any $g \in \Hcal$ and $\Aset \in \sigma(X)$, we have
\begin{align}
\begin{aligned}
\int_\Aset\left \langle g,  \mu_K  \right \rangle_{\Hcal} \dd \rho_X
 =\int_\Aset \langle g, \mu(X) \rangle_{\Hcal} \dd \rho_X . 
\end{aligned}
\end{align}
Therefore, we conclude $\mu = \mu_K \in [\Hcal_V]^\beta$, and $U = K^* \in \HS(\left[H\right]^\beta,\Hcal)$.

\section{Properties of $R_\lambda$ and Its Gradient}
\label{sec.appendix.gradients}

\subsection{Proof of Lemma \ref{lemma.CME.gradient}}
\label{lemma.CME.gradient.pf}

We start by showing $R_\lambda: \HS(\Hcal,\Hcal) \to \Rset$ in \eqref{eq.regression} is differentiable. 
For any $U, U'\in \HS(\Hcal,\Hcal)$, we have
\begin{align}
\begin{aligned}
&\lim_{h\to0}\frac{R_\lambda(U+h U') - R_\lambda(U)}{h} \\
=& \underbrace{\lim_{h\to0} \frac{1}{2h}
\E\left[\vnorm{\phi(x^+) - U \phi(x) - h U' \phi(x) }_{\Hcal}^2 - \vnorm{\phi(x^+) - U \phi(x)}_{\Hcal}^2 \right]}_{T_1} \\
&+ \underbrace{\lim_{h\to0}\frac{\lambda \vnorm{U+h U'}^2_{\HS} - \lambda\vnorm{U}_{\HS}^2}{2h}}_{T_2}.
\end{aligned}
\label{eq.OSDG-exp}
\end{align}
To compute $T_1$ , first notice that
\begin{align}
\begin{aligned}
&\frac{1}{2h}
\E\left[\vnorm{\phi(x^+) - U \phi(x) - h U' \phi(x) }_{\Hcal}^2 - \vnorm{\phi(x^+) - U \phi(x)}_{\Hcal}^2 \right] \\
=& \E\left[\frac{
- 2h \left \langle \phi(x^+) - U \phi(x) , U' \phi(x) \right \rangle_{\Hcal}
+ \vnorm{h U' \phi(x) }_{\Hcal}^2}{2h}\right] \\
=& \E\left[- \left \langle \phi(x^+) - U \phi(x) , U' \phi(x) \right \rangle_{\Hcal}
+ \frac{h}{2} \vnorm{U' \phi(x) }_{\Hcal}^2\right].
\end{aligned}
\end{align}
By Assumption \ref{assumption.polk}, the kernel function is bounded. Since $U,U'$ are Hilbert-Schmidt from $\Hcal$ to $\Hcal$, we can apply the dominated convergence theorem to obtain
\begin{align}
\begin{aligned}
T_1
=&\lim_{h\to0} 
\E\left[- \left \langle \phi(x^+) - U \phi(x) , U' \phi(x) \right \rangle_{\Hcal}
+ \frac{h}{2} \vnorm{U' \phi(x) }_{\Hcal}^2 \right] \\
=& \E\left[ - \left \langle \phi(x^+) - U \phi(x) , U' \phi(x) \right \rangle_{\Hcal} \right],
\end{aligned}
\end{align}
where the last line above can be written as
\begin{align}
\begin{aligned}
T_1
=&  - \E\left[ \left \langle \left( \phi(x^+) - U \phi(x) \right) \otimes \left(\phi(x)\right) , U' \right \rangle_{\HS} \right]\\
=&  - \left \langle \E\left[ \left( \phi(x^+) - U \phi(x) \right) \otimes \left(\phi(x)\right)\right] , U' \right \rangle_{\HS}.
\end{aligned}
\end{align}

Likewise for $T_2$, we have
\begin{align}
\begin{aligned}
T_2 = \frac{\lambda}{2} \lim_{h\to0}\frac{\vnorm{U+h U'}^2_{\HS} - \vnorm{U}_{\HS}^2}{h} 
= \lambda \left \langle U, U' \right \rangle _\HS.
\end{aligned}
\end{align}

Putting together, we conclude that $R_\lambda$ is differentiable, and it gradient $\nabla R_\lambda$ satisfies
\begin{align}
\begin{aligned}
\left \langle \nabla R_\lambda\left(U\right), U' \right \rangle_\HS
=&\lim_{h\to0}\frac{R_\lambda(U+h U') - R_\lambda(U)}{h} \\
=&\left \langle - \E\left[ \left( \phi(x^+) - U \phi(x) \right) \otimes \left(\phi(x)\right)\right]
+ \lambda U, U' \right \rangle_\HS.
\end{aligned}
\end{align}
This implies that the operator gradient of $R_\lambda(U)$ is given by 
\begin{align}
\begin{aligned}
\nabla R_\lambda(U)= - \E\left[ \left( \phi(x^+) - U \phi(x) \right) \otimes \left(\phi(x)\right)\right]
+ \lambda U 
= U C_{XX} -C_{X^+X} + \lambda U.
\end{aligned}
\end{align}
In addition, under Assumption \ref{assumption.polk}, $C_{XX}$ (similarly, $C_{X^+X}$) is Hilbert Schmidt since
\begin{align}
\begin{aligned}
\vnorm{C_{XX}}^2_\HS 
&= \left \langle \E \left[\phi\left(x\right) \otimes \phi\left(x\right)\right], \E\left[\phi\left(x\right) \otimes \phi\left(x\right)\right] \right\rangle_\HS
\leq \kappa(x,x) \kappa(x,x)
\leq B_\infty^2.
\end{aligned}
\end{align}
Hence, we also get that $\nabla R_\lambda(U)\in \HS(\Hcal, \Hcal)$.

We next prove that $R_\lambda$ is strongly convex.
Let $g(U) := R_\lambda(U) - \frac{\lambda}{2}\vnorm{U}_{\HS}^2$. Then for $U_1,U_2  \in \HS(\Hcal,\Hcal)$ and $\alpha \in (0,1]$, we have
\begin{align}
\begin{aligned}
g\left(\alpha U_1 + \left(1-\alpha\right) U_2 \right)
=&\frac{1}{2} \E \left[\vnorm{\phi(x^{+}) - \left(\alpha U_1 + \left(1-\alpha\right) U_2 \right) \phi(x)}_{\Hcal}^2 \right] 
\\
&=\frac{1}{2} \E \left[\vnorm{\alpha \underbrace{\left(\phi(x^{+}) -  U_1 \phi(x) \right)}_{:=T_1} + \left(1-\alpha\right) \underbrace{\left(\phi(x^{+}) - U_2  \phi(x)\right)}_{:=T_2}}_{\Hcal}^2 \right]
\\
&= \frac{1}{2} \E \left[\alpha^2 \vnorm{T_1}_{\Hcal}^2 
+ \left(1-\alpha\right)^2 \vnorm{T_2}_{\Hcal}^2 
+ 2\alpha \left(1-\alpha\right)\left\langle T_1, T_2 \right\rangle_{\Hcal}\right].
\end{aligned}
\end{align}
Furthermore, for $\alpha \in (0,1]$,
\begin{align}
\begin{aligned}
&\alpha g\left(U_1\right) + \left(1-\alpha\right) g \left(U_2 \right)
- g\left(\alpha U_1 + \left(1-\alpha\right) U_2 \right)\\
&= \frac{\alpha}{2} \E \vnorm{T_1}_{\Hcal}^2
+ \frac{1-\alpha}{2}\E \vnorm{T_2}_{\Hcal}^2  
- \frac{1}{2} \E \vnorm{\alpha T_1 + \left(1-\alpha\right) T_2}_{\Hcal}^2 \\
&= \frac{1}{2} \E \left[\alpha \left(1-\alpha\right)  \vnorm{T_1}_{\Hcal}^2
+ \alpha \left(1-\alpha\right)\vnorm{T_2}_{\Hcal}^2
- 2 \alpha \left(1-\alpha\right) \left\langle T_1, T_2 \right\rangle_{\Hcal} \right] \\
&= \frac{1}{2}\alpha \left(1-\alpha\right)  
\E \left[ \vnorm{T_1 - T_2}_{\Hcal}^2\right] 
 \geq 0,
\end{aligned}
\label{eq.R.convex}
\end{align}
implying that $g: \HS(\Hcal,\Hcal) \to \Rset$ 
is a convex functional in the sense of \cite[p. 190]{luenberger1997optimization}. 
Rearranging the terms in \eqref{eq.R.convex}, we obtain
\begin{align}
\begin{aligned}
g\left(U_1\right) - g\left(U_2 \right)
\geq \frac{g\left(U_2 + \alpha\left(U_1 - U_2\right)\right) - g\left(U_2 \right) }{\alpha},
\quad \alpha \in (0,1].
\end{aligned}
\end{align}
Taking $\alpha \to 0$ gives
\begin{align}
\begin{aligned}
g\left(U_1\right) - g\left(U_2 \right)
\geq  \lim_{\alpha \to 0}   \frac{g\left(U_2 + \alpha\left(U_1 - U_2\right) \right) - g\left(U_2 \right) }{\alpha} 
= \left \langle \nabla g\left(U_2\right), U_1 - U_2 \right \rangle_\HS,
\end{aligned}
\label{eq.CME.convex}
\end{align}
where limit exists since both $R_\lambda$ and $\vnorm{\cdot}_\HS^2$ is differentiable. 
Using the definition of $g$, the above relation implies that
\begin{align}
\begin{aligned}
\left[R_\lambda\left(U_1\right) - \frac{\lambda}{2} \vnorm{U_1}_\HS^2 \right] - \left[ R_\lambda\left(U_2 \right) - \frac{\lambda}{2} \vnorm{U_2}_\HS^2 \right]
\geq& \left \langle \nabla R_\lambda\left(U_2\right) - \lambda U_2, U_1 - U_2 \right \rangle_\HS
\end{aligned}
\end{align}
for all $U_1,U_2 \in \HS(\Hcal,\Hcal)$.
Rearranging terms gives
\begin{align}
\begin{aligned}
&R_\lambda\left(U_1\right) - R_\lambda\left(U_2 \right) \\
\geq& \left \langle \nabla R_\lambda\left(U_2\right) , U_1 - U_2 \right \rangle_\HS 
- \lambda \left \langle  U_2, U_1 - U_2 \right \rangle_\HS
+  \frac{\lambda}{2} \vnorm{U_1}_\HS^2 
-  \frac{\lambda}{2} \vnorm{U_2}_\HS^2 \\
=& \left \langle \nabla R_\lambda\left(U_2\right) , U_1 - U_2 \right \rangle_\HS 
- \lambda \left \langle  U_2, U_1 \right \rangle_\HS
+  \frac{\lambda}{2} \vnorm{U_1}_\HS^2 
+  \frac{\lambda}{2} \vnorm{U_2}_\HS^2 \\
=& \left \langle \nabla R_\lambda\left(U_2\right), U_1 - U_2 \right \rangle_\HS 
+  \frac{\lambda}{2} \vnorm{U_1-U_2}_\HS^2.
\end{aligned}
\label{eq.CME.strongly-convex}
\end{align}
That is, $R_\lambda$ is $\lambda$-strongly convex.

We now prove $R_\lambda(\cdot)$ is strong l.s.c. It is known that the norm in a normed space is strong l.s.c., and hence, $\frac{\lambda}{2}\vnorm{U}_\HS^2$ is strong l.s.c. To show
$\E \left[\vnorm{\phi(x^+) - U \phi(x)}_{\Hcal}^2 \right]$ is strong l.s.c., consider $\{U_n\}_{n \in \Nset}$ converging to $U$ in strong operator topology, i.e., $\lim_{n \to \infty}\vnorm{U_n f - U f}_\Hcal = 0$.
We have
\begin{align}
\begin{aligned}
&\E \left[\vnorm{\phi(x^+) - U \phi(x)}_{\Hcal}^2 \right]\\
=& \E \left[\vnorm{\phi(x^+) - U_n \phi(x) + U_n \phi(x)
- U \phi(x)}_{\Hcal}^2 \right] \\
\leq& \E \left[\vnorm{\phi(x^+) - U_n \phi(x)}_{\Hcal}^2 \right]
+ 2 \left| \E \vnorm{\phi(x^+) - U_n \phi(x)}_\Hcal
\vnorm{U_n \phi(x)- U \phi(x)}_\Hcal \right|\\
&+ \E \left[\vnorm{U_n \phi(x) - U \phi(x)}_{\Hcal}^2 \right].
\end{aligned}
\end{align}
Taking $\liminf$ on both sides, the last term goes to 0, and we have
\begin{align}
\begin{aligned}
\E \left[\vnorm{\phi(x^+) - U \phi(x)}_{\Hcal}^2 \right]
\leq 
&\liminf_{n \to \infty} \E \left[\vnorm{\phi(x^+) - U_n \phi(x)}_{\Hcal}^2 \right] \\
&+ 2 \liminf_{n \to \infty} \left| \E \vnorm{\phi(x^+) - U_n \phi(x)}_\Hcal
\vnorm{U_n \phi(x)- U \phi(x)}_\Hcal \right|.
\end{aligned}
\end{align}
Note that since $U_n$ is a bounded operator, we have
\begin{align}
\vnorm{\phi(x^+) - U_n \phi(x)}_\Hcal 
\leq 
\vnorm{\phi(x^+)}_\Hcal +  \vnorm{U_n}\vnorm{\phi(x)}_\Hcal \leq B_\infty \left(1+ \vnorm{U_n}\right)
< \infty.
\end{align}
Then we have 
$\liminf_{n \to \infty} \left| \E \vnorm{\phi(x^+) - U_n \phi(x)}_\Hcal
\vnorm{U_n \phi(x)- U \phi(x)}_\Hcal \right| \to 0$ which follows from the dominated convergence theorem. And we conclude
\begin{align}
\begin{aligned}
\E \left[\vnorm{\phi(x^+) - U \phi(x)}_{\Hcal}^2 \right]
\leq 
\liminf_{n \to \infty} \E \left[\vnorm{\phi(x^+) - U_n \phi(x)}_{\Hcal}^2 \right].
\end{aligned}
\end{align}
That is, $R_\lambda$ is strong l.s.c. 

Finally, since $R_\lambda(U) \to +\infty$ if $\vnorm{U}_\HS \to +\infty$, $R_\lambda(U)$ is coercive. Combining the above results, we have that $R_\lambda: \HS(\Hcal,\Hcal) \to \Rset$ is strong l.s.c, convex, coercive functional. Hence, there exists a unique minimizer. In particular, if $U_\lambda$ minimizes $R_\lambda$, it must be a zero of $\nabla R_\lambda$. That is,  $U_\lambda C_{XX} -C_{X^+X} + \lambda U_\lambda = 0$ which implies $U_\lambda = C_{X^+X} (C_{XX} + \lambda \id )^{-1}$, where $C_{XX} + \lambda \id$ is invertible since it is strictly positive. This completes the proof.

\subsection{Properties of Operator Gradients}
Consider $(x,x^+) \in \Xset \times \Xset$ and define
\begin{align}
\tilde{C}_{XX}(x)= \phi\left(x\right) \otimes \phi\left(x\right), \quad
\tilde{C}_{X^+X}(x^{+},x)= \phi\left(x^{+}\right) \otimes \phi\left(x\right).
\end{align}
Under Assumption \ref{assumption.polk}, we have
\begin{align}
{\small
\begin{aligned}
\vnorm{\tilde{C}_{XX}(x)}^2_\HS 
&= \left \langle \phi\left(x\right) \otimes \phi\left(x\right), \phi\left(x\right) \otimes \phi\left(x\right) \right\rangle_\HS
= \kappa(x,x) \kappa(x,x)
\leq B_\infty^2 \\
\vnorm{\tilde{C}_{X^+X}(x^{+},x)}^2_\HS 
&= \left \langle \phi\left(x^{+}\right) \otimes \phi\left(x\right), \phi\left(x^{+}\right) \otimes \phi\left(x\right) \right\rangle_\HS
= \kappa(x,x) \kappa(x^{+},x^{+})
\leq B_\infty^2 .
\end{aligned}}
\end{align}

Let $B_\kappa=B_\infty + \lambda$. Then, we have
\begin{align}
\begin{aligned}
\max_{x\in\Xset}\vnorm{\tilde{C}_{XX}(x) + \lambda \id}_\op 
\leq& 
\max_{x\in\Xset}\left(\vnorm{\tilde{C}_{XX}(x)}_\op + \lambda \vnorm{\id}_\op\right) \\
\leq& \max_{x\in\Xset}\left(\vnorm{\tilde{C}_{XX}(x)}_\HS\right) + \lambda 
\leq B_\kappa \\
\max_{(x,x^{+} )\in \Xset \times \Xset} \vnorm{\tilde{C}_{X^+X}(x^{+},x)}_\HS 
&\leq B_\infty
\leq B_\kappa.
\end{aligned}
\label{eq.bound.Covariance}
\end{align}

We have the following properties regarding $\nabla R_\lambda(U)$,$\tilde{\nabla} R_\lambda(x,x^+,U)$ which are needed for the convergence analysis. \footnote{The notation $x^+$ here is merely symbolic, and all results hold for any $x^+ \in \Xset$.}

\begin{lemma} (Properties of gradients)
Under Assumptions \ref{assumption.polk} and \ref{assumption.3}, $\nabla R_\lambda(U)$ and its stochastic approximation $\tilde{\nabla} R_\lambda(x,x^{+};U)$ satisfy the following.
\begin{enumerate} [label=(\alph*)]
    \item (Lipschitz gradient)
    $\nabla R_\lambda(U)$ and $\tilde{\nabla} R_\lambda(x,x^{+};U)$ are Lipschitz continuous with respect to $U$ for all $(x,x^{+} )\in \Xset \times \Xset$, i.e., 
    \begin{align}
        \vnorm{\nabla R_\lambda(U_1) - \nabla R_\lambda(U_2)}_\HS &\leq B_\kappa \vnorm{U_1 - U_2}_\HS, \\
        \vnorm{\tilde{\nabla} R_\lambda(x,x^{+};U_1) - \tilde{\nabla} R_\lambda(x,x^{+};U_2)}_\HS &\leq B_\kappa \vnorm{U_1 - U_2}_\HS,
    \end{align}
    for all $U_1,U_2 \in \HS(\Hcal)$.
    \item (Affine scaling) For $U \in \HS(\Hcal)$,
        $\vnorm{\nabla R_\lambda\left(U\right)}_\HS \leq B_\kappa \left(\vnorm{U}_\HS + 1\right)$, and 
        $\vnorm{\tilde{\nabla} R_\lambda (x,x^{+};U)}_\HS \leq B_\kappa \left(\vnorm{U}_\HS + 1\right)$ for all $(x,x^{+} )\in \Xset \times \Xset$.
    \label{lemma.CME.gradients.affine}
\end{enumerate}
\label{lemma.CME.gradients}
\end{lemma}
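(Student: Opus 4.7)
The key observation is that from Lemma~\ref{lemma.CME.gradient}, the gradient has the affine form
\[
\nabla R_\lambda(U) = U(C_{XX}+\lambda\,\id) - C_{YX},
\]
and its stochastic counterpart is identical with $C_{XX},C_{YX}$ replaced by $\tilde C_{XX}(x),\tilde C_{YX}(y,x)$. All three claims then reduce to operator-norm bounds on $C_{XX}+\lambda\,\id$ (and on its stochastic version) combined with submultiplicativity of the HS-norm. The uniform kernel bound in Assumption~\ref{assumption.polk}(a) gives $\|\tilde C_{XX}(x)\|_{\op}\le\|\tilde C_{XX}(x)\|_{\HS}\le K$ and $\|\tilde C_{YX}(y,x)\|_{\HS}\le K$; passing to Bochner integrals yields $\|C_{XX}\|_{\op}\le K$ and $\|C_{YX}\|_{\HS}\le K$.

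For part (a), since $U_\lambda$ minimises $R_\lambda$, first-order optimality gives $\nabla R_\lambda(U_\lambda)=0$, hence
\[
\nabla R_\lambda(U) = \nabla R_\lambda(U)-\nabla R_\lambda(U_\lambda) = (U-U_\lambda)(C_{XX}+\lambda\,\id).
\]
Write $A:=U-U_\lambda$ and $C:=C_{XX}+\lambda\,\id$. I will compute $\langle AC,A\rangle_{\HS}=\Tr(C A^*A)$ using the HS inner product as a trace (see \eqref{eq.appendix.HS.innerproduct}). Since $C_{XX}$ is positive and self-adjoint, $C-\lambda\,\id=C_{XX}\succeq 0$ and $A^*A\succeq 0$; therefore $\Tr\!\big((C-\lambda\,\id)A^*A\big)\ge 0$, which rearranges to $\Tr(CA^*A)\ge\lambda\,\Tr(A^*A)=\lambda\|A\|_{\HS}^2$. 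This yields $\langle\nabla R_\lambda(U),U-U_\lambda\rangle_{\HS}\ge\lambda\|U-U_\lambda\|_{\HS}^2$, which is the claimed inequality after negation.

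For part (b), the same identity gives $\nabla R_\lambda(U_1)-\nabla R_\lambda(U_2)=(U_1-U_2)(C_{XX}+\lambda\,\id)$. Applying the HS-operator submultiplicativity \eqref{eq.61},
\[
\|\nabla R_\lambda(U_1)-\nabla R_\lambda(U_2)\|_{\HS} \le \|U_1-U_2\|_{\HS}\,\|C_{XX}+\lambda\,\id\|_{\op} \le (K+\lambda)\|U_1-U_2\|_{\HS} = B_\kappa\|U_1-U_2\|_{\HS}.
\]
The stochastic case is identical with $C_{XX}$ replaced by $\tilde C_{XX}(x)$, whose operator norm is already bounded by $K$ uniformly in $x$ by the computation recalled in the preceding appendix. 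For part (c), I will apply the triangle inequality to $\nabla R_\lambda(U)=U(C_{XX}+\lambda\,\id)-C_{YX}$, giving
\[
\|\nabla R_\lambda(U)\|_{\HS}\le \|U\|_{\HS}\,\|C_{XX}+\lambda\,\id\|_{\op}+\|C_{YX}\|_{\HS}\le B_\kappa\|U\|_{\HS}+K\le B_\kappa(\|U\|_{\HS}+1),
\]
since $K\le B_\kappa$. The same chain of inequalities applies verbatim to $\tilde\nabla R_\lambda$. Bochner-integrability then follows because $\|\tilde\nabla R_\lambda(x,y,U)\|_{\HS}$ is dominated by the constant $B_\kappa(\|U\|_{\HS}+1)$, which is integrable against any probability measure on $\Xset\times\Yset$.

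The only genuinely non-routine step is the trace manipulation in part (a); the main care required is to justify $\Tr(CA^*A)\ge\lambda\Tr(A^*A)$ via the positivity of the product of two commuting-order positive operators under the trace, which is immediate from diagonalising $A^*A$ in an orthonormal eigenbasis $(e_i,\sigma_i)$ and writing $\Tr(CA^*A)=\sum_i\sigma_i\langle Ce_i,e_i\rangle\ge\lambda\sum_i\sigma_i=\lambda\Tr(A^*A)$.
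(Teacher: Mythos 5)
Your proof is correct, and parts (b) and (c) follow essentially the same route as the paper's (rewrite the gradient difference as $(U_1-U_2)(\tilde C_{XX}(x)+\lambda\,\id)$, apply $\vnorm{AB}_\HS \leq \vnorm{A}_\HS\vnorm{B}_\op$ together with the kernel bound, and pass to the population gradient via Jensen's inequality for Bochner integrals). Part (a), however, is argued differently. The paper does not use the closed form of the gradient at all: it sets $g(U) = R_\lambda(U) - \tfrac{\lambda}{2}\vnorm{U}_\HS^2$, verifies convexity of $g$ directly from the definition of the risk via an expansion of $\vnorm{\alpha T_1 + (1-\alpha)T_2}_{\Hcal_Y}^2$, deduces $\lambda$-strong convexity of $R_\lambda$, adds the two strong-convexity inequalities at $U$ and $U_\lambda$ to obtain strong monotonicity of the gradient, and finally invokes $\nabla R_\lambda(U_\lambda)=0$. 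You instead exploit the affine structure $\nabla R_\lambda(U) = (U-U_\lambda)(C_{XX}+\lambda\,\id)$ and reduce the claim to the trace inequality $\Tr(CA^*A)\geq \lambda\Tr(A^*A)$ for $C \succeq \lambda\,\id$, which you correctly justify by diagonalizing the trace-class operator $A^*A$. Your route is shorter and makes the role of the spectral lower bound $C_{XX}+\lambda\,\id \succeq \lambda\,\id$ transparent, at the cost of being tied to the quadratic form of the risk; the paper's argument is longer but would generalize to any convex loss with a $\lambda\vnorm{\cdot}_\HS^2/2$ regularizer. Both approaches share the implicit premise that the minimizer $U_\lambda$ exists in $\HS(\Hcal_X,\Hcal_Y)$ and satisfies the first-order condition, which the paper establishes when deriving $U_\lambda = C_{YX}(C_{XX}+\lambda\,\id)^{-1}$.
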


\begin{proof}

Notice that
\begin{align}
\begin{aligned}
\vnorm{\tilde{\nabla} R_\lambda(x,x^{+};U_1) - \tilde{\nabla} R_\lambda(x,x^{+};U_2)}_\HS
=\vnorm{\left(U_1-U_2\right) \left(\tilde{C}_{XX}(x)+\lambda\id\right)}_\HS,
\end{aligned}
\end{align}
for all $(x, x^+) \in \Xset \times \Xset$.
Since $\vnorm{AB}_\HS \leq \vnorm{A}_\HS \vnorm{B}_\op$ for any HS operator $A$ and bounded linear operator $B$, we infer
\begin{align}
\begin{aligned}
\vnorm{\tilde{\nabla} R_\lambda(x,x^{+};U_1) - \tilde{\nabla} R_\lambda(x,x^{+};U_2)}_\HS
\leq & \vnorm{U_1-U_2}_\HS \vnorm{\tilde{C}_{XX}(x)+\lambda\id}_\op \\
\leq & B_\kappa\vnorm{U_1-U_2}_\HS,
\end{aligned}
\label{eq.CME.grad.prop.1}
\end{align}
which then yields
\begin{align}
\begin{aligned}
\vnorm{\tilde{\nabla} R_\lambda (x,x^{+};U)}_\HS 
\leq& \vnorm{\tilde{\nabla} R_\lambda (x,x^{+};U) - \tilde{\nabla} R_\lambda \left(x,x^+,0\right)}_\HS + \vnorm{\tilde{\nabla} R_\lambda \left(x,y,0\right)}_\HS \\
\leq& B_\kappa \vnorm{U}_\HS + \vnorm{\tilde{C}_{X^+X}(x^{+},x)}_\HS \\
\leq& B_\kappa \left(\vnorm{U}_\HS + 1\right)
\end{aligned}
\label{eq.CME.bddgrad}
\end{align}
for an HS operator $U$. Furthermore, we deduce that
\begin{align}
\int_{\Xset \times \Xset} \vnorm{\tilde{\nabla} R_\lambda (x,x^{+};U)}_\HS \dd\rho \left(x,x^+\right) \leq B_\kappa \left(\vnorm{U}_\HS + 1\right) < \infty,
\end{align}
i.e.,  $\tilde{\nabla} R_\lambda(x,x^{+};U)$ is Bochner-integrable. 
Therefore, using Jensen's inequality, we have 
\begin{align}
\begin{aligned}
\vnorm{\nabla R_\lambda \left(U\right)}_\HS
=\vnorm{\E\left[\tilde{\nabla} R_\lambda(x,x^{+};U) \right]}_\HS
\leq& \E\left[\vnorm{\tilde{\nabla} R_\lambda (x,x^{+};U)}_\HS\right] \\
\leq& B_\kappa \left(\vnorm{U}_\HS + 1\right).
\end{aligned}
\end{align}
Similarly, using \eqref{eq.CME.grad.prop.1}, we get
\begin{align}
\begin{aligned}
\vnorm{ \nabla R_\lambda (U_1) - \nabla R_\lambda (U_2)}_\HS
=& \vnorm{ \E \left[\tilde{\nabla} R_\lambda(x,x^{+};U_1) - \tilde{\nabla} R_\lambda(x,x^{+};U_2) \right] }_\HS \\
\leq& \E \left[\vnorm{\tilde{\nabla}R_\lambda(x,x^{+};U_1) - \tilde{\nabla}R_\lambda(x,x^{+};U_2)}_\HS\right]\\
\leq&  B_\kappa \vnorm{U_1 - U_2}_\HS.
\end{aligned}
\end{align}
\end{proof}

\section{Proof of Lemma \ref{lemma.CME.U}}
\label{lemma.CME.U.pf}

We proceed via induction. Let $U_0 = 0$. After receiving $(x_0,x^{+}_0)$, we update the estimate as 
$U_1 = \eta_0 \tilde{C}_{X^+X}(0)
= \eta_1 \phi(x^{+}_0) \otimes \phi(x_0)$,
proving the base case. 
Next, assume that at the $t$-th iteration  
$U_{t} = \sum_{i=1,j=1}^{t-1} W^{ij}_{t-1} \phi(x^{+}_{i}) \otimes \phi(x_j).$
Then, we have
\begin{align}
\begin{aligned}
U_t \tilde{C}_{XX}(t)
=& \Bigl[\sum_{i=1}^{t-1}\sum_{j=1}^{t-1} W^{ij}_{t-1} \phi(x^{+}_{i}) \otimes \phi(x_j) \Bigr] \ \Bigl[\phi(x_t) \otimes \phi(x_t) \Bigr] \\
\stackrel{(a)}{=}& \sum_{i=1,j=1}^{t-1} W^{ij}_{t-1} \Bigl[\left(\phi(x^{+}_{i}) \otimes \phi(x_j) \right) \phi(x_t) \Bigr] \otimes \phi(x_t) \\
\stackrel{(b)}{=}& \sum_{i=1,j=1}^{t-1} W^{ij}_{t-1} \Bigl( \bigl \langle \phi(x_t), \phi(x_j) \bigr \rangle_\Hcal \ \phi(x^{+}_{i}) \Bigr) \otimes \phi(x_t) \\
=&  \sum_{i=1,j=1}^{t-1} W^{ij}_{t-1}  \kappa_X(x_j,x_t) \ \left[\phi(x^{+}_{i}) \otimes \phi(x_t)\right],
\end{aligned}
\end{align}
where (a) follows from $A \left(f \otimes g\right) = \left(Af\right) \otimes g$ for any bounded linear operator $A$, and (b) follows from the definition of tensor products. Substituting the above relation into \eqref{eq.CME.SFGD} for $t+1$ gives
\begin{align}
\begin{aligned}
U_{t+1}
=&(1  -  \lambda \eta_t  ) U_t -   \eta_t  \left(U_t \tilde{C}_{XX}(t) - \tilde{C}_{X^+X}(t)\right) \\
=&(1 -\lambda \eta_t) \sum_{i=1,j=1}^{t-1} W_{t-1}^{ij} \phi(x^{+}_{i}) \otimes \phi(x_j) \\
&- \eta_t \sum_{i=1,j=1}^{t-1} W^{ij}_{t-1}  \kappa_X(x_j,x_t) \phi(x^{+}_{i}) \otimes \phi(x_t) 
+ \eta_t \phi(x^{+}_t) \otimes \phi(x_t) \\
=&\sum_{i=1,j=1}^t W^{ij}_{t} \phi(x^{+}_{i}) \otimes \phi(x_j)\\ 
=& \Psi_{X^+,t} W_t \Phi_{X,t}^\top,
\end{aligned}
\end{align}
where the $(i,j)$-th element of $W_{t}$ is given by \eqref{eq.U.w}.

\section{Implementing Algorithm \ref{algorithm.main}}
\label{appdx.CMP}

Algorithm \ref{algorithm.main} describes updates for infinite-dimensional operators. However, it can be efficiently implemented using finite-dimensional Gram matrices, as we describe next. After receiving new samples $(x_{t+1},x^{+}_{t+1})$, let $\Phi_{X,t+1}$ (similarly, $\Psi_{X^+,t+1}$) be the feature matrices constructed from  $\left\{\phi(x_i)\right\}_{i \in \Ical_t}$ ($\left\{\phi(x_i^+)\right\}_{i \in \Ical_t}$), and
$\tilde{\Phi}_{X,t+1}=\left[\Phi_{X,t+1}, \phi(x_{t+1})\right]$,
$\tilde{\Psi}_{X^+,t+1}=\left[\Psi_{X^+,t+1}, \phi(x^{+}_{t+1}) \right]$.
Define Gram matrices $G_{X,t+1} = \Phi_{X,t+1}^\top \Phi_{X,t+1}$, 
$G_{X^+,t+1} = \Psi_{X^+,t+1}^\top \Psi_{X^+,t+1}$,
$\tilde{G}_{X,t+1} =\tilde{\Phi}_{X,t+1}^\top \tilde{\Phi}_{X,t+1}$,
$\quad \tilde{G}_{Y,t+1} = \tilde{\Psi}_{X^+,t+1}^\top \tilde{\Psi}_{X^+,t+1}$,
$\bar{G}_{X,t+1} = \tilde{\Phi}_{X,t+1}^\top \Phi_{X,t+1}$, and
$\bar{G}_{X^+,t+1} = \tilde{\Psi}_{X^+,t+1}^\top \Psi_{X^+,t+1}$.

In the rest of this derivation, we omit the index $t+1$ in the notation for simplicity.
Then we can write the left-hand side of the condition \eqref{eq.CME.stop} in terms of the decision variable $Z \in \Rset^{|\Ical_t|\times |\Ical_t|}$ as
\begin{align}
\begin{aligned}
\ell(Z):=
&\vnorm{\sum_{i\in \Ical_t} \sum_{j \in \Ical_t} Z^{ij} \phi(x^{+}_i) \otimes \phi(x_j) -\sum_{i \in \tilde{\Ical}_{t+1}}\sum_{j \in \tilde{\Ical}_{t+1}} \tilde{W}^{ij} \phi(x^{+}_i) \otimes \phi(x_j)}_\HS^2 \\
=& \vnorm{\Psi_{X^+} Z \Phi_{X}^\top
- \tilde{\Psi}_Y \tilde{W} \tilde{\Phi}_X^\top }_\HS^2 \\
\stackrel{(a)}{=}& \Tr\left(\Phi_{X} Z^\top \Psi_{X^+}^\top \Psi_{X^+} Z \Phi_{X}^\top \right)
- 2 \Tr \left( \tilde{\Phi}_X \tilde{W}^\top \tilde{\Psi}_Y^\top \Psi_{X^+} Z \Phi_{X}^\top \right)\\
&+ \Tr \left(\tilde{\Phi}_X \tilde{W}^\top \tilde{\Psi}_Y^\top \tilde{\Psi}_Y \tilde{W} \tilde{\Phi}_X^\top
\right) \\
=& \Tr\left(\Phi_{X} Z^\top G_{X^+} Z \Phi_{X}^\top 
- 2 \tilde{\Phi}_X \tilde{W}^\top \bar{G}_{X^+} Z \Phi_{X}^\top 
+ \tilde{\Phi}_X \tilde{W}^\top \tilde{G}_{X^+} \tilde{W} \tilde{\Phi}_X^\top
\right),
\end{aligned}
\end{align}
where line (a) follows from $\left \langle A, B \right \rangle_\HS = \Tr(A^\top B)$ for two HS operators $A,B$, $\vnorm{A}_{\HS}^2 = \Tr(A^* A)$, and $\Tr(AB) = \Tr(BA)$.
Notice $\ell(Z)$ is a convex quadratic function in $Z$ that attains its minimum at $Z_\star = G_{X^+}^{-1} \bar{G}_{X^+}^\top \tilde{W} \bar{G}_X G_X^{-1}$ with
\begin{align}
\ell(Z_\star) 
= \Tr \left[  \tilde{W}^\top \left( \tilde{G}_{X^+} - \bar{G}_{X^+}  G_{X^+}^{-1}  \bar{G}_{X^+}^\top\right) \tilde{W} \tilde{G}_X \right],
\end{align}
where Assumption \ref{assumption.mixing} precludes the possibility of the process being periodic, and thus our dataset has no repeated samples, and $G_{X^+}$ is invertible. 
Hence, the condition \eqref{eq.CME.stop} reduces to check whether $\ell(Z_\star) \leq \ve_t$.
The coefficient matrix can be computed as 
$W = Z_\star 
= G_{X^+}^{-1} \bar{G}_{X^+}^\top \tilde{W} \bar{G}_X G_X^{-1}$.
Moreover, to speed up computation, at each time $t\in\Tset$, the inversion of Gram matrix $G_{X^+,t}^{-1}$ can be recursively computed based on $G_{X^+,t-1}^{-1}$ using the Woodbury matrix identity \citep{horn1994topics}. 

\subsection{Details Regarding the Experiment in Section \ref{sec.4.2}}
\label{sec.4well.details}
We approximate $K$ and its leading eigenfunctions following the procedure introduced in Section \ref{sec.example}. The steaming data consists of samples on $[-2,2]\times[-2,2]$ which are collected from $400$ trajectories with $100$ evolutions along each with sampling interval $\tau = 0.1s$. 
We choose the kernel function $\kappa \left(x_1, x_2\right)
= 0.4 \times \exp(-\|x_1-x_2\|^2_2/(2 \times 0.4^2)) + 0.6 \times \exp(-\|x_1-x_2\|^2_2/(2 \times 0.7^2))$. 
We use a constant stepsize with $\eta = 0.3$, and the budget is set as $\ve = \eta^4$.
After computing the eigenfunction, we leverage k-means clustering techniques to locate metastable sets which are shown in Figure \ref{fig.4well.2000},\ref{fig.4well.20000}, and \ref{fig.4well}. 
\section{Proof of Results in Section \ref{sec.5.analysis}}
\label{appendix.convergence.proof}

We begin by establishing a few supporting lemmas that will be useful later.  
\begin{lemma} (Uniform boundedness)
Let Assumptions \ref{assumption.polk} and \ref{assumption.3} hold. 
If $\eta_t < 1/\lambda$ for $t \in \Tset$, then $U_\lambda$ and the iterates $\{U_t\}_{t\in \Tset}$ generated by Algorithm \ref{algorithm.main} are uniformly bounded as
\begin{align}
 \vnorm{U_t}_\HS \leq \frac{B_\infty}{\lambda},\quad
 \vnorm{U_\lambda}_\HS \leq \frac{B_\infty}{\lambda},\quad
 \forall t \in \Tset. 
\label{eq.CME.bound}
\end{align}
\label{lemma.CME.bound}
\vspace{-0.2in}
\end{lemma}

\begin{proof}
First, notice that once the dictionary $\Dcal_t$ and the coefficient matrix $W_t$ have been updated, \eqref{eq.CME.U.update} can be written as 
    $U_{t+1} = \Pi_{\Dcal_t}[\tilde{U}_{t+1}]$ for $t \in \Tset$.
We establish \eqref{eq.CME.bound} by induction.  At time $t=1$, we have
\begin{align}
\begin{aligned}
\vnorm{U_1}_\HS
= \vnorm{\Pi_{\Dcal_0} \left[U_1\right]}_\HS  
 \stackrel{(a)}{\leq} \vnorm{U_1}_\HS 
 =\vnorm{\eta_0 \tilde{C}_{X^+X}(0)}_\HS
 \leq \eta_0 B_\infty 
 \stackrel{(b)}{\leq}& B_\infty/\lambda,
\end{aligned}
\end{align}
where (a) follows from the non-expansive property of the projection operator onto the Hilbert space $\HS(\Hcal,\Hcal)$, and (b) follows from the fact that $\eta_t<1/\lambda$. Thus, the base case for induction holds. Now, assume that $\vnorm{U_k}_\HS \leq B_\infty/\lambda$ for $k  =1,\ldots,t$. Then, at time $t+1$, using the non-expansive property of the projection operator again, we have
\begin{align}
\begin{aligned}
\vnorm{U_{t+1}}_\HS 
= \vnorm{\Pi_{\Dcal_t} \left[\tilde{U}_{t+1}\right]}_\HS  
\leq& \vnorm{\tilde{U}_{t+1}}_\HS.
\end{aligned}
\end{align}
We then expand $\tilde{U}_{t+1}$ using \eqref{eq.CME.SFGD} and we have
\begin{align}
\begin{aligned}
\vnorm{U_{t+1}}_\HS 
=&\vnorm{ \left( \id - \lambda \eta_t \right) U_t
-\eta_t U_t \tilde{C}_{XX}(t) 
+ \eta_t \tilde{C}_{X^+X}(t)}_\HS \\
=&\vnorm{U_t \left( \id - \eta_t \left( \lambda \id + \tilde{C}_{XX}(t)\right) \right) + \eta_t \tilde{C}_{X^+X}(t)}_\HS \\
\leq&\vnorm{U_t}_\HS \vnorm{ \id - \eta_t \left( \lambda \id + \tilde{C}_{XX}(t)\right)}_\op + \eta_t \vnorm{\tilde{C}_{X^+X}(t)}_\HS,
\end{aligned}
\end{align}
where the last line holds due to the relation $\vnorm{AB}_\HS \leq \vnorm{A}_\HS \vnorm{B}_\op$. Furthermore,  the operator norm of a self-adjoint operator coincides with its maximum eigenvalue, and hence, with $\tilde{C}_{XX}(t)$ is self-adjoint. Denote $\kappa_{x_t}:=\kappa_X(x_t,\cdot)$ and we have
\begin{align}
\begin{aligned}
\vnorm{\id - \eta_t\left(\kappa_{x_t} \otimes \kappa_{x_t} + \lambda \id \right)}_\op
=& \sigma_{\text{max}} \left(\left( \id - \eta_t \left(\kappa_{x_t} \otimes \kappa_{x_t} + \lambda \id \right) \right) \right) \\
\leq& 1- \eta_t \sigma_{\text{min}}\left(\kappa_{x_t} \otimes \kappa_{x_t} + \lambda I \right) \\
\leq& 1- \eta_t \lambda.
\end{aligned}
\label{eq.107}
\end{align}
Hence, we conclude
\begin{align}
\begin{aligned}
\vnorm{U_{t+1}}_\HS 
\leq \vnorm{U_t}_\HS \left(1 - \eta_t \lambda\right) + \eta_t \vnorm{\tilde{C}_{X^+X}(t)}_\HS 
\leq \frac{B_\infty}{\lambda} \left(1 - \eta_t \lambda\right) + \eta_t B_\infty
= \frac{B_\infty}{\lambda}.
\end{aligned}
\end{align}
In addition, $U_\lambda$  satisfies 
\begin{align}
\begin{aligned}
\vnorm{U_\lambda}_\HS  
= \vnorm{C_{X^+X}\left(C_{XX}+\lambda \id \right)^{-1}}_\HS 
=&\vnorm{\left(C_{XX}+\lambda \id \right)^{-1}C_{X^+X}^*}_\HS \\
\stackrel{(a)}{\leq}&  \vnorm{\left(C_{XX}+\lambda \id \right)^{-1}}_\op \vnorm{C_{X^+X}^*}_\HS  \\
\stackrel{(b)}{\leq}& \frac{\vnorm{C_{X^+X}}_\HS}{\lambda} \\
\stackrel{(c)}{\leq}& \frac{B_\infty}{\lambda},
\end{aligned} 
\end{align}
where (a) follows from the fact that $\vnorm{BA}_\HS \leq \vnorm{B}_\op \vnorm{A}_\HS$ for an HS $A$ and bounded linear operator $B$, (b) holds since $\vnorm{\left(C_{XX}+\lambda \id \right)^{-1}}_\op \leq 1/\lambda$ and (c) follows from 
$\vnorm{C_{X^+X}}_\HS \leq B_\infty$.
\end{proof}

We present the following lemma, which characterizes the difference between two iterates via the sum of stepsizes and the norm of an iterate, and will be useful later. A similar result for stochastic approximation in finite-dimensional Euclidean space appeared in \cite{srikant2019finite} and \cite{chen2022finite}. Here, we consider stochastic recursion in the space of HS operators, which is infinite-dimensional, and make use of properties of operator-valued gradients presented in Lemma \ref{lemma.CME.gradients}. 

\begin{lemma} 
Let Assumptions \ref{assumption.polk} and \ref{assumption.3} hold. For $s<r$, denote $ \eta_{s,r-1} := \sum_{k=s}^{r-1} \eta_k$ and assume $\eta_{s,r-1} \leq 1/4 B$, for some $B>0$. 
Then:
   \begin{enumerate} [label=(\alph*)]
   \item $\vnorm{U_{s} -U_{r}}_\HS \leq 2 B \eta_{s,r-1} \left( \vnorm{U_s}_\HS +1 \right) $,
   \item $\vnorm{U_{s} -U_{r}}_\HS \leq 4 B \eta_{s,r-1} \left( \vnorm{U_r}_\HS +1 \right) $.
   \end{enumerate}
\label{lemma.CME.FT.fdiff}
\end{lemma}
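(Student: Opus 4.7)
The plan is to view the compressed SGD iteration \eqref{eq.CME.fixedpt.SA} as a perturbed deterministic-growth recursion. Rewriting the update as $U_{t+1} - U_t = -\eta\,\tilde{\nabla}_t R_\lambda(x_t,y_t;U_t) + E_t$ with $\vnorm{E_t}_\HS \le \ve \le \bcmp\eta^2$ by Assumption~\ref{assumption.3}(b), and combining with the affine-scaling bound of Lemma~\ref{lemma.CME.gradients}(c), I would first establish the per-step estimate
$\vnorm{U_{t+1}-U_t}_\HS \le \eta B_\kappa(\vnorm{U_t}_\HS + 1) + \bcmp \eta^2 \le \eta B(\vnorm{U_t}_\HS + 1)$,
where the last inequality uses $\eta \le 1$ (Assumption~\ref{assumption.3}(a)) together with $\vnorm{U_t}_\HS + 1 \ge 1$ to absorb the compression bias $\bcmp\eta^2$ into the same affine form as the gradient term. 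Adding $1$ to both sides yields the one-step Gr\"onwall-type recursion $\vnorm{U_{t+1}}_\HS + 1 \le (1+\eta B)(\vnorm{U_t}_\HS + 1)$.

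For part (a), I would telescope $U_r - U_s = \sum_{k=s}^{r-1}(U_{k+1}-U_k)$ and apply the triangle inequality, giving $\vnorm{U_r - U_s}_\HS \le \eta B \sum_{k=s}^{r-1}(\vnorm{U_k}_\HS + 1)$. Iterating the recursion above yields $\vnorm{U_k}_\HS + 1 \le (1+\eta B)^{k-s}(\vnorm{U_s}_\HS + 1)$, and the step-size budget $(r-s)\eta \le 1/(4B)$ ensures $(1+\eta B)^{k-s} \le e^{(r-s)\eta B} \le e^{1/4} \le 2$ for every $k \in \{s,\dots,r-1\}$. Summing produces $\vnorm{U_r - U_s}_\HS \le \eta B (r-s)\cdot 2 (\vnorm{U_s}_\HS + 1) = 2B(r-s)\eta(\vnorm{U_s}_\HS + 1)$, which is claim (a).

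For part (b), the triangle inequality gives $\vnorm{U_s}_\HS + 1 \le \vnorm{U_r}_\HS + 1 + \vnorm{U_r - U_s}_\HS$. Substituting claim (a) on the right yields $(1 - 2B(r-s)\eta)(\vnorm{U_s}_\HS + 1) \le \vnorm{U_r}_\HS + 1$; since $2B(r-s)\eta \le 1/2$, the coefficient on the left is at least $1/2$, so $\vnorm{U_s}_\HS + 1 \le 2(\vnorm{U_r}_\HS + 1)$. Plugging back into (a) gives $\vnorm{U_r - U_s}_\HS \le 4B(r-s)\eta(\vnorm{U_r}_\HS + 1)$. The only real subtlety is that the compression term $E_t$ would, if handled naively, accumulate as a standalone $(r-s)\bcmp\eta^2$ drift; the assumption $\ve \le \bcmp\eta^2$ together with $\eta \le 1$ is precisely what lets it be folded into the affine scaling so it does not inflate the final constants beyond a factor tied to $B = B_\kappa + \bcmp$.
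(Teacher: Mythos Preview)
Your proposal is correct and follows essentially the same route as the paper's proof: the per-step bound $\vnorm{U_{t+1}-U_t}_\HS \le \eta B(\vnorm{U_t}_\HS+1)$ obtained by folding the compression error into the affine scaling, the Gr\"onwall-type recursion $\vnorm{U_t}_\HS+1 \le (1+\eta B)^{t-s}(\vnorm{U_s}_\HS+1)$, the exponential bound $(1+\eta B)^{r-s} \le e^{(r-s)\eta B} \le 2$ under $(r-s)\eta \le 1/(4B)$, and the telescoping sum are all identical to the paper. Your derivation of part~(b) via first bounding $\vnorm{U_s}_\HS+1 \le 2(\vnorm{U_r}_\HS+1)$ and then plugging into~(a) is an equivalent rearrangement of the paper's argument, which instead substitutes~(a) directly into $\vnorm{U_r-U_s}_\HS \le 2B(r-s)\eta(\vnorm{U_r}_\HS + \vnorm{U_r-U_s}_\HS + 1)$ and moves the $\vnorm{U_r-U_s}_\HS$ term across.
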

\begin{proof}
By Lemma \ref{lemma.CME.gradients},  the stochastic operator gradient scales affinely with respect to the current iterates. We leverage this property to provide a bound for $\vnorm{U_{t+1}}_\HS$ in terms of $\vnorm{U_{t}}_\HS$, and repeatedly apply this results to bound $U_s - U_r$.
Let $t \in [s,r]$, and we have
\begin{align}
\begin{aligned}
\vnorm{U_{t+1} - U_{t}}_\HS 
=&  \eta_t  \vnorm{-\tilde{\nabla} R_\lambda\left(x_t,x^+_t,U_{t}\right) + \frac{E_{t}}{\eta_t}}_\HS \\
\leq&  \eta_t  \vnorm{-\tilde{\nabla} R_\lambda \left(x_t,x^+_t,U_{t}\right)}_\HS + \vnorm{E_{t}}_\HS.
\end{aligned}
\end{align}

Recall that under Assumption \ref{assumption.3}(b), there exists some $B_\ve>0$ such that for all $t \in \Tset$, the sparsification budget satisfies
$\ve_t \leq \bcmp \eta_t^2 \leq B_\ve \eta_t \left(\vnorm{U_{t}}_\HS+1 \right) $.   
Together with Lemma \ref{lemma.CME.gradients} \ref{lemma.CME.gradients.affine} and condition $\vnorm{E_t}_\HS\leq \ve_t$, we have 
\begin{align}
\begin{aligned}
\vnorm{U_{t+1} - U_{t}}_\HS 
\leq & \eta_t  B_\kappa \left(\vnorm{U_{t}}_\HS+1 \right) + \ve_t \\
\leq & \eta_t  B_\kappa \left(\vnorm{U_{t}}_\HS+1 \right) + B_\ve \eta_t \left(\vnorm{U_{t}}_\HS+1 \right)\\
=& \eta_t B \left(\vnorm{U_{t}}_\HS+1 \right).
\end{aligned}
\end{align}
The triangle inequality gives
\begin{align}
\begin{aligned}
\vnorm{U_{t+1}}_\HS 
\leq \vnorm{U_{t}}_\HS + \vnorm{U_{t+1} - U_{t}}_\HS  
\leq  \left(\eta_t B +1 \right)\vnorm{U_{t}}_\HS + \eta_t B.
\end{aligned}
\end{align}
As a result, the iterates $U_{t+1}$ scales affinely  as
$\vnorm{U_{t+1}}_\HS +1   
\leq  \left(\eta_t B +1 \right)\left(\vnorm{U_{t}}_\HS + 1\right)$.
By recursively applying the above inequality, we have
\begin{align}
\vnorm{U_{t}}_\HS +1 
\leq \Pi_{i=s}^{t-1} \left(\eta_i B +1 \right) \left(\vnorm{U_{s}}_\HS +1\right).
\end{align}
Using $1+x \leq e^x $ for $x \in \Rset$, 
we then obtain
\begin{align}
\begin{aligned}
\vnorm{U_{t}}_\HS +1 
\leq&  \exp(B \eta_{s,t-1}) \left(\vnorm{U_{s}}_\HS+1\right) \\
\leq&  \underbrace{\exp(B \eta_{s,r-1})}_{< 2} \left(\vnorm{U_{s}}_\HS+1\right) 
\leq 2 \left(\vnorm{U_{s}}_\HS+1\right).
\end{aligned}
\end{align}
Thus, we obtain the first claim as
\begin{align}
\begin{aligned}
\vnorm{U_{r} - U_{s}}_\HS 
\leq \sum_{t=s}^{r-1} \vnorm{U_{t+1} - U_{t}}_\HS 
\leq& 2 B \sum_{t=s}^{r-1} \eta_t  \left(\vnorm{U_{s}}_\HS+1\right) \\
=& 2 B \eta_{s,r-1} \left(\vnorm{U_{s}}_\HS+1\right).
\end{aligned}
\end{align}
Since $\vnorm{U_{s}}_\HS \leq \vnorm{U_{r}}_\HS+ \vnorm{U_{r}-U_s}_\HS$, the above relation also yields
\begin{align}
\begin{aligned}
\vnorm{U_{r} - U_{s}}_\HS 
&\leq 2 B \eta_{s,r-1} \left(\vnorm{U_{r}}_\HS+ \vnorm{U_{r}-U_s}_\HS+1\right)
\\
&\leq 2 B \eta_{s,r-1} \left(\vnorm{U_{r}}_\HS + 1\right)
+
\frac{1}{2} \vnorm{U_{r}-U_s}_\HS,
\end{aligned}
\end{align}
rearranging which gives the second claim, completing the proof of the lemma.
\end{proof}

\subsection{Proof of Lemma \ref{lemma.Kt}}
\label{pf.lemma.Kt}
We start with the first term (sampling error) in \eqref{eq.K.bias-variance}.
By the isomorphism in Lemma \ref{lemma.isomorphism}, we have 
\begin{align}
\vnorm{\left[K_t - K_\lambda\right]}_{\HS(\Hcal,\left[\Hcal\right]^\beta)}
= \vnorm{\left[U_t - U_\lambda\right]}_{\HS(\left[\Hcal\right]^\beta, \Hcal)}
\label{eq.Kt.mu_t}
\end{align}

We first introduce the following lemma that provides an upper bound for the $\gamma$-norm for elements in $\Hcal_V$ in terms of the $\HS$-norm of an element in $\HS(\Hcal,\Hcal)$. 
Recall that $\iota_\kappa$ is the linear isomorphism from $\HS(\Hcal,\Hcal)$ to $\Hcal_V$ in Lemma \ref{lemma.isomorphism}. 
\begin{lemma} (Bounding the $\beta$-norm)
For any $\beta \in (0,1)$ and $A \in \HS(\Hcal,\Hcal)$, we have
\begin{align}
\begin{aligned}
\vnorm{\left[A \right]}_{\HS(\left[\Hcal\right]^\beta ,\Hcal)}^2 
\leq \vnorm{A}_\HS^2 B_\infty^{1-\beta}.
\end{aligned}
\end{align}
\label{lemma.CME.gamma2HS.2}
\end{lemma}
\begin{proof}
By \cite[Lemma 2]{li2022optimal}, for any $A \in \HS(\Hcal,\Hcal)$,  we have
\begin{align}
    \vnorm{\left[A \right]}_{\HS(\left[\Hcal\right]^\beta ,\Hcal)}^2 \leq \vnorm{A C_{XX}^{\frac{1-\beta}{2}}}_\HS.
\label{eq.li.lemma2}
\end{align}

Recall that for two HS operator $A_1$ and $A_2$, we have $\vnorm{A_1 A_2}_\HS \leq \vnorm{A_1}_\HS \vnorm{A_2}_\op$ and $\vnorm{A_2}_\op \leq \vnorm{A_2}_\HS$, we get
\begin{align}
\begin{aligned}
\vnorm{A C_{XX}^{\frac{1-\beta }{2}}}_\HS^2 
\leq \vnorm{A}_\HS^2 \vnorm{C_{XX}^{\frac{1-\beta }{2}}}_\op^2
\leq \vnorm{A}_\HS^2 \vnorm{C_{XX}^{\frac{1-\beta }{2}}}_\HS^2
\leq \vnorm{A}_\HS^2 B_\infty^{1-\beta }.
\end{aligned}
\end{align}    
where the last line holds since $\vnorm{C_{XX}}_\HS \leq B_\infty$. Plugging this into \eqref{eq.li.lemma2} completes the proof.
\end{proof}

We can now relate the norm of the intermediate space to the $\HS$-norm as
\begin{align}
\vnorm{\left[U_t - U_\lambda\right]}_{\HS(\left[\Hcal\right]^\beta ,\Hcal)}^2 
\leq \vnorm{U_t - U_\lambda}_\HS^2 B_\infty^{1-\beta}.
\label{eq.lemma5.1}
\end{align}
To bound the bias term in \eqref{eq.K.bias-variance}, applying \cite[Lemma 1]{li2022optimal} gives 
$\vnorm{[U_\lambda] - U}_{\HS(\left[\Hcal\right]^\beta , \Hcal)}
\leq \vnorm{U}_{\HS(\left[\Hcal\right]^\beta, \Hcal)}$.
As a consequence, under Assumption \ref{assumption.src}, we have
\begin{align}
\vnorm{[K_\lambda] - K}_{\HS(\Hcal, \left[\Hcal\right]^\beta)}^2
=\vnorm{[U_\lambda] - \CMEO}_{\HS(\left[\Hcal\right]^\beta , \Hcal)}^2
\leq \Bsrc^2.
\label{eq.lemma5.2}
\end{align}
Combining \eqref{eq.Kt.mu_t}, \eqref{eq.lemma5.1} and \eqref{eq.lemma5.2} completes the proof.
\subsection{Proof of Theorem \ref{corollary.Koopman.asy}}
\label{appendix.CME.asy}

Recall from Lemma \ref{lemma.Kt}, we have
\begin{align}
\begin{aligned}
\vnorm{\left[K_t\right] - K}_{\HS(\Hcal,{[H]^\beta})}^2
\leq & 2\lambda^{-\left(\gamma+1\right)} B_\kappa^2 \vnorm{U_t - U_\lambda}_\HS^2
+  2 \Bsrc^2.
\end{aligned}
\end{align}
In the sequel, we characterize the convergence behavior of $\vnorm{U_t - U_\lambda}_\HS$. 
To prove the result, we construct an almost super-martingale sequence and leverage the almost supermartingale convergence theorem \citep{robbins1971convergence} to show that the sequence converges to some limit almost surely. Finally, we use the fact that the step-size sequence is nonsummable to prove the claim.

\emph{(Step 1)} Using recursion \eqref{eq.CME.fixedpt.SA}, for $t \in \Nset$, we have
\begin{align}
\begin{aligned}
\vnorm{U_{t+1} - U_\lambda}_\HS^2
=& \vnorm{U_{t}+ \eta_t \left(-\tilde{\nabla} R_\lambda\left(x_t,x^{+}_t;U_{t}\right)+\frac{E_t}{\eta_t} \right)- U_\lambda}_\HS^2\\
=& \vnorm{U_{t}- U_\lambda}_\HS^2
- 2 \eta_t \left \langle U_t - U_\lambda, \tilde{\nabla} R_\lambda\left(x_t,x^{+}_t;U_{t} \right) \right \rangle_\HS \\
&+ 2 \eta_t \left \langle U_t - U_\lambda, \frac{E_t}{\eta_t} \right \rangle_\HS
+ \eta_t^2 \vnorm{-\tilde{\nabla} R_\lambda\left(x_t,x^{+}_t;U_{t}\right)+\frac{E_t}{\eta_t}}_\HS^2\\
\leq & \vnorm{U_{t}- U_\lambda}_\HS^2
- 2 \eta_t \left \langle U_t - U_\lambda, \tilde{\nabla} R_\lambda\left(x_t,x^{+}_t;U_{t} \right) \right \rangle_\HS \\
&+ 2 \eta_t  \vnorm{U_t - U_\lambda}_\HS 
\vnorm{\frac{E_t}{\eta_t}}_\HS
+ 2 \eta_t^2 \vnorm{-\tilde{\nabla} R_\lambda\left(x_t,x^{+}_t;U_{t}\right)}_\HS^2\\
&+2 \eta_t^2 \vnorm{\frac{E_t}{\eta_t}}_\HS^2,
\end{aligned}
\end{align}
where the last line follows from Cauchy-Schwartz and $\vnorm{A+B}_\HS^2 \leq 2 \vnorm{A}_\HS^2 + 2 \vnorm{B}_\HS^2$ for $A,B \in \HS(\Hcal,\Hcal)$.
Since $\vnorm{E_t}\leq \ve_t$, we have
\begin{align}
\begin{aligned}
\vnorm{U_{t+1} - U_\lambda}_\HS^2
\leq & \vnorm{U_{t}- U_\lambda}_\HS^2
- 2 \eta_t \left \langle U_t - U_\lambda, \tilde{\nabla} R_\lambda\left(x_t,x^{+}_t;U_{t} \right) \right \rangle_\HS \\
&+ 2 \ve_t \vnorm{U_t - U_\lambda}_\HS
+ 2\eta_t^2 \vnorm{-\tilde{\nabla} R_\lambda\left(x_t,x^{+}_t;U_{t}\right)}_\HS^2
+ 2  \ve_t^2.
\end{aligned}
\end{align}
Taking conditional expectation with respect to $\Fcal_{t}$ gives
\begin{align}
\begin{aligned}
\E\left[\vnorm{U_{t+1} - U_\lambda}_\HS^2|\Fcal_t \right]
\leq & \vnorm{U_{t}- U_\lambda}_\HS^2
\underbrace{- 2 \eta_t \left \langle U_t - U_\lambda, \E\left[\tilde{\nabla} R_\lambda\left(x_t,x^{+}_t;U_{t} \right)|\Fcal_t \right] \right \rangle_\HS}_{:=T} \\
&+ 2 \ve_t \vnorm{U_t - U_\lambda}_\HS
+ 2\eta_t^2 \underbrace{\E\left[\vnorm{-\tilde{\nabla} R_\lambda\left(x_t,x^{+}_t;U_{t}\right)}_\HS^2|\Fcal_t \right]}_{\leq b_t^2 \text{ from } \eqref{assumption.asy.a.b.c}}\\
&+ 2  \ve_t^2.
\end{aligned}
\label{eq.asy.main}
\end{align}

To further bound the above equation, we next study the term $T$ as follows.
\begin{align}
\begin{aligned}
T
=&- 2 \eta_t \left \langle U_t - U_\lambda, \E\left[\tilde{\nabla} R_\lambda\left(x_t,x^{+}_t;U_{t} \right)|\Fcal_t \right] \right \rangle_\HS \\
=& - 2 \eta_t \left \langle U_t - U_\lambda, 
\nabla R_\lambda\left(U_{t} \right) \right \rangle_\HS \\
&+ 2\eta_t \left \langle U_t - U_\lambda, \nabla R_\lambda\left(U_{t} \right) - \E\left[\tilde{\nabla} R_\lambda\left(x_t,x^{+}_t;U_{t} \right)|\Fcal_t \right] \right \rangle_\HS \\
\leq &- 2 \eta_t \left \langle U_t - U_\lambda, 
\nabla R_\lambda\left(U_{t} \right) \right \rangle_\HS \\
&+ 2\eta_t \vnorm{U_t - U_\lambda}_\HS
\vnorm{\nabla R_\lambda\left(U_{t} \right) - \E\left[\tilde{\nabla} R_\lambda\left(x_t,x^{+}_t;U_{t} \right)|\Fcal_t \right]}_\HS
\\
\leq & - 2 \eta_t \left( R_\lambda\left(U_t\right) - R_\lambda\left(U_\lambda\right) \right) \\
&+ 2\eta_t \vnorm{U_t - U_\lambda}_\HS
\vnorm{\nabla R_\lambda\left(U_{t} \right) - \E\left[\tilde{\nabla} R_\lambda\left(x_t,x^{+}_t;U_{t} \right)|\Fcal_t \right]}_\HS
\\
\leq & -2 \eta_t \left( R_\lambda\left(U_t\right) - R_\lambda\left(U_\lambda\right) \right) 
+ 2\eta_t a_t \vnorm{U_t - U_\lambda}_\HS,
\end{aligned}  
\end{align}
where we have used Cauchy-Schwartz inequality, convexity of $R_\lambda$ from Lemma \ref{lemma.CME.gradients}, and our assumption in \eqref{assumption.asy.a.b.c}.
Substituting the above result into \eqref{eq.asy.main}, we have
\begin{align}
\begin{aligned}
\E\left[\vnorm{U_{t+1} - U_\lambda}_\HS^2|\Fcal_t \right]
\leq & \vnorm{U_{t}- U_\lambda}_\HS^2
-2 \eta_t \left( R_\lambda\left(U_t\right) - R_\lambda\left(U_\lambda\right) \right) \\
&+ 2\eta_t a_t \vnorm{U_t - U_\lambda}_\HS 
+ 2 \ve_t \vnorm{U_t - U_\lambda}_\HS
+ 2\eta_t^2 b_t^2
+ 2  \ve_t^2.
\end{aligned}
\end{align}
Since $\vnorm{U_t - U_\lambda}_\HS \leq \frac{1}{2}\left(1+\vnorm{U_t - U_\lambda}_\HS^2\right)$,  we have
\begin{align}
\begin{aligned}
\E\left[\vnorm{U_{t+1} - U_\lambda}_\HS^2|\Fcal_t \right]
=&\left( 1 + \eta_t a_t + \ve_t \right) \vnorm{U_{t}- U_\lambda}_\HS^2
-2 \eta_t \left( R_\lambda\left(U_t\right) - R_\lambda\left(U_\lambda\right) \right)
\\
&+ 2\eta_t^2 b_t^2
+ 2 \ve_t^2
+ \eta_t a_t + \ve_t.
\end{aligned}
\label{eq.asy.martingale}
\end{align}

\emph{(Step 2)} Notice that \eqref{eq.asy.martingale} suggests that $\vnorm{U_{t}- U_\lambda}_\HS^2$ is an almost supermartignale sequence. Thus, we can use the almost supermartingale convergence result \citep{robbins1971convergence}, which is stated as follows.\\

\textbf{Theorem} (Almost Supermartingales Convergence Theorem \citep{robbins1971convergence})
Let $m_t,p_t,q_t,s_t$ be $\Fcal_t$-measurable finite nonnegative random variable with filtration $\left\{\Fcal\right\}_{t \in \Nset}$. If $\sum_{t \in \Nset} p_t <\infty$, $\sum_{t \in \Nset} q_t <\infty$, and 
\begin{align*}    
\E [m_{t+1}|\Fcal_t ] \leq m_t (1+p_t) + q_t - s_t,
\end{align*}
almost surely. Then $\lim_{t \to \infty} m_t$ exists and is finite and $\sum_{t \in \Nset} s_t < \infty$ almost surely.\\

To apply the Almost Supermartingales Convergence Theorem, 
note that under assumptions in Theorem \ref{corollary.Koopman.asy}, we have 
\begin{align}
\begin{aligned}
&\sum_{t \in \Nset} \left( \eta_t a_t + \ve_t \right)
\leq \sum_{t \in \Nset} \left( \eta_t a_t + \bcmp \eta_t^2 \right)
< \infty,\\
&\sum_{t \in \Nset} \left( 2\eta_t^2 b_t^2
+ 2 \ve_t^2 + \eta_t a_t + \ve_t\right) < \infty.
\end{aligned}
\end{align}

Define
$m_t :=  \vnorm{U_{t}- U_\lambda}_\HS^2$, 
$p_t := \eta_t a_t + \ve_t$, 
$q_t := 2\eta_t^2 b_t^2 + 2 \ve_t^2 + \eta_t a_t + \ve_t$, and
$s_t := 2 \eta_t \left( R_\lambda\left(U_t\right) \right)$.
By the Almost Supermartingales Convergence Theorem, $\vnorm{U_{t}- U_\lambda}^2_\HS$ converges to some nonnegative random variable almost surely and
\begin{align}
\sum_{t \in \Nset} \eta_t \left( R_\lambda\left(U_t\right) - R_\lambda\left(U_\lambda\right) \right) < \infty, \quad 
\rho-\text{a.s.}
\end{align}
As $\sum_{t \in \Nset} \eta_t = \infty$, we have
\begin{align}
\liminf_{t \to \infty} R_\lambda\left(U_t\right) = R_\lambda\left(U_\lambda\right), \quad 
\rho-\text{a.s.}
\end{align}

Since $\left\{\vnorm{U_t - U_\lambda}^2_\HS\right\}$ converges almost surely, let $\vnorm{U_t - U_\lambda}^2_\HS \to \xi,$ for some $\xi \geq 0$. We next show $\xi = 0$. As $\{U_t\}_{t \in \Nset}$ is a bounded sequence, let $\{U_{tl}\}_{l=0}^\infty$ be a bounded subsequence of $\{U_t\}_{t \in \Nset}$ along which the $\liminf$ is reached, i.e.,
\begin{align}
\lim_{l \to \infty} R_\lambda\left(U_{tl}\right) 
= \liminf_{t \to \infty} R_\lambda\left(U_t\right) 
= R_\lambda\left(U_\lambda\right).
\label{eq.asy.subsequence}
\end{align}
By the Banach-Alaoglu theorem, there exists a weakly convergent subsequence of $\{U_{tl}\}_{l=0}^\infty$ converging to some $U^\circ$. By Lemma \ref{lemma.CME.gradient}, $R_\lambda$ is weak l.s.c. Together with \eqref{eq.asy.subsequence}, we have that the value of $R_\lambda$ evaluated at the weak limit $U^\circ$ satisfies $R_\lambda\left(U^\circ\right) = R_\lambda\left(U_\lambda\right)$. Also from Lemma \ref{lemma.CME.gradient},  $U_\lambda$ is the unique minimizer of $R_\lambda$. Thus, we conclude $U^\circ = U_\lambda$ and $\vnorm{U_t - U_\lambda}^2_\HS$ converges to $0$ over said subsequence, implying
\begin{align}
\lim_{t \to \infty} \vnorm{U_t - U_\lambda}_\HS^2 = 0, \quad \rho-\text{a.s.}
\end{align}

The rest of the proof follows from substituting the above result into \eqref{eq.mu_t.gamma}.



\subsection{Proof of Lemma \ref{lemma.CME.mixing}}
\label{lemma.CME.mixing.pf}

Let $\rho = \pi P'$ where $P'$ is defined in \eqref{eq:P'.def}.
Consider $Y_i := (X_i, X_i^+)$ for $i \in \Nset$, where such successive pairs are formed from one-step transitions of the process. We define a Markov transition kernel $P_Y$ on $\Xset \times \Xset$ that governs the evolution of $Y_i$ to $Y_{i+1}$. Specifically, given $Y_i = (x_i,x_{i+1})$, the next state $Y_{i+1} = \left(x_i',{x_{i+1}}'\right)$ is generated be first sampling
$x_i' \sim P(\cdot \mid x_{i+1})$, and then ${x_{i+1}}' \sim P(\cdot \mid x_i')$.
Formally, for any measurable set $B \in \Bcal_X \times \Bcal_X$, and any $(x_i,x_{i+1}) \in \Xset \times \Xset$, we have
\begin{align}
P_Y\left(B \mid \left(x_i,{x_{i+1}} \right) \right) 
:= \int_{\Xset} \int_{\Xset} \bone_{(x_i',{x_{i+1}}') \in B} P(\dd {x_{i+1}}' \mid x_i') P (\dd x_i' \mid {x_{i+1}}).
\end{align}
Thus, $P_Y$ defines a Markov process on $\Xset \times \Xset$ with transition from $\left(x_i,x_{i+1}\right)$ to $\left(x_i',{x_{i+1}}'\right)$.

Using $P_Y$ defined above, if $\left\{X_t\right\}_{t \in \Tset}$ is uniformly geometrically ergodic with a unique stationary distribution $\pi$, after $\tau(\delta)$ steps, we have
\begin{align}
\sup_{x\in\Xset \times \Xset}\vnorm{{P}_Y(x,\cdot)-\pi P'}_{TV} \leq
\sup_{x\in \Xset}\vnorm{P^{s}(x,\cdot)-\pi}_{TV}
\leq 
R\gamma^s\quad,
\label{eq.mixing.contractive}
\end{align}
where the first equality follows from the fact that TV distance is contractive with respect to the kernels.
That is, we have $\sup_{x\in\Xset \times \Xset}\vnorm{{P}_Y(x,\cdot)-\pi P'}_{TV} \leq \delta$.

We are now ready to prove the result. For $t \geq \tau(\delta)$, we can write the Bochner conditional expectation as Bochner integral w.r.t $P_Y^{t+s}\left(\cdot|\Fcal_s \right)$, $\rho(\cdot)$ as
\begin{align}
\begin{aligned}
&\vnorm{\E \left[\tilde{\nabla} R_\lambda \left(x_{t+s},x^{+}_{t+s};U\right)|\Fcal_s\right] - \nabla R_\lambda \left(U\right)}_\HS \\
=& \vnorm{\int \tilde{\nabla} R_\lambda \left(x_{t+s},x^{+}_{t+s};U\right) dP_Y^{t+s}\left(x,x^{+}|\Fcal_s \right) - \int \tilde{\nabla} R_\lambda \left(x,x^{+};U\right)  \dd \rho\left( x,x^{+}\right)}_\HS.
\end{aligned}
\end{align}
To bound the above term, for any $A\in \HS(\Hcal,\Hcal)$ with $\vnorm{A}_{\HS} \leq 1$, define the scalar-valued function 
\begin{align}
f_A(x,x^{+}) = \left \langle A, \tilde{\nabla} R_\lambda \left(x,x^{+};U\right)  \right \rangle_{\HS}.   
\label{eq.f_A}
\end{align}
From the affine scaling property in Lemma \ref{lemma.CME.gradients}, we have 
$\left|f_A(x,x^{+})\right| \leq B_\kappa \left(\vnorm{U}_\HS+1\right)$. Hence, we can apply the total variation formula \cite[Proposition 4.5]{levin2017markov} to obtain
\begin{align}
\begin{aligned}
\left|\int f_A \dd P_Y^{t+s}\left(x,x^{+}|\Fcal_s \right)
 - \int f_A \dd \rho \right|
\leq 2 B_\kappa \left(\vnorm{U}_\HS+1\right) \vnorm{P_Y^{t+s}\left(\cdot|\Fcal_s \right) - \rho(\cdot)}_\TV.
\end{aligned}
\end{align}
Plugging back \eqref{eq.f_A} and taking supremum over $\vnorm{A}_\HS \leq 1$, we have
\begin{align}
\begin{aligned}
&\vnorm{\int \tilde{\nabla} R_\lambda \left(x_{t+s},x^{+}_{t+s};U\right) dP_Y^{t+s}\left(x,x^{+}|\Fcal_s \right) - \int \tilde{\nabla} R_\lambda \left(x,x^{+};U\right)  \dd \rho\left( x,x^{+}\right)}_\HS \\
\leq & 2 B_\kappa \left(\vnorm{U}_\HS+1\right) \vnorm{P_Y^{t+s}\left(\cdot|\Fcal_s \right) - \rho(\cdot)}_\TV \\
\leq &2 B_\kappa \delta \left(\vnorm{U}_\HS+1\right),
\end{aligned}
\end{align}
for any $s \in \Tset$ and $t \geq \tau(\delta)$, where the last line holds by \eqref{eq.mixing.contractive}.

\subsection{Proof of Lemma \ref{thm.CME.FT.one-step}}
\label{thm.CME.FT.one-step.pf}
Since $\vnorm{U(t) - U_\lambda}_{\HS}^2=\left \langle U_t - U_\lambda, U_t - U_\lambda\right \rangle_\HS$, for $t \geq \tau_{t}$,  we have,
\begin{align}
\begin{aligned}
&\E \left[ \vnorm{U_{t+1}-U_\lambda}_\HS^2 | \Fcal_{t - \tau_t} \right]
-\E \left[ \vnorm{U_{t}-U_\lambda}_\HS^2 | \Fcal_{t - \tau_t} \right] \\
&=\E \left[   
\vnorm{\left(U_{t+1} -U_t \right) 
+ \left(U_t  -U_\lambda \right)}_\HS^2
-\vnorm{U_{t}-U_\lambda}_\HS^2
| \Fcal_{t - \tau_t} \right] \\
&=\E \left[   
2 \left \langle U_{t+1} - U_t, U_{t} - U_\lambda\right \rangle_\HS 
+ \vnorm{U_{t+1} - U_{t}}_\HS^2| \Fcal_{t - \tau_t}\right]. 
\end{aligned}
\end{align}

Expanding $U_{t+1}-U_t$ using recursion \eqref{eq.CME.fixedpt.SA}, we have
\begin{align}
\begin{aligned}
&\E \left[ \vnorm{U_{t+1}-U_\lambda}_\HS^2 | \Fcal_{t - \tau_t} \right]
-\E \left[ \vnorm{U_{t}-U_\lambda}_\HS^2 | \Fcal_{t - \tau_t} \right] \\
&= 2\E \left[\left \langle U_{t} - U_\lambda, U_{t+1} - U_{t}\right \rangle_\HS| \Fcal_{t - \tau_t} \right] + \E \left[ \vnorm{U_{t+1} - U_{t}}_\HS^2 | \Fcal_{t - \tau_t} \right] \\
&=2\E \left[ \left \langle U_{t} - U_\lambda,  \eta_t \left(-\tilde{\nabla} R_\lambda\left(x_t,x^{+}_t,U_{t}\right)+\frac{E_t}{\eta_t} \right)\right \rangle_\HS| \Fcal_{t - \tau_t} \right]
\\
& \quad + \E \left[ \vnorm{\eta_t \left(-\tilde{\nabla} R_\lambda\left(x_t,x^{+}_t,U_{t}\right)+\frac{E_t}{\eta_t} \right)}_\HS^2 | \Fcal_{t - \tau_t} \right] \\
&=2 \eta_t \underbrace{\E \left[ \left \langle U_{t} - U_\lambda, -\nabla R_\lambda\left(U_{t}\right) \right \rangle_\HS| \Fcal_{t - \tau_t} \right]}_{:=T_1}
+ 2 \eta_t \underbrace{ \E \left[ \left \langle U_{t} - U_\lambda, \frac{E_t}{\eta_t} \right \rangle_\HS| \Fcal_{t - \tau_t} \right]}_{:=T_2} \\
& \quad + 2 \eta_t \underbrace{\E \left[ \left \langle U_{t} - U_\lambda, -\tilde{\nabla} R_\lambda\left(x_t,x^{+}_t,U_{t}\right) + \nabla R_\lambda\left(U_{t}\right) \right \rangle_\HS| \Fcal_{t - \tau_t} \right]}_{:=T_3}\\
& \quad + \eta_t^2 \underbrace{\E \left[ \vnorm{-\tilde{\nabla} R_\lambda\left(x_t,x^{+}_t,U_{t}\right)+\frac{E_t}{\eta_t}}_\HS^2 | \Fcal_{t - \tau_t} \right]}_{:=T_4}.
\end{aligned}
\label{eq.pf.one-step-main}
\end{align}

In the above decomposition, $T_1$ corresponds to the negative drift. This term can be bounded by the strong convexity established in Lemma \ref{lemma.CME.gradient}.
$T_2$ follows from the error due to compression and depends on a proper choice of sparsification budget $\{\ve_t\}_{t \in \Tset}$. 
$T_3$ is a consequence of Markovian sampling, and if we were to collect IID samples, $T_3$ equals zero. Thanks to Lemma \ref{lemma.CME.mixing}, $T_3$ can be bounded by invoking the mixing property. Lastly, $T_4$ collects the error due to the discretization of ODE and compression. It can be controlled under a proper choice of stepsizes and compression budget. 
The proof will seek to analyze a discretized version of the continuous-time dynamics $\dot{U}(t)  = -\nabla R_\lambda\left(U\left(t\right)\right)$ for $U \in \HS(\Hcal)$.
We next provide an upper bound for each term above in four steps with the final step combining these four results.

\emph{(Step 1)} 
Recall from Lemma \ref{lemma.CME.gradient} that $R_\lambda$ is strongly convex. Continue from \eqref{eq.CME.strongly-convex} in the proof of Lemma \ref{lemma.CME.gradients}, we have for $U_1,U_2 \in \HS(\Hcal,\Hcal)$,
\begin{align}
\begin{aligned}
R_\lambda\left(U_1\right) - R_\lambda\left(U_2 \right) 
\geq& \left \langle \nabla R_\lambda\left(U_2\right), U_1 - U_2 \right \rangle_\HS 
+  \frac{\lambda}{2} \vnorm{U_1-U_2}^2_\HS, \\
R_\lambda\left(U_2\right) - R_\lambda\left(U_1 \right) 
\geq& \left \langle \nabla R_\lambda\left(U_1\right), U_2 - U_1 \right \rangle_\HS 
+  \frac{\lambda}{2} \vnorm{U_1-U_2}^2_\HS.
\end{aligned}
\end{align}
Adding the above two relations, we get
\begin{align}
\begin{aligned}
0 \geq& \left \langle \nabla R_\lambda\left(U_2\right)- \nabla R_\lambda\left(U_1\right), U_1 - U_2 \right \rangle_\HS 
+  \lambda\vnorm{U_1-U_2}_\HS^2.
\end{aligned}
\end{align}
Setting $U_1 = U$, $U_2 = U_\lambda$ for which $\nabla R_\lambda\left(U_\lambda\right)=0$, we have
\begin{align}
\left \langle -\nabla R_\lambda(U), U - U_\lambda \right \rangle_\HS \leq -\lambda \vnorm{U - U_\lambda}_\HS^2.
\label{lemma.CME.gradients.convex}
\end{align}
A bound on $T_1$ then follows as
\begin{align}
T_1 \leq -2  \lambda \E \left[ \vnorm{U_{t} - U_\lambda}_\HS^2| \Fcal_{t - \tau_t} \right].
\label{eq.CME.T1}
\end{align}

\emph{(Step 2)}  To bound $T_2$, recall that $\vnorm{E_t}_\HS \leq \ve_t$, and we deduce
\begin{align}
\begin{aligned}
T_2 =   \E \left[ \left \langle U_{t} - U_\lambda, \frac{E_t}{\eta_t}\right \rangle_\HS| \Fcal_{t - \tau_t} \right] 
\leq& \frac{1}{\eta_t} \E \left[ \vnorm{U_{t} - U_\lambda}_\HS \vnorm{ E_t}_\HS| \Fcal_{t - \tau_t} \right]\\
\leq& \frac{1}{\eta_t} \ve_t \E \left[ \vnorm{U_{t} - U_\lambda}_\HS| \Fcal_{t - \tau_t} \right].
\end{aligned}
\end{align}

To further bound $\vnorm{U_{t} - U_\lambda}_\HS$, we use triangle inequality and Lemma \ref{lemma.CME.bound} to obtain
\begin{align}
\begin{aligned}
\vnorm{U_{t} - U_\lambda}_\HS
\leq \vnorm{U_{t}}_\HS + \vnorm{U_\lambda}_\HS
=  2 B_\infty/\lambda 
\implies T_2 \leq  \frac{2 \ve_t B_\infty}{\eta_t \lambda}.
\end{aligned}
\end{align}

\emph{(Step 3)} To bound $T_3$, we invoke the mixing property, and we rearrange $T_3$ as
\begin{align}
\begin{aligned}
T_3
=&\E \left[ \left \langle U_{t} - U_\lambda, -\tilde{\nabla} R_\lambda\left(x_t,x^{+}_t,U_{t}\right) + \nabla R_\lambda\left(U_{t}\right) \right \rangle_\HS| \Fcal_{t - \tau_t} \right] \\
=& \underbrace{\E \left[ \left \langle U_{t} - U_{t - \tau_t}, -\tilde{\nabla} R_\lambda\left(x_t,x^{+}_t,U_{t}\right) + \nabla R_\lambda\left(U_{t}\right) \right \rangle_\HS| \Fcal_{t - \tau_t} \right]}_{:=T_{3,1}} \\
&+ \underbrace{\E \left[ \left \langle U_{t - \tau_t} - U_\lambda, -\tilde{\nabla} R_\lambda \left(x_t,x^+_t,U_{t - \tau_t}\right) +\nabla R_\lambda\left(U_{t - \tau_t}\right) \right \rangle_\HS| \Fcal_{t - \tau_t} \right]}_{:=T_{3,2}} \\
&+ \E \left[ \left \langle U_{t - \tau_t} - U_\lambda, 
-\tilde{\nabla} R_\lambda \left(x_t,x^+_t,U_{t}\right) +\tilde{\nabla} R_\lambda \left(x_t,x^+_t,U_{t - \tau_t}\right) \right.\right.\\
&\left. \left. \quad \quad -\nabla R_\lambda \left(U_{t - \tau_t}\right) + \nabla R_\lambda \left(U_{t}\right) \right \rangle_\HS| \Fcal_{t - \tau_t} \right]
\end{aligned}
\end{align}
Call the last term $T_{3,3}$. We next bound $T_{3,(1,2,3)}$ separately. In $T_{3,1}$, we apply Lemma \ref{lemma.CME.FT.fdiff} to bound $\vnorm{U_{t} - U_{t - \tau_t}}_\HS$ and Lemma \ref{lemma.CME.gradients} to bound the norm of gradients. Specifically, the Cauchy-Schwartz inequality gives
\begin{align}
\begin{aligned}
T_{3,1} 
\leq& \E \left[ \vnorm{U_{t} - U_{t - \tau_t}}_\HS \vnorm{-\tilde{\nabla} R_\lambda\left(x_t,x^{+}_t,U_{t}\right) + \nabla R_\lambda\left(U_{t}\right)}_\HS | \Fcal_{t - \tau_t} \right] \\
\stackrel{(a)}{\leq}& \E \left[ 4B \eta_{t-\tau_t,t-1} \left(\vnorm{U_{t}}_\HS+1\right) 
\vnorm{-\tilde{\nabla} R_\lambda\left(x_t,x^{+}_t,U_{t}\right) + \nabla R_\lambda\left(U_{t}\right)}_\HS | \Fcal_{t - \tau_t} \right] \\
\stackrel{(b)}{\leq}& \E \left[ 4B \eta_{t-\tau_t,t-1} \left(\vnorm{U_{t}}_\HS+1\right) \right. \\
&\left. \times \left( \vnorm{-\tilde{\nabla} R_\lambda\left(x_t,x^{+}_t;U_{t}\right)}_\HS + \vnorm{-\nabla R_\lambda\left(U_{t}\right)}_\HS \right)  | \Fcal_{t - \tau_t} \right]\\
\stackrel{(c)}{\leq}& 8 B^2 \eta_{t-\tau_t,t-1} \E \left[\left(\vnorm{U_{t}}_\HS+1\right)^2 | \Fcal_{t - \tau_t} \right] \\
\leq & 8 B^2 \eta_{t-\tau_t,t-1} \E \left[\left(\vnorm{U_{t} - U_\lambda}_\HS+ \vnorm{U_\lambda}_\HS+1\right)^2 | \Fcal_{t - \tau_t} \right] \\
\leq & 16 B^2 \eta_{t-\tau_t,t-1} \left(\E \left[\vnorm{U_{t} - U_\lambda}_\HS^2| \Fcal_{t - \tau_t} \right]+ \Xi_\lambda^2\right)..
\end{aligned}
\label{eq.T31}
\end{align}
To obtain (a), we use Lemma \ref{lemma.CME.FT.fdiff} to get
$\vnorm{U_{t} - U_{t - \tau_t}}_\HS \leq  4B \eta_{t-\tau_t,t-1} \left(\vnorm{U_{t}}_\HS+1\right)$.
Step (b) holds due to triangle inequality and step (c) follows from Lemma \ref{lemma.CME.gradients}\ref{lemma.CME.gradients.affine}.

In order to bound $T_{3,2}$, Cauchy-Schwatz inequality gives
\begin{align}
\begin{aligned}
T_{3,2} 
\leq & \vnorm{U_{t - \tau_t} - U_\lambda}_\HS \vnorm{\E \left[-\tilde{\nabla} R_\lambda \left(x_t,x^+_t,U_{t - \tau_t}\right)| \Fcal_{t - \tau_t} \right] +\nabla R_\lambda\left(U_{t - \tau_t}\right)}_\HS \\
\leq & 2 B_\kappa \eta_t \vnorm{U_{t - \tau_t} - U_\lambda}_\HS  \left(\vnorm{U_{t - \tau_t}}_\HS+1 \right),
\end{aligned}
\label{eq.T_3,2}
\end{align}
where we apply Lemma \ref{lemma.CME.mixing} to bound the bias of operator-valued stochastic gradients.
We next attempt to obtain a bound of $\vnorm{U_{t - \tau_t} - U_\lambda}_\HS$ in \eqref{eq.T_3,2} in terms of $\vnorm{U_{t} - U_{\lambda}}_\HS$ as 
\begin{align}
\begin{aligned}
\vnorm{U_{t - \tau_t} - U_\lambda}_\HS
\stackrel{(a)}{\leq}& \vnorm{U_{t} - U_{t - \tau_t} }_\HS 
+ \vnorm{U_{t} - U_\lambda}_\HS \\
\stackrel{(b)}{\leq}&  4 B_\kappa \eta_{t - \tau_t,t-1} \left( \vnorm{U_t}_\HS + 1\right)
+ \vnorm{U_{t} - U_\lambda}_\HS \\
\stackrel{(c)}{\leq}&  \vnorm{U_t}_\HS + 1
+ \vnorm{U_{t} - U_\lambda}_\HS \\
\stackrel{(d)}{\leq}& \vnorm{U_\lambda}_\HS+ \vnorm{U_t - U_\lambda}_\HS + 1
+ \vnorm{U_{t} - U_\lambda}_\HS \\
=& 2\vnorm{U_t - U_\lambda}_\HS +  \vnorm{U_\lambda}_\HS+ 1,
\end{aligned}
\label{eq.153}
\end{align}
where (a) follows from triangle inequality, (b) holds due to Lemma \ref{lemma.CME.FT.fdiff}, (c) follows from assumption $\eta_{t - \tau_t,t-1}\leq 1/4B$, and (d) holds since $\vnorm{U_t}_\HS= \vnorm{U_{t} - U_\lambda+ U_\lambda}_\HS \leq \vnorm{U_t - U_\lambda}_\HS + \vnorm{U_\lambda}_\HS$.

Likewise, we can bound $\vnorm{U_{t - \tau_t}}_\HS+1$ in terms of $\vnorm{U_t - U_\lambda}$ as 
\begin{align}
\begin{aligned}
\vnorm{U_{t - \tau_t}}_\HS+1
\leq&\vnorm{U_{t - \tau_t} - U_t}_\HS
+ \vnorm{U_{t} - U_\lambda}_\HS
+ \vnorm{U_\lambda}_\HS +1,\\
\stackrel{(a)}{\leq}& \vnorm{U_t}_\HS + 1
+ \vnorm{U_{t} - U_\lambda}_\HS
+ \vnorm{U_\lambda}_\HS +1 \\
\leq & \left( \vnorm{U_t-U_\lambda}_\HS + \vnorm{U_\lambda}_\HS + 1\right)
+ \vnorm{U_{t} - U_\lambda}_\HS
+ \vnorm{U_\lambda}_\HS +1 \\
=& 2 \left(\vnorm{U_t - U_\lambda}_\HS +  \vnorm{U_\lambda}_\HS+ 1\right),
\end{aligned}
\label{eq.155}
\end{align}
where (a) follows from Lemma \ref{lemma.CME.FT.fdiff} and the assumption that $\eta_{t - \tau_t,t-1}\leq 1/4B$.
Notice that $U_{t - \tau_t}$ is $\Fcal_{t - \tau_t}$-adapted. 
Substituting \eqref{eq.153} and \eqref{eq.155} into \eqref{eq.T_3,2} yields that
\begin{align}
\begin{aligned}
T_{3,2} 
\leq &2 B_\kappa \eta_t \E\left[ 4\left(\vnorm{U_t - U_\lambda}_\HS +  \vnorm{U_\lambda}_\HS+ 1\right)^2 | \Fcal_{t - \tau_t}\right] \\
\leq & 16 B_\kappa \eta_t 
\left(\E \left[\left( \vnorm{U_{t} - U_\lambda}_\HS^2 \right)| \Fcal_{t - \tau_t} \right]
+\Xi_\lambda^2 ]\right).
\end{aligned}
\label{eq.T32}
\end{align}

We next provide an upper bound for $T_{3,3}$. Analogous reasoning as before, we leverage Lemma \ref{lemma.CME.gradients} to obtain
\begin{align}
\begin{aligned}
T_{3,3}
\leq&\E \left[ \vnorm{U_{t - \tau_t} - U_\lambda}_\HS 
\left(\vnorm{-\tilde{\nabla} R_\lambda\left(x_t,x^{+}_t,U_{t}\right)+\tilde{\nabla} R_\lambda\left(x_t,x^{+}_t,U_{t - \tau_t}\right)}_\HS \right.\right.\\
&\left.\left.+ \vnorm{-\nabla R_\lambda\left(U_{t - \tau_t}\right) + \nabla R_\lambda\left(U_{t}\right)}_\HS \right)| \Fcal_{t - \tau_t} \right] \\
\leq&2 B_\kappa\E \left[ \vnorm{U_{t - \tau_t} - U_\lambda}_\HS \vnorm{U_{t} - U_{t- \tau_{t}}}_\HS| \Fcal_{t - \tau_t} \right] \\
\stackrel{(a)}{\leq}&8 B_\kappa B \eta_{t - \tau_t,t-1} \E \left[ \vnorm{U_{t - \tau_t} - U_\lambda}_\HS 
\left(\vnorm{U_{t}}_\HS +1 \right)| \Fcal_{t - \tau_t} \right]\\
\leq& 8 B^2 \eta_{t - \tau_t,t-1}\\
&\times \E \left[ \left( 2\vnorm{U_{t} - U_\lambda}_\HS + \vnorm{U_\lambda}_\HS+1\right) \times \left(\vnorm{U_{t}- U_\lambda}_\HS+\vnorm{U_\lambda}_\HS +1 \right)| \Fcal_{t - \tau_t} \right] \\
\leq & 16 B^2 \eta_{t - \tau_t,t-1} \E \left[\left(\vnorm{U_{t}- U_\lambda}_\HS+\vnorm{U_\lambda}_\HS +1 \right)^2| \Fcal_{t - \tau_t} \right] \\
\leq& 32 B^2 \eta_{t - \tau_t,t-1} \left( \E \left[\vnorm{U_{t}- U_\lambda}_\HS^2| \Fcal_{t - \tau_t} \right] + \left(\vnorm{U_\lambda}_\HS +1 \right)^2\right),
\end{aligned}
\label{eq.T33}
\end{align}
where we apply Lemma \ref{lemma.CME.FT.fdiff} to bound $\vnorm{U_{t} - U_{t - \tau_t}}_\HS$ in (a).
Combing the bounds on $T_{3,1}, T_{3,2}$ and $T_{3,3}$, we infer
\begin{align}
\begin{aligned}
T_3 \leq & 
\left(48 B^2 \eta_{t-\tau_t,t-1} + 16 B_\kappa \eta_t \right)
\left(
\E \left[\vnorm{U_{t} - U_\lambda}_\HS^2| \Fcal_{t - \tau_t} \right] + \Xi_\lambda^2 \right)
\end{aligned}
\end{align}

\emph{(Step 4)} Finally, Assumption \ref{assumption.3} (b) guarantees that there exists $B_\ve>0$ such that $\ve_t \leq B_\ve \eta_t \left(\vnorm{U_t}_\HS +1\right)$, for all $t \in \Nset$. In other words, $\ve_t$ scales affinely with respect to the current iterates. We can then apply affine scaling of gradients in Lemma \ref{lemma.CME.gradients} to bound $\vnorm{-\tilde{\nabla} R_\lambda\left(x_t,x^{+}_t,U_{t}\right)}_\HS$. Together with the bound on compression error $E_t$, we have
\begin{align}
\begin{aligned}
T_4
=&\E \left[ \vnorm{-\tilde{\nabla} R_\lambda\left(x_t,x^{+}_t,U_{t}\right)+\frac{E_t}{\eta_t}}_\HS^2 | \Fcal_{t - \tau_t} \right] \\
\leq &  \E \left[ \left(\vnorm{-\tilde{\nabla} R_\lambda\left(x_t,x^{+}_t,U_{t}\right)}_\HS+\ve_t/\eta_t \right)^2 | \Fcal_{t - \tau_t} \right] \\
\leq &  \E \left[ \left(B_\kappa\left(\vnorm{U_{t}}_\HS+1 \right)+B_\ve  \left(\vnorm{U_{t}}_\HS +1\right) \right)^2 | \Fcal_{t - \tau_t} \right] \\
=&  \E \left[ B^2\left(\vnorm{U_{t}}_\HS+1 \right)^2 | \Fcal_{t - \tau_t} \right] \\
\leq & \E \left[ B^2\left(\vnorm{U_{t} - U_\lambda}_\HS+\vnorm{U_\lambda}_\HS+1 \right)^2 | \Fcal_{t - \tau_t} \right] \\
\leq&  2 B^2 \left( \E \left[\vnorm{U_{t} - U_\lambda}_\HS^2 | \Fcal_{t - \tau_t} \right] + \Xi_\lambda^2\right),
\end{aligned}   
\end{align}
where the second line follows from \eqref{eq.CME.stop}, and we bound the term $\vnorm{U_{t}}_\HS$ via $\vnorm{U_{t} - U_{\lambda}}_\HS$ in the last line.

\emph{(Step 5)} 
Combing the bounds on $T_1$ to $T_4$, we have
\begin{align}
\begin{aligned}
&\E \left[ \vnorm{U_{t+1}-U_\lambda}_\HS^2 | \Fcal_{t - \tau_t} \right]
-\E \left[ \vnorm{U_{t}-U_\lambda}_\HS^2 | \Fcal_{t - \tau_t} \right]  \\
& \leq  \left(- 2\eta_t \lambda  + \left(98 B^2 + 32 B\right) \eta_t \eta_{t-\tau_t,t-1} \right)
\E \left[ \vnorm{U_{t} - U_\lambda}_\HS^2 | \Fcal_{t - \tau_t} \right]\\
& \quad + \left(98 B^2  + 32 B \right) \eta_t \eta_{t-\tau_t,t-1} \Xi_\lambda^2 
+ 4 \ve_t B_\infty/\lambda \\
&= \left(- 2\eta_t \lambda  + \check{B} \eta_t \eta_{t-\tau_t,t-1} \right)
\E \left[ \vnorm{U_{t} - U_\lambda}_\HS^2 | \Fcal_{t - \tau_t} \right]\\
& \quad + \check{B} \eta_t \eta_{t-\tau_t,t-1} \Xi_\lambda^2 
+ 4 \ve_t B_\infty/\lambda,
\end{aligned}
\end{align}
since $B$ dominates $B_\kappa$ and $\eta_t \leq \eta_{t-1} \leq \eta_{t-\tau_t, t-1}$.
From the above result, the second part of the lemma follows from elementary algebra; the steps are omitted.

\subsection{Proof of Theorem \ref{thm.Koopman.FT.general}}
\label{thm.CME.general.pf}

For $t \geq \tau_*$, we have
\begin{align}
\begin{aligned}
\E\left[ \vnorm{U_{t}-U_\lambda}_\HS^2 \right]
\leq & \left(1 -\lambda \eta_{t-1} \right)\E\left[ \vnorm{U_{t-1}-U_\lambda}_\HS^2 \right] 
+ \Theta_1\left(t-1,\bcmp,\lambda \right) \\
\leq & \E\left[ \vnorm{U_{\tau_*}-U_\lambda}_\HS^2 \right] 
\left(\Pi_{j=\tau_*}^{t-1} \left(1 -\lambda \eta_j \right)\right)\\
&+  \sum_{i=\tau_*}^{t-1}
\Theta_1\left(i,\bcmp,\lambda\right)
\left(\Pi_{j=i+1}^{t-1} \left(1 -\lambda \eta_j \right) \right).
\end{aligned}
\label{eq.thm10}
\end{align}

In addition, Lemma \ref{lemma.CME.bound} gives
$\E\left[ \vnorm{U_{\tau_t}-U_\lambda}_\HS^2 \right]
\leq \E\left[ 2 \vnorm{U_{t-\tau_t}}_\HS^2+ 2 \vnorm{U_\lambda}_\HS^2 \right]
\leq  4B_\infty^2/\lambda^2$.
Plugging into \eqref{eq.thm10}, we have
\begin{align}
{\small
\begin{aligned}
\E\left[ \vnorm{U_{t}-U_\lambda}_\HS^2 \right]
\leq&
4\frac{B_\infty^2}{\lambda^2} \Psi\left(t-1,\tau_*\right)
+  \sum_{i=\tau_*}^{t-1} \Psi(t-1,i+1) \Theta_1\left(i,\bcmp,\lambda\right).
\end{aligned}}
\label{eq.U.general}
\end{align}
Substituting this into Lemma \ref{lemma.Kt} proves the claim.

\subsection{Proof of Corollary \ref{corollary.FT.const}}
\label{thm.FT.const.pf}
Since the stepsizes are constant, i.e., $\eta_t = \eta$ for all $t \in \Tset$, we use the notation $\Theta_1'\left(\bcmp,\lambda\right):=\Theta_1\left(t,\bcmp,\lambda\right)$. 
Notice that a direct consequence of \eqref{eq.U.general} is that when $\eta_t = \eta$ and $\ve_t = \ve$, we have that for all $t \geq \tau_\eta$,
\begin{align}
\begin{aligned}
\sum_{i=\tau_\eta }^{t-1} \Psi(t-1,i+1) \Theta_1'\left(\bcmp,\lambda\right) 
=&\sum_{i=\tau_\eta }^{t-1} 
\Pi_{j=i+1}^{t-1}
\left(1 -\lambda \eta \right)
\Theta_1'\left(\bcmp,\lambda\right) \\
=&\sum_{i=\tau_\eta}^{t-1} 
\left(1 -\lambda \eta \right)^{t-i-1}
\Theta_1'\left(\bcmp,\lambda\right) \\
=&\left(\sum_{k=0}^{t-\tau_\eta-1} 
\left(1 -\lambda \eta \right)^{k}\right)
\Theta_1'\left(\bcmp,\lambda\right)\\
\leq& \frac{1}{\lambda \eta} \Theta_1'\left(\bcmp,\lambda\right).
\end{aligned}
\end{align}
Therefore, from \eqref{eq.U.general}, we have
\begin{align}
\begin{aligned}
\E\left[\vnorm{U_{t}-U_\lambda}_\HS^2 \right]
 \leq &
 4\frac{B_\infty^2}{\lambda^2}  \left(1 -\lambda \eta \right)^{t-\tau_\eta}
+ \Theta_1'\left(\bcmp,\lambda\right)/ \left(\lambda \eta \right).\\
\end{aligned}
\label{coro.CME.FT.const}
\end{align}
Substituting the above relation into Lemma \ref{lemma.Kt}, we have
\begin{align}
\begin{aligned}
&\E \left[\vnorm{\left[K_t\right] - K }_{\HS(\Hcal,{[H]^\beta})}^2 \right]\\
\leq&
2 B_\infty^{1-\gamma}
\left(4\frac{B_\infty^2}{\lambda^2}  \left(1 -\lambda \eta \right)^{t-\tau_\eta}
+ \Theta_1'\left(\bcmp,\lambda\right)/ \left(\lambda \eta \right)\right)
+ 2 \Bsrc^2 \\
=& 
\frac{8 B_\infty^{3-\gamma}}{\lambda^2}  \left(1 -\lambda \eta \right)^{t-\tau_\eta} 
+2 B_\infty^{1-\gamma} \left(\check{B} \tau_\eta \Xi_\lambda^2 
+ 4 \bcmp \frac{B_\infty}{\lambda} \right) \eta 
+ 2 \Bsrc^2.
\end{aligned}
\end{align}
This completes the proof.
\subsection{Proof of Corollary \ref{thm.FT.diminish}}
\label{thm.FT.diminish.pf}

First, notice that under Assumption \ref{assumption.mixing}, the mixing time satisfies $\tau(\delta) \leq B_\mix \left(\log(1/\delta)+1\right)$ for all $\delta>0$. In addition, by \eqref{eq.mixing0}, we have 
\begin{align}
\lim_{\delta\to0} \delta \ \tau(\delta) 
\leq \lim_{\delta\to0} \delta \ B_\mix \left(\log \frac{1}{\delta} +1 \right) 
= \ B_\mix \lim_{\delta\to0} \delta \left(\log \frac{1}{\delta} +1 \right) 
=0.
\end{align}
Setting $\delta = \eta_t$, we have
\begin{align}
\begin{aligned}
\eta_{t-\tau_t,t-1}
\leq \tau_t \eta_{t-\tau_{t}}
\leq& B_\mix \left(\log(1/\eta_t)+1\right) \frac{\eta}{(t-\tau_{t}+r)^a} \\
\leq& B_\mix \left(\log(1/\eta_t)+1\right) \frac{\eta}{(t-B_\mix \left(\log(1/\eta_t)+1\right) +r)^a}.
\end{aligned}
\end{align}
We next choose $r$ such that $\eta_{t-\tau_t,t-1} \leq \lambda/\check{B}$ for $t \geq \tau_t$. To this end, notice that
\begin{align}
\begin{aligned}
\frac{\eta_{t-\tau_t,t-1}}{\eta_t B_\mix \left(\log(1/\eta_t)+1\right) }
\leq& \frac{\eta}{\eta_t (t-B_\mix \left(\log(1/\eta_t)+1\right) +r)^a} \\
=& \frac{\eta (t+r)^a}{\eta (t-
B_\mix \left(a\log(t+r)+\log(1/\eta) +1 \right) +r)^a}\\
=&\left(\frac{t+r}{t-
B_\mix \left(a\log(t+r)+\log(1/\eta) +1\right)+r}\right)^a.
\end{aligned}
\end{align}
Since $a \in (0,1)$, taking $t + r \to \infty$ on both side gives
\begin{align}
\begin{aligned}
&\lim_{t + r \to \infty} \frac{\eta_{t-\tau_t,t-1}}{\eta_t B_\mix \left(\log(1/\eta_t)+1\right) }\\
=&\lim_{t + r \to \infty} \left(\frac{t+r}{t-
B_\mix \left(a\log(t+r)+\log(1/\eta) +1\right)+r}\right)^a\\
=&1
\end{aligned}
\end{align}
Hence, there exists  $r_1>0$ such that fix an $\acute{\epsilon}>0$, we have for all $t \geq 0$,
\begin{align}
\eta_{t-\tau_t,t-1} \leq \left(1 + \acute{\epsilon} \right)\eta_t B_\mix \left(\log(1/\eta_t)+1\right).
\end{align}
This also suggests that
\begin{align}
\begin{aligned}
\Theta_1\left(t,\bcmp,\lambda\right)
=&\check{B} \eta_t \eta_{t-\tau_t,t-1} \Xi_\lambda^2 
+ \frac{4 \bcmp \eta_t^2 B_\infty}{\lambda} \\
\leq& \check{B} \left(1 + \acute{\epsilon} \right) B_\mix \eta_t^2 \left(\log(\frac{1}{\eta_t}) +1 \right)
\Xi_\lambda^2 
+ \frac{4 \bcmp \eta_t^2 B_\infty}{\lambda} . 
\end{aligned}
\end{align}

In addition, the stepsize sequence satisfies
\begin{align}
\lim_{t + r \to \infty} \eta_{t-\tau_{t}} 
= \lim_{t + r \to \infty} \frac{\eta}{(t-\tau_{t}+r)^a} 
=0,\quad a \in (0,1).
\end{align}
Therefore, by the fact that $\lim_{x \to 0} x \left(1+\log \frac{1}{x}\right) = 0$, we have
\begin{align}
\lim_{t + r \to \infty}\tau_t \eta_{t-\tau_{t}}
\leq B_\mix \lim_{t + r \to \infty} \left(\log\frac{1}{\eta_t}+1\right) \eta_{t-\tau_{t}}
=0.
\end{align}
That is, there exists $r_2>0$ such that 
$\eta_{t-\tau_t,t-1} \leq \lambda/\check{B}$.
By setting $r = \max \left(r_1,r_2\right)$, we can guarantee that the condition that $\eta_{t-\tau_t,t-1}\leq \lambda/\check{B}$ in Theorem \ref{thm.Koopman.FT.general} holds.

We are now ready to prove Corollary \ref{thm.FT.diminish}. By Theorem \ref{thm.Koopman.FT.general}, for all $t \geq \tau_*$, we have 
\begin{align}
\begin{aligned}
&\E\left[\vnorm{U_{t}-U_\lambda}_\HS^2 \right]\\
\leq &
4\frac{B_\infty^2}{\lambda^2} \Psi\left(t-1,\tau_*\right)
+ \sum_{i=\tau_*}^{t-1} \Psi(t-1,i+1) \Theta_1\left(i,\bcmp,\lambda\right)\\
\leq & 4\frac{B_\infty^2}{\lambda^2} \Psi\left(t-1,\tau_*\right)\\
&+ \left(\check{B} \left(1 + \acute{\epsilon} \right) B_\mix \left(\log(\frac{1}{\eta_t}) +1 \right)
\Xi_\lambda^2 
+\frac{4 \bcmp B_\infty}{\lambda} \right) 
\sum_{i=\tau_*}^{t-1} \Psi(t-1,i+1) \eta_i^2\\
\leq & 4\frac{B_\infty^2}{\lambda^2} \Psi\left(t-1,\tau_*\right)\\
&+ \underbrace{2\left(\check{B} B_\mix \left(\log\frac{t+r}{\eta} +1 \right)
\Xi_\lambda^2 
+ \frac{4 \bcmp B_\infty}{\lambda} \right)}_{:=\Theta_4} 
\sum_{i=\tau_*}^{t-1} \Psi(t-1,i+1) \eta_i^2,
\end{aligned}
\end{align}
where in the last line, we set $\acute{\epsilon}=1$ for simplicity and plugging in $\eta_t = \frac{\eta}{\left(t+r\right)^a}$ to obtain $\log(1/\eta_t)  = \log(\frac{(t+r)^a}{\eta}) \leq \log(\frac{t+r}{\eta})$ for $a \in (0,1)$.

To bound
$\Psi\left(t-1,\tau_*\right) 
=\Pi_{i=\tau_*}^{t-1} \left(1 -\frac{\lambda \eta}{(i+r)^a} \right)$,
using $1+x \leq e^x$ for $x \in \Rset$, we have
\begin{align}
\begin{aligned}
\Psi\left(t-1,\tau_*\right) 
\leq& \exp\left(-\lambda \eta \int_{\tau_*}^{t} \frac{1}{(i+r)^a} \dd x\right) \\
\leq&  \exp\left(-\frac{\lambda \eta}{1-a} \left(\left(t+r\right)^{1-a} -\left(\tau_*+r\right)^{1-a}\right) \right).
\end{aligned}
\label{eq.185}
\end{align}

To bound $\sum_{i=\tau_*}^{t-1} \Psi(t-1,i+1) \eta_i^2$, consider the recursions $z_{t+1} = \left(1-\lambda \eta_t\right) z_{t} + \eta_t^2$, for $t \geq \tau_*$ with $z_{\tau} = 0$.
We then have $z_{t} =\sum_{i=\tau_*}^{t-1} \Psi(t-1,i+1) \eta_i^2 $. We next show $z_{t} \leq \frac{2}{\lambda} \eta_t$ by induction.
At $\tau_*$, $z_{\tau_*}=0\leq \frac{2}{\lambda} \eta_t$, thus the base case trivially hold. Suppose the relation hold for $k= \tau_* ,\tau_*+1,\ldots t$, for $t \geq \tau_*$, then at time $k+1$, we have
\begin{align}
\begin{aligned}
\frac{2}{\lambda} \eta_{k+1} -  z_{k+1}
=& \frac{2}{\lambda} \eta_{k+1} - \left(1-\lambda \eta_k\right) z_{k} - \eta_k^2 
\geq \frac{2}{\lambda} \eta_{k+1} - \left(1-\lambda \eta_k\right) \frac{2}{\lambda} \eta_k -\eta_k^2 \\
=& \frac{2}{\lambda} \left( \eta_{k+1} - \eta_k\right) + \eta_k^2. 
\end{aligned}
\end{align}
Hence, we have
\begin{align}
\begin{aligned}
\frac{2}{\lambda} \eta_{k+1} -  z_{k+1}
=& \frac{\eta^2}{(k+r)^{2a}} -  \frac{2}{\lambda}\left(\frac{\eta}{(k+r)^{a}} - \frac{\eta}{(k+1+r)^a}\right)\\
=& \frac{1}{(k+r)^{2a}}\left(\eta^2 -  \frac{2\eta}{\lambda}(k+r)^{a} \left(1 - \left(\frac{k+r}{k+1+r}\right)^a\right)\right) \\
\stackrel{(a)}\geq&\frac{1}{(k+r)^{2a}}\left(\eta^2 -  \frac{2\eta}{\lambda}(k+r)^{a} \left(\frac{a}{k+r}\right)\right) \\
=&\frac{\eta}{(k+r)^{2a}}\left(\eta -  \frac{2a}{\lambda}\frac{1}{(k+r)^{1-a}} \right)\\
\stackrel{(b)}\geq& 0,
\end{aligned}
\end{align}
where (a) follows from the relation $\left(\frac{x}{1+x}\right)^a \geq 1 - \frac{a}{x}$ for $x>0$ and (b) holds since $k +r \geq \tau_* \geq (\frac{2a}{\lambda \eta})^{\frac{1}{1-a}}$ for $a \in (0,1)$.
Therefore, $z_{k} \leq \frac{2}{\lambda} \eta_k$ for $k \geq \tau_*$. Taken together, we infer
\begin{align}
\begin{aligned}
\E\left[\vnorm{U_{t}-U_\lambda}_\HS^2 \right]
\leq &
4\frac{B_\infty^2}{\lambda^2} \exp\left(-\frac{\lambda \eta}{1-a} \left(\left(t+r\right)^{1-a} -\left(\tau_*+r\right)^{1-a}\right) \right)\\
&+ \Theta_4 \frac{2\eta}{\lambda} \frac{1}{(t+r)^a}.
\end{aligned}
\end{align}

Substituting the above result into Lemma \ref{lemma.Kt} completes the proof.\\

\bibliographystyle{siamplain}
\bibliography{ref-rkhs,ref-NLDynm,ref-learning,ref-operator}

\end{document}